\newcommand{\C}{{\mathbb{C}}}
\newcommand{\R}{{\mathbb{R}}}
\newcommand{\bb}{\mathbb}
\begin{document}

\title{A D.C. Programming Approach to the Sparse Generalized Eigenvalue Problem}

\author{\name Bharath K. Sriperumbudur \email bharathsv@ucsd.edu \\
       \addr Department of Electrical and Computer Engineering\\
       University of California, San Diego\\
       La Jolla, CA 92093-0407, USA
       \AND
		 \name David A. Torres \email datorres@cs.ucsd.edu\\
		 \addr Department of Computer Science and Engineering\\
		 University of California, San Diego\\
       La Jolla, CA 92093-0407, USA
		 \AND
		 \name Gert R. G. Lanckriet \email gert@ece.ucsd.edu \\
       \addr Department of Electrical and Computer Engineering\\
       University of California, San Diego\\
       La Jolla, CA 92093-0407, USA
		 }

\editor{}

\maketitle

\begin{abstract}
In this paper, we consider the sparse eigenvalue problem wherein the goal is to obtain a sparse solution to the generalized eigenvalue problem. We achieve this by constraining the cardinality of the solution to the generalized eigenvalue problem and obtain sparse principal component analysis (PCA), sparse canonical correlation analysis (CCA) and sparse Fisher discriminant analysis (FDA) as special cases. Unlike the $\ell_1$-norm approximation to the cardinality constraint, which previous methods have used in the context of sparse PCA, 
we propose a tighter approximation that is related to the negative
log-likelihood of a Student's t-distribution. The problem is then
framed as a d.c. (difference of convex functions) program and is
solved as a sequence of convex programs by invoking the majorization-minimization method. The resulting algorithm is proved to exhibit \emph{global convergence} behavior, i.e., for any random initialization, the sequence (subsequence) of iterates generated by the algorithm converges to a stationary point of the d.c. program. The performance of the algorithm is empirically demonstrated on both sparse PCA (finding few relevant genes that explain as much variance as possible in a high-dimensional gene dataset) and sparse CCA (cross-language document retrieval and vocabulary selection for music retrieval) applications.
%
\end{abstract}
\begin{keywords}
Generalized eigenvalue problem, Principal component analysis, Canonical correlation analysis, Fisher discriminant analysis, D.c. program, Majorization-minimization, Global convergence analysis, Music annotation, Cross-language document retrieval.
\end{keywords}
\section{Introduction}\label{Sec:Intro}
The generalized eigenvalue (GEV) problem for the matrix pair $(\bm{A},\bm{B})$ is the problem of finding a pair $(\lambda,\bm{x})$ such that
\begin{equation}\label{Eq:eig}
\bm{Ax}=\lambda\bm{Bx},
\end{equation}
where $\bm{A},\,\bm{B}\in\C^{n\times n}$, $\C^n\ni\bm{x}\ne\bm{0}$ and $\lambda\in\C$. When $\bm{B}$ is an identity matrix, the problem in (\ref{Eq:eig}) is simply referred to as an eigenvalue problem. Eigenvalue problems are so fundamental that they have applications in almost every area of science and engineering~\citep{Strang-86}.
\par In multivariate statistics, GEV problems are prominent and appear in problems dealing with high-dimensional data analysis, visualization and pattern recognition. In these applications, usually $\bm{x}\in\R^n$, $\bm{A}\in\mathbb{S}^n$ (the set of symmetric matrices of size $n\times n$ defined over $\R$) and $\bm{B}\in\mathbb{S}^n_{++}$ (set of positive definite matrices of size $n\times n$ defined over $\R$). The variational formulation for the GEV problem in (\ref{Eq:eig}) is given by
\begin{equation}\tag{GEV-P}
\lambda_{max}(\bm{A},\bm{B})=\max\{\bm{x}^T\bm{A}\bm{x}\,:\,\bm{x}^T\bm{B}\bm{x}=1\},
\end{equation}
where $\lambda_{max}(\bm{A},\bm{B})$ is the maximum generalized eigenvalue associated with the matrix pair, $(\bm{A},\bm{B})$. The $\bm{x}$ that maximizes (GEV-P) is called the generalized eigenvector associated with $\lambda_{max}(\bm{A},\bm{B})$. Some of the well-known and widely used data analysis techniques that are specific instances of (GEV-P) are: 
\begin{itemize}
\item[(a)] Principal component analysis (PCA)~\citep{Hotelling-33,Jollife-86}, a classic tool for data analysis, data compression and visualization, finds the direction of maximal variance in a given multivariate data set. This technique is used in dimensionality reduction wherein the ambient space in which the data resides is approximated by a low-dimensional subspace without significant loss of information. The variational form of PCA is obtained by choosing $\bm{A}$ to be the covariance matrix (which is a positive semidefinite matrix defined over $\R$) associated with the multivariate data and $\bm{B}$ to be the identity matrix in (GEV-P).
\item[(b)] Canonical correlation analysis (CCA)~\citep{Hotelling-36}, similar to PCA, is also a data analysis and dimensionality reduction method. However, while PCA deals with only one data space $\EuScript{X}$ (from which the multivariate data is obtained), CCA proposes a way for dimensionality reduction by taking into account relations between samples from two spaces $\EuScript{X}$ and $\EuScript{Y}$. The assumption is that the data points from these two spaces contain some joint information that is reflected in correlations between them. Directions along which this correlation is high are thus assumed
to be relevant directions when these relations are to be captured.
The variational formulation for CCA is given by
\begin{equation}\label{Eq:correlation}
\max_{\bm{w}_x\ne\bm{0},\,\bm{w}_y\ne\bm{0}}\,\frac{\bm{w}^T_x\bm{\Sigma}_{xy}\bm{w}_y}{\sqrt{\bm{w}^T_x\bm{\Sigma}_{xx}\bm{w}_x}\sqrt{\bm{w}^T_y\bm{\Sigma}_{yy}\bm{w}_y}},
\end{equation}
where $\bm{w}_x$ and $\bm{w}_y$ are the directions in $\EuScript{X}$ and $\EuScript{Y}$ along which the data is maximally correlated. $\bm{\Sigma}_{xx}$ and $\bm{\Sigma}_{yy}$ represent the covariance matrices for $\EuScript{X}$ and $\EuScript{Y}$ respectively and $\bm{\Sigma}_{xy}=\bm{\Sigma}^T_{yx}$ represents the cross-covariance matrix between $\EuScript{X}$ and $\EuScript{Y}$. (\ref{Eq:correlation}) can be rewritten as
\begin{equation} \max\{\bm{w}^T_x\bm{\Sigma}_{xy}\bm{w}_y\,:\,
\bm{w}^T_x\bm{\Sigma}_{xx}\bm{w}_x=1,\,\bm{w}^T_y\bm{\Sigma}_{yy}\bm{w}_y=1\},
\label{Eq:CCA}
\end{equation} 
which in turn can be written in the form of
(GEV-P) with $\bm{A}=\small{\left(\begin{array}{cc}
\bm{0} & \bm{\Sigma}_{xy} \\
\bm{\Sigma}_{yx}&\bm{0}\end{array}\right)}$,
$\bm{B}=\small{\left(\begin{array}{cc}
\bm{\Sigma}_{xx} & \bm{0} \\
\bm{0}&\bm{\Sigma}_{yy}\end{array}\right)}$ and
$\bm{x}=\small{\left(\begin{array}{c}\bm{w}_x \\
\bm{w}_y \end{array}\right)}$.
\item[(c)] In the binary classification setting, Fisher discriminant analysis (FDA) finds a one-dimensional subspace, $\bm{w}\in\R^n$, the projection of data onto which leads to maximal separation between the classes. Let $\bm{\mu}_i$ and $\bm{\Sigma}_i$ denote the mean vector and covariance matrix associated with class $i$. The variational formulation of FDA is given by
\begin{eqnarray}
\max_{\bm{w}\neq\bm{0}} &&
\frac{(\bm{w}^T(\bm{\mu}_1-\bm{\mu}_2))^2}{\bm{w}^T(\bm{\Sigma}_1+\bm{\Sigma}_2)\bm{w}},\nonumber
\end{eqnarray}
which can be rewritten as
\begin{eqnarray}
\max_{\bm{w}} && \bm{w}^T(\bm{\mu}_1-\bm{\mu}_2)(\bm{\mu}_1-\bm{\mu}_2)^T\bm{w}\nonumber\\
\text{s.t.}&&  \bm{w}^T(\bm{\Sigma}_1+\bm{\Sigma}_2)\bm{w}=1.
\label{Eq:FDA}
\end{eqnarray}
Therefore, the FDA formulation is similar to (GEV-P) with
$\bm{A}=(\bm{\mu}_1-\bm{\mu}_2)(\bm{\mu}_1-\bm{\mu}_2)^T$,
called the \emph{between-cluster variance} and
$\bm{B}=\bm{\Sigma}_1+\bm{\Sigma}_2$, called the \emph{within-cluster
variance}. For multi-class problems, similar formulations lead to multiple-discriminant analysis.
\end{itemize}
\par Despite the simplicity and popularity of these data analysis and modeling
methods, one key drawback is the lack of sparsity in their solution. They suffer from the disadvantage that their solution vector, i.e., $\bm{x}$ is a linear combination of all input  variables, which often makes it difficult to interpret the results. In the following, we point to different applications where PCA/CCA/FDA is used and motivate the need for sparse solutions. 
\par In many PCA applications, the coordinate axes have a physical interpretation; in biology, for example, each axis might correspond to a specific gene. In these cases, the interpretation of the principal components would be facilitated if they contained only few non-zero entries (or, loadings) while explaining most of the variance in the data. Moreover, in certain applications, e.g., financial asset trading strategies based on PCA techniques, the sparsity of the solution has important consequences, since fewer non-zero loadings imply fewer transaction costs. For CCA, consider a document translation application where two copies of a corpus of documents, one written in English and the other in German are given. The goal is to extract multiple low-dimensional representations of the documents, one in each language, each explaining most of the variation in the documents of a single language while maximizing the correlation between the representations to aid translation. Sparse representations, equivalent to representing the documents with a small set of words in each language, would allow to interpret the underlying translation mechanism and model it better.
In music annotation, CCA can be applied to model the correlation between semantic descriptions of songs (e.g., reviews) and their acoustic content. Sparsity in the semantic canonical components would allow to select the most meaningful words to describe musical content. This is expected to improve music annotation and retrieval systems.
In a classification setting like FDA, feature selection aids generalization performance by promoting sparse solutions. To summarize, sparse representations are generally desirable as they aid human understanding, reduce computational and economic costs and promote better generalization.
\par In this paper, we consider the problem of finding sparse solutions while explaining the statistical information in the data, which can be written as
\begin{equation}\tag{SGEV-P}
\max\{\bm{x}^T\bm{Ax}\,:\,\bm{x}^T\bm{Bx}=1,\,\Vert\bm{x}\Vert_0\le k\},
\end{equation}
where $1\le k\le n$ and $\Vert\bm{x}\Vert_0$ denotes the cardinality of $\bm{x}$, i.e., the number of non-zero elements of $\bm{x}$. The above program can be solved either as a continuous optimization problem after relaxing the cardinality constraint or as a discrete optimization problem. In this paper, we follow the former approach. The first step in solving (SGEV-P) as a continuous optimization problem is to approximate the cardinality constraint. One usual heuristic is to approximate $\Vert\bm{x}\Vert_0$ by $\Vert\bm{x}\Vert_1$ (see Section~\ref{Sec:Notation} for the details on notation). 
Building on the earlier version of our work~\citep{Sriperumbudur-07b}, in this paper, we approximate the cardinality constraint in (SGEV-P) as the negative log-likelihood of a Student's t-distribution, which has been used earlier in many different contexts \citep{Weston-02,Fazel-03,Candes-07}. We then formulate this approximate problem as a d.c. (difference of convex functions) program and solve it using the majorization-minimization (MM) method \citep{Hunter-04} resulting in a sequence of quadratically constrained quadratic programs (QCQPs). As a special case, when $\bm{A}$ is positive definite and $\bm{B}$ is an identity matrix (as is the case for PCA), a very simple iterative update rule (we call it as DC-PCA) can be obtained in a closed form, which has a per iteration complexity of $O(n^2)$. Since the proposed algorithm is an iterative procedure, using results from the global convergence theory of iterative algorithms~\citep{Zangwill-69}, we show that it is \emph{globally convergent}, i.e., for any random initialization, the sequence (subsequence) of iterates generated by the algorithm converges to a stationary point of the d.c. program (see Section~\ref{Sec:convergence} for a detailed definition). 
We would like to mention that the algorithm presented in this paper is more general than the one in \citet{Sriperumbudur-07b} as it holds for any $\bm{A}\in\mathbb{S}^n$ unlike in \citet{Sriperumbudur-07b}, where $\bm{A}$ is assumed to be positive semidefinite. 
\par We illustrate the performance of the proposed algorithm on sparse PCA and sparse CCA problems. On the sparse PCA front, we compare our results to SPCA \citep{Zou-06}, DSPCA \citep{Aspremont-07}, GSPCA \citep{Moghaddam-06a} and GPower$_{\ell_0}$ \citep{Journee-08} in terms of sparsity vs. explained variance on the ``pit props" benchmark dataset and a random test dataset. Since DSPCA and GSPCA are not scalable for large-scale problems, we compare the performance of DC-PCA to SPCA and GPower$_{\ell_0}$ on three high-dimensional gene datasets where the goal is to find relevant genes (as few as possible) while explaining the maximum possible variance. The results show that DC-PCA performs similar to most of these algorithms and better than SPCA, but at better computational speeds. The proposed sparse CCA algorithm is used in two sparse CCA applications, one dealing with cross-language document retrieval and the other with vocabulary selection in music annotation. The cross-language document retrieval application involves a collection of documents with each document in different languages, say English and French. The goal is, given a query string in one language, retrieve the most relevant document(s) in the target language. We experimentally show that the proposed sparse CCA algorithm performs similar to the non-sparse version, however using only 10\% of non-zero loadings 
in the canonical components. In the vocabulary selection application, we show that sparse CCA improves the performance of a statistical musical query system by selecting only those words (i.e., pruning the vocabulary) that are correlated to the underlying audio features.
\par The paper is organized as follows. We establish the mathematical notation in Section~\ref{Sec:Notation}. 
In Section~\ref{Sec:Sparse-GEV}, we present the sparse generalized eigenvalue problem and discuss a tractable convex semidefinite programming (SDP) approximation. Since the SDP approximation is computationally intensive for large $n$, in Section~\ref{Sec:DC}, we present our proposed approximation to the sparse GEV problem resulting in a d.c. program. This is then solved as a sequence of QCQPs in Section~\ref{Sec:algo} using the majorization-minimization method that is briefly discussed in Section~\ref{Sec:MM}. The convergence analysis of the sparse GEV algorithm is presented in Section~\ref{Sec:convergence}. Finally, in Sections~\ref{sec:pca} and \ref{sec:cca}, we derive sparse PCA and sparse CCA as special instances of the proposed algorithm and present experimental results to demonstrate the performance, while in Section~\ref{Sec:sparseFDA}, we discuss the applicability of the proposed algorithm to the sparse FDA problem.
\section{Notation}\label{Sec:Notation} 
$\mathbb{S}^n$ (respectively $\mathbb{S}^n_+$, $\mathbb{S}^n_{++}$) denotes the set of symmetric (respectively positive semidefinite, positive definite) $n\times n$ matrices defined over $\R$. For $\bm{X}\in\mathbb{S}^n$, $\bm{X}\succ 0$ (respectively $\bm{X}\succeq 0$) means that
$\bm{X}$ is positive definite (respectively semidefinite). We denote a vector of ones and zeros by $\bm{1}$ and $\bm{0}$ respectively. Depending on the context, $\bm{0}$ will also be treated as a zero matrix. $|\bm{X}|$ is the matrix whose elements are the absolute values of the elements of $\bm{X}$. $[\bm{X}]_{ij}$ denotes the $(i,j)^{th}$ element of $\bm{X}$. For $\bm{x}=(x_1,x_2,\ldots,x_n)^T\in\mathbb{R}^n$, $\bm{x}\succeq \bm{0}$ denotes an element-wise inequality. $\Vert\bm{x}\Vert_0$ denotes the number of non-zero elements of the vector $\bm{x}$, $\Vert\bm{x}\Vert_p:=(\sum^n_{i=1}|x_i|^p)^{1/p},\,1\le p<\infty$ and $\Vert\bm{x}\Vert_\infty:=\max_{1\le i\le n}|x_i|$. $\bm{I}_n$ denotes an $n\times n$ identity matrix. $\bm{D}(\bm{x})$ represents a diagonal matrix
formed with $\bm{x}$ as its principal diagonal. 

\section{Sparse Generalized Eigenvalue Problem}\label{Sec:Sparse-GEV}
As mentioned in Section~\ref{Sec:Intro}, the sparse generalized eigenvalue problem in (SGEV-P) can be solved either as a continuous optimization problem after relaxing the cardinality constraint or as a discrete optimization problem. In this section, we consider the former approach.  
\par Let us consider the variational formulation for the sparse generalized eigenvalue problem in (SGEV-P), where $\bm{A}\in\bb{S}^n$ and $\bm{B}\in\bb{S}^n_{++}$. Suppose $\bm{A}$ is not negative definite. Then (SGEV-P) is the maximization of a non-concave objective over the non-convex constraint set $\Phi:=\{\bm{x}:\bm{x}^T\bm{Bx}=1\}\cap\{\bm{x}:\Vert\bm{x}\Vert_0\le k\}$. Although $\Phi$ can be relaxed to a convex set $\widetilde{\Phi}:=\{\bm{x}:\bm{x}^T\bm{Bx}\le 1\}\cap\{\bm{x}:\Vert\bm{x}\Vert_1\le k\}$, it does not simplify the problem as the maximization of a non-concave objective over a convex set is still computationally hard and intractable [p. 342]\citep{Rockafeller-70}.\footnote{Note that (GEV-P) also involves the maximization of a non-concave objective over a non-convex set. However, it is well-known that polynomial-time algorithms exist to solve (GEV-P), which is due to its \emph{special} structure of a quadratic objective with a homogeneous quadratic constraint \citep[p. 229]{Boyd-06}.} So, the intractability of (SGEV-P) is due to two reasons: (a) maximization of the non-concave objective function and (b) the constraint set being non-convex. Since (SGEV-P) is intractable, instead of solving it directly, one can solve approximations to (SGEV-P) that are tractable. Different tractable approximations to (SGEV-P) are possible, of which we briefly discuss the convex semidefinite programming (SDP) approximation and then motivate our proposed non-convex approximation.
\par First, let us consider the following approximate program that is obtained by relaxing the non-convex constraint set $\Phi$ to the convex set $\widetilde{\Phi}$, as described before:
\begin{equation}\label{Eq:approx}
\max\{\bm{x}^T\bm{Ax}\,:\,\bm{x}\in\widetilde{\Phi}\}.\end{equation}
As mentioned before, this program is still intractable due to the maximization of the non-concave objective. Had the objective function been linear, (\ref{Eq:approx}) would have been a canonical convex program, which could then be solved efficiently. One approach to linearize the objective function is by using the \emph{lifting} technique \citep[Section 4.4]{Lemarechal-99}, which was considered by \citet{Aspremont-04} when $\bm{A}\succeq 0$ and $\bm{B}=\bm{I}_n$. The lifted version of (\ref{Eq:approx}) is given by (see Appendix A for details):
\begin{eqnarray}\label{Eq:Sparse-lifting-GEV}
\max_{\bm{X},\bm{x}}&& \text{tr}(\bm{XA})\nonumber\\
\text{s.t.}&& \text{tr}(\bm{XB})\le 1,\,\Vert\bm{x}\Vert_1\le k\nonumber \\
&& \bm{X}=\bm{xx}^T.
\end{eqnarray}
Note that in the above program, the objective function is linear in $\bm{X}$, and the constraints are convex except for the non-convex constraint,
$\bm{X}=\bm{xx}^T$ ($\bm{X}=\bm{xx}^T\,\Leftrightarrow\,\bm{X}\succeq 0,\,\text{rank}(\bm{X})=1$, where $\text{rank}(\bm{X})=1$ is a non-convex constraint and therefore $\bm{X}=\bm{xx}^T$ is a non-convex constraint). Relaxing $\bm{X}=\bm{xx}^T$ to $\bm{X}-\bm{xx}^T\succeq 0$ results in the following program
\begin{eqnarray}\label{Eq:Sparse-sdp-GEV}
\max_{\bm{X},\bm{x}}&& \text{tr}(\bm{XA})\nonumber\\
\text{s.t.}&& \text{tr}(\bm{XB})\le 1,\,\Vert\bm{x}\Vert_1\le k\nonumber \\
&& \bm{X}-\bm{xx}^T\succeq 0,
\end{eqnarray}
which is a semidefinite program (SDP) \citep{Vandenberghe-96}. 
The $\ell_1$-norm constraint in (\ref{Eq:Sparse-sdp-GEV}) can be relaxed as
$\Vert\bm{x}\Vert^2_1\le k^2\,\Rightarrow\,\bm{1}^T|\bm{X}|\bm{1}\le k^2$ so that the problem reduces to solving only for $\bm{X}$. Therefore, we have obtained a tractable convex approximation to (SGEV-P).
\par Although (\ref{Eq:Sparse-sdp-GEV}) is a \emph{convex} approximation to (SGEV-P), it is computationally very intensive as general purpose interior-point methods for SDP scale as $O(n^6\log\epsilon^{-1})$, where $\epsilon$ is the required accuracy on the optimal value. For large-scale problems,
first-order methods~\citep{Nesterov-05,Aspremont-07} can be used which scale as $O(\epsilon^{-1}n^4\sqrt{\log n})$. Therefore, the SDP-based convex relaxation to (SGEV-P) is prohibitively expensive in computation for large $n$.
\par In the following section, we propose a different approximation to (SGEV-P), wherein instead of the $\ell_1$-approximation to the cardinality constraint, we consider a non-convex approximation to it. We present a d.c. (difference of convex functions) formulation for this approximation to (SGEV-P), which is then solved as a sequence of QCQPs using the majorization-minimization algorithm.

\subsection{Non-convex approximation to $\Vert\bm{x}\Vert_0$ and d.c. formulation}\label{Sec:DC}
The proposed approximation to (SGEV-P) is motivated by the following observations. 
\begin{itemize}
\item Because of the non-concave maximization, a convex relaxation of the cardinality constraint does not simplify (SGEV-P). So, a better approximation to the cardinality constraint than the tightest convex relaxation, i.e., $\Vert\bm{x}\Vert_1$, can be explored to improve sparsity.
\item Approximations that yield good scalability should be explored (as opposed to, e.g., the SDP approximation which scales badly in $n$).
\end{itemize}
To this end, we consider the regularized
 (penalized) version of (SGEV-P) given by
\begin{equation}\tag{SGEV-R}
\max\{\bm{x}^T\bm{A}\bm{x}-\tilde{\rho}\,\Vert\bm{x}\Vert_0\,:\,\bm{x}^T\bm{B}\bm{x}\le 1\},
\end{equation}
where $\tilde{\rho}>0$ is the regularization (penalization) parameter. Note that the quadratic equality constraint, $\bm{x}^T\bm{Bx}=1$ is relaxed to the inequality constraint, $\bm{x}^T\bm{Bx}\le 1$. Since
\begin{equation}\label{Eq:lim-cardinality}
\Vert\bm{x}\Vert_0=\sum^n_{i=1}\mathds{1}_{\{|x_i|\ne 0\}}=\lim_{\varepsilon\rightarrow 0}\sum^n_{i=1}\frac{\log(1+|x_i|/\varepsilon)}{\log(1+1/\varepsilon)},
\end{equation}
(SGEV-R) is equivalent\footnote{Two programs are equivalent if their optimizers are the same.} to
\begin{eqnarray} 
\max_{\bm{x}}&&\bm{x}^T\bm{A}\bm{x}-\tilde{\rho}\lim_{\varepsilon\rightarrow 0}\sum^n_{i=1}\frac{\log(1+|x_i|/\varepsilon)}{\log(1+1/\varepsilon)}\nonumber\\
\text{s.t.}&&\bm{x}^T\bm{B}\bm{x}\le 1.
\label{Eq:GEV-sparse-mod-1}
\end{eqnarray}
The above program is approximated by the following \emph{approximate sparse GEV program} by neglecting the limit in (\ref{Eq:GEV-sparse-mod-1}) and choosing $\varepsilon>0$,
\begin{eqnarray} 
\max_{\bm{x}}&&\bm{x}^T\bm{A}\bm{x}-\tilde{\rho}\sum^n_{i=1}\frac{\log(1+|x_i|/\varepsilon)}{\log(1+1/\varepsilon)}\nonumber\\
\text{s.t.}&&\bm{x}^T\bm{B}\bm{x}\le 1,
\label{Eq:GEV-sparse-approx-1}
\end{eqnarray}
which is equivalent to
\begin{equation}\tag{SGEV-A} 
\max\left\{\bm{x}^T\bm{A}\bm{x}-\rho_\varepsilon\sum^n_{i=1}\log(|x_i|+\varepsilon)\,:\,
\bm{x}^T\bm{B}\bm{x}\le 1\right\},
\end{equation}
where $\rho_\varepsilon:=\tilde{\rho}/\log(1+\varepsilon^{-1})$. Note that the approximate program in (SGEV-A) is a continuous optimization problem unlike the one in (SGEV-R), which has a combinatorial term. Before we present a d.c. program formulation to (SGEV-A), we briefly discuss the approximation to $\Vert\bm{x}\Vert_0$ that we considered in this paper.\vspace{2mm}\\
\textbf{Approximation to $\Vert\bm{x}\Vert_0$:} The approximation (to $\Vert\bm{x}\Vert_0$) that we considered in this paper, i.e., 
\begin{equation}
\Vert\bm{x}\Vert_\varepsilon:=\sum^n_{i=1}\frac{\log(1+|x_i|\varepsilon^{-1})}{\log(1+\varepsilon^{-1})},\nonumber \end{equation}
has been used in many different contexts: feature selection using support vector machines \citep{Weston-02}, sparse signal recovery \citep{Candes-07}, matrix rank minimization \citep{Fazel-03}, etc. This approximation is interesting because of its connection to sparse factorial priors that are studied in Bayesian inference, and can be interpreted as defining a Student's t-distribution prior over $\bm{x}$, an improper prior given by $\prod^n_{i=1}\frac{1}{|x_i|+\varepsilon}$. \citet{Tipping-01} showed that this choice of prior leads to a sparse representation and demonstrated its validity
for sparse kernel expansions in the Bayesian framework. Other approximations to $\Vert\bm{x}\Vert_0$ are possible, e.g., \citet{Bradley-98} used $\sum^n_{i=1}(1-e^{-\alpha|x_i|})$ with $\alpha>0$ ($\Vert\bm{x}\Vert_0=\lim_{\alpha\rightarrow\infty}\sum^n_{i=1}(1-e^{-\alpha|x_i|})$) as an approximation to $\Vert\bm{x}\Vert_0$ in the context of feature selection using support vector machines. 
 \par We now show that the approximation (to $\Vert\bm{x}\Vert_0$) considered in this paper, i.e., $\Vert\bm{x}\Vert_\varepsilon$, is tighter than the $\ell_1$-norm approximation, for any $\varepsilon>0$. To this end, let us define \begin{equation}
a_\varepsilon:=\frac{\log(1+a\varepsilon^{-1})}{\log(1+\varepsilon^{-1})},\nonumber
\end{equation}
where $a\ge 0$, so that $\Vert\bm{x}\Vert_\varepsilon=\sum^n_{i=1}|x_i|_\varepsilon$. It is easy to check that $\Vert\bm{x}\Vert_0=\lim_{\varepsilon\rightarrow 0}\Vert\bm{x}\Vert_\varepsilon$ and $\Vert\bm{x}\Vert_1=\lim_{\varepsilon\rightarrow \infty}\Vert\bm{x}\Vert_\varepsilon$. In addition, we have $a>a_{\varepsilon_1}>a_{\varepsilon_2}>\ldots>1$ for $a>1$ and $1>\ldots>a_{\varepsilon_2}>a_{\varepsilon_1}>a$ for $0<a<1$, if $\varepsilon_1>\varepsilon_2>\ldots$, i.e., for any $a>0$ and any $0<\varepsilon<\infty$, the value $a_{\varepsilon}$ is closer to $1$ than $a$ is to $1$. This means $a_\varepsilon$ for any $0<\varepsilon<\infty$ is a better approximation to $\mathds{1}_{\{a\ne 0\}}$ than $a$ is to $\mathds{1}_{\{a\ne 0\}}$. Therefore, $\Vert\bm{x}\Vert_\varepsilon$ for any $0<\varepsilon<\infty$ is a better approximation to $\Vert\bm{x}\Vert_0$ than $\Vert\bm{x}\Vert_1$ is to $\Vert\bm{x}\Vert_0$.
%
%
\par Let us define 
\begin{eqnarray}
Q(\bm{x})&:=&\bm{x}^T\bm{Ax}-\tilde{\rho}\Vert\bm{x}\Vert_0,\nonumber\\ Q_\varepsilon(\bm{x})&:=&\bm{x}^T\bm{Ax}-\rho_\varepsilon\sum^n_{i=1}\log(1+|x_i|\varepsilon^{-1}) \nonumber
\end{eqnarray}
and $\Omega:=\{\bm{x}:\bm{x}^T\bm{Bx}\le 1\}$. Note that $Q(\bm{x})=\lim_{\varepsilon\rightarrow 0}Q_\varepsilon(\bm{x})$ for any fixed $\bm{x}$, i.e., $Q_\varepsilon$ converges pointwise to $Q$. So, the sparse GEV problem is obtained as $\max\{\lim_{\varepsilon\rightarrow 0}Q_\varepsilon(\bm{x}):\bm{x}\in\Omega\}$, while the approximate problem is given by $\max\{Q_\varepsilon(\bm{x}):\bm{x}\in\Omega\}$. Suppose that $\widehat{\bm{x}}$ denotes a maximizer of $Q(\bm{x})$ over $\Omega$ and $\bm{x}_\varepsilon$ denotes a maximizer of $Q_\varepsilon(\bm{x})$ over $\Omega$. Now, one would like to know how good is the approximate solution, $\bm{x}_\varepsilon$ compared to $\widehat{\bm{x}}$. In general, it is not straightforward to either bound $\Vert \bm{x}_\varepsilon-\widehat{\bm{x}}\Vert$ in terms of $\varepsilon$ or show that $\Vert \bm{x}_\varepsilon-\widehat{\bm{x}}\Vert\rightarrow 0$ as $\varepsilon\rightarrow 0$ because $Q(\bm{x})$ may be quite flat near its maximum over $\Omega$. 
At least, one would like to know whether $Q_\varepsilon(\bm{x}_\varepsilon)\rightarrow Q(\widehat{\bm{x}})$ as $\varepsilon\rightarrow 0$, i.e.,
\begin{equation}\label{Eq:question}
\lim_{\varepsilon\rightarrow 0}\max_{\bm{x}\in\Omega}Q_\varepsilon(\bm{x})\stackrel{?}{=}\max_{\bm{x}\in\Omega}Q(\bm{x})=\max_{\bm{x}\in\Omega}\lim_{\varepsilon\rightarrow 0}Q_\varepsilon(\bm{x}).
\end{equation}
In other words, we would like to know whether the limit process and the maximization over $\Omega$ can be interchanged. It can be shown that if $Q_\varepsilon$ converges uniformly over $\Omega$ to $Q$, then the equality in (\ref{Eq:question}) holds. However, it is easy to see that $Q_\varepsilon$ does not converge uniformly to $Q$ over $\Omega$, so nothing can be said about (\ref{Eq:question}). 
\vspace{2mm}\\ 
\textbf{D.c. formulation:} Let us return to the formulation in (SGEV-A). To solve this continuous, non-convex optimization problem and derive an algorithm for the sparse GEV problem, we explore its formulation as a d.c. program. D.c. programs are well studied and many algorithms exist to solve them \citep{Horst-99}. They are defined as follows.
\begin{definition}[D.c. program]\label{def:dc}
Let $\Omega$ be a convex set in $\mathbb{R}^n$. A real valued function
$f:\Omega\rightarrow\mathbb{R}$ is called a d.c. function on
$\Omega$, if there exist two \emph{convex} functions
$g,h:\Omega\rightarrow\mathbb{R}$ such that $f$ can be
expressed in the form $f(\bm{x})=g(\bm{x})-h(\bm{x}),\,
\bm{x}\in\Omega$. Optimization problems of the form
$\min\{f_0(\bm{x})\,:\,\bm{x}\in\Omega,f_i(\bm{x})\le
0,\,i=1,\ldots,m\}$, where $f_i=g_i-h_i,\,i=0,\ldots,m$, are
d.c.~functions are called \emph{d.c.~programs}.
\end{definition}
To formulate (SGEV-A) as a d.c. program, let us choose $\tau\in\mathbb{R}$ such that $\bm{A}+\tau\bm{I}_n\succeq 0$. If $\bm{A}\succeq 0$, such $\tau$ exists trivially (choose $\tau\ge 0$). If $\bm{A}$ is indefinite, choosing $\tau\ge-\lambda_{min}(\bm{A})$ ensures that $\bm{A}+\tau\bm{I}_n\succeq 0$. 
Therefore, choosing $\tau\ge\max(0,-\lambda_{min}(\bm{A}))$ ensures that $\bm{A}+\tau\bm{I}_n\succeq 0$ for any $\bm{A}\in\mathbb{S}^n$. (SGEV-A) is equivalently written as
\begin{eqnarray}\label{Eq:tau}
\min_{\bm{x}}&&\left[\tau\Vert \bm{x}\Vert^2_2-\bm{x}^T(\bm{A}+\tau\bm{I}_n)\bm{x}\right]+\rho_\varepsilon\sum^n_{i=1}\log(|x_i|+\varepsilon)\nonumber\\
\text{s.t.}&&\bm{x}^T\bm{Bx}\le 1.
\end{eqnarray}
Introducing the auxiliary variable, $\bm{y}$, yields the equivalent program 
\begin{eqnarray}\label{Eq:tau-1}
\min_{\bm{x},\bm{y}}&&\tau\Vert \bm{x}\Vert^2_2-\left[\bm{x}^T(\bm{A}+\tau\bm{I}_n)\bm{x}-\rho_\varepsilon\sum^n_{i=1}\log(y_i+\varepsilon)\right]\nonumber\\
\text{s.t.}&&\bm{x}^T\bm{Bx}\le 1,\,-\bm{y}\preceq\bm{x}\preceq\bm{y},
\end{eqnarray}
which is a d.c. program. Indeed, the term $\tau\Vert\bm{x}\Vert^2_2$ is convex in $\bm{x}$ as $\tau\ge 0$ and, by construction, $\bm{x}^T(\bm{A}+\tau\bm{I}_n)\bm{x}-\rho_\varepsilon\sum^n_{i=1}\log(y_i+\varepsilon)$ is jointly convex in $\bm{x}$ and $\bm{y}$. So, the above program is a minimization of the difference of two convex functions over a convex set. Global optimization methods like branch and bound, and cutting planes can be used to solve d.c. programs~\citep{Horst-99}, but are not scalable to large-scale problems. Since (\ref{Eq:tau-1}) is a constrained nonlinear optimization problem, it can be solved by, e.g., sequential quadratic programming, augmented Lagrangian methods or reduced-gradient methods~\citep{Bonnans-06}. In the following sections, we present an iterative algorithm to solve (\ref{Eq:tau-1}) using the majorization-minimization method.
\subsection{Majorization-minimization method}\label{Sec:MM}
The majorization-minimization (MM) method can be thought of as a generalization of the well-known expectation-maximization (EM) algorithm \citep{Dempster-77}. The general principle behind MM algorithms was first enunciated by the numerical analysts \citet{Ortega-70} in the context of line search methods. The MM principle appears in many places in statistical computation, including multidimensional scaling \citep{deLeeuw-77}, robust regression \citep{Huber-81}, correspondence analysis \citep{Heiser-87}, variable selection \citep{Hunter-05}, sparse signal recovery \citep{Candes-07}, etc. We refer the interested reader to a tutorial on MM algorithms \citep{Hunter-04} and the references therein.
\par The general idea of MM algorithms is as follows. Suppose we want to minimize $f$ over $\Omega\subset\mathbb{R}^n$. The idea is to construct a \emph{majorization function} $g$ over $\Omega\times\Omega$ such that
\begin{equation}\label{Eq:maj-step}
f(x)\le g(x,y),\,\forall\,x,y\in\Omega\qquad\text{and}\qquad f(x)=g(x,x),\,\forall\,x\in\Omega.
\end{equation}
Thus, $g$ as a function of $x$ is an upper bound on $f$ and coincides with $f$ at $y$. The majorization-minimization algorithm corresponding to this majorization function $g$ updates $x$ at iteration $l$ by
\begin{equation}\label{Eq:min-step}
x^{(l+1)}\in\arg\min_{x\in\Omega}g(x,x^{(l)}),
\end{equation}
unless we already have
\begin{equation}
x^{(l)}\in\arg\min_{x\in\Omega}g(x,x^{(l)}),\nonumber
\end{equation}
in which case the algorithm stops. The majorization function, $g$ is usually constructed by using Jensen's inequality for convex functions, the first-order Taylor approximation or the quadratic upper bound principle \citep{Bohning-88}. However, any other method can be used to construct $g$ as long as it satisfies (\ref{Eq:maj-step}). It is easy to show that the above iterative scheme decreases the value of $f$ monotonically in each iteration, i.e.,
\begin{equation}\label{Eq:sandwich}
f(x^{(l+1)})\le g(x^{(l+1)},x^{(l)})\le g(x^{(l)},x^{(l)})=f(x^{(l)}),
\end{equation}
where the first inequality and the last equality follow from (\ref{Eq:maj-step}) while the sandwiched inequality follows from (\ref{Eq:min-step}).
\par Note that MM algorithms can be applied equally well to the maximization of $f$ by simply reversing the inequality sign in (\ref{Eq:maj-step}) and changing the ``min" to ``max" in (\ref{Eq:min-step}). In this case, the word MM refers to minorization-maximization, where the function $g$ is called the \emph{minorization function.} To put things in perspective, the EM algorithm can be obtained by constructing the minorization function $g$ using Jensen's inequality for concave functions. The construction of such $g$ is referred to as the E-step, while (\ref{Eq:min-step}) with the ``min" replaced by ``max" is referred to as the M-step. The algorithm in (\ref{Eq:maj-step}) and (\ref{Eq:min-step}) is used in machine learning, e.g., for non-negative matrix factorization \citep{LeeSeung-01}, under the name \emph{auxiliary function method}. \citet{Lange-00} studied this algorithm under the name \emph{optimization transfer} while \citet{Meng-00} referred to it as the SM algorithm, where ``S" stands for the surrogate step (same as the majorization/minorization step) and ``M" stands for the minimization/maximization step depending on the problem at hand. $g$ is called the surrogate function. 
In the following, we consider an example that is relevant to our problem where we construct a majorization function, $g$, which will later be used in deriving the sparse GEV algorithm.
\begin{example}[Linear Majorization]\label{Exm:dc}
Let us consider the optimization problem, $\min_{\bm{x}\in\Omega}f(\bm{x})$ where $f=u-v$, with $u$ and $v$ both convex, and $v$ continuously differentiable. Since $v$ is convex, we have $v(\bm{x})\ge v(\bm{y})+(\bm{x}-\bm{y})^T\nabla v(\bm{y}),\,\forall\,\bm{x},\bm{y}\in\Omega$. Therefore, 
\begin{equation}\label{Eq:auxiliary-dc}
f(\bm{x})\le u(\bm{x})-v(\bm{y})-(\bm{x}-\bm{y})^T\nabla v(\bm{y})=:g(\bm{x},\bm{y}).
\end{equation}
It is easy to verify that $g$ is a majorization function of $f$. Therefore, we have
\begin{equation}\label{Eq:example}
\bm{x}^{(l+1)}\in\arg\min_{\bm{x}\in\Omega}\,g(\bm{x},\bm{x}^{(l)})=\arg\min_{\bm{x}\in\Omega}\,u(\bm{x})-\bm{x}^T\nabla v(\bm{x}^{(l)}).
\end{equation} 
If $\Omega$ is a convex set, then the above procedure solves a sequence of convex programs. Note that the same idea is used in the concave-convex procedure (CCCP) \citep{Yuille-03}. \par Suppose $u$ and $v$ are strictly convex, then a strict descent can be achieved in (\ref{Eq:sandwich}) unless $\bm{x}^{(l+1)}=\bm{x}^{(l)}$, i.e., if $\bm{x}^{(l+1)}\ne \bm{x}^{(l)}$, then
\begin{equation}\label{Eq:strictdescent}
f(\bm{x}^{(l+1)})< g(\bm{x}^{(l+1)},\bm{x}^{(l)})< g(\bm{x}^{(l)},\bm{x}^{(l)})=f(\bm{x}^{(l)}).
\end{equation}
The first strict inequality follows from (\ref{Eq:auxiliary-dc}), a strict inequality for strictly convex $v$. Since $u$ is strictly convex, $g$ is strictly convex and therefore $g(\bm{x}^{(l+1)},\bm{x}^{(l)})< g(\bm{x}^{(l)},\bm{x}^{(l)})$ unless $\bm{x}^{(l+1)}=\bm{x}^{(l)}$. This strictly monotonic descent property will be helpful to analyze the convergence of the sparse GEV algorithm that is presented in the following section.
\end{example}

\subsection{Sparse GEV algorithm} \label{Sec:algo}
Let us return to the approximate sparse GEV program in (\ref{Eq:tau}). Let
\begin{equation}\label{Eq:f}
f(\bm{x})=\tau\Vert\bm{x}\Vert^2_2+\rho_\varepsilon\sum^n_{i=1}\log(\varepsilon+|x_i|)-\bm{x}^T(\bm{A}+\tau\bm{I}_n)\bm{x},
\end{equation}
where $\tau\ge\max(0,-\lambda_{min}(\bm{A}))$ so that (\ref{Eq:tau}) can be written as $\min_{\bm{x}\in\Omega}f(\bm{x})$ and $\Omega=\{\bm{x}:\bm{x}^T\bm{Bx}\le 1\}$. The main idea in deriving the sparse GEV algorithm is in obtaining a majorization function, $g$ that satisfies (\ref{Eq:maj-step}) and then using it in (\ref{Eq:min-step}). The following result provides such a function $g$ for $f$ in (\ref{Eq:f}).
\begin{proposition}
The following function 
\begin{equation}\label{Eq:majorize-f}
g(\bm{x},\bm{y})=\tau\Vert\bm{x}\Vert^2_2-2\bm{x}^T(\bm{A}+\tau\bm{I}_n)\bm{y}+\bm{y}^T(\bm{A}+\tau\bm{I}_n)\bm{y}+\rho_\varepsilon\sum^n_{i=1}\log(\varepsilon+|y_i|)+\rho_\varepsilon\sum^n_{i=1}\frac{|x_i|-|y_i|}{|y_i|+\varepsilon},
\end{equation}
majorizes $f$ in (\ref{Eq:f}).
\end{proposition}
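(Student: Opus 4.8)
The plan is to verify directly the two defining properties of a majorization function in \eq{Eq:maj-step}: that $f(\bm{x})\le g(\bm{x},\bm{y})$ for all $\bm{x},\bm{y}\in\Omega$, and that $f(\bm{x})=g(\bm{x},\bm{x})$ for all $\bm{x}$. I would do this by splitting $f$ from (\ref{Eq:f}) into three pieces --- the convex quadratic $\tau\Vert\bm{x}\Vert_2^2$, the concave quadratic $-\bm{x}^T(\bm{A}+\tau\bm{I}_n)\bm{x}$, and the sum of logarithms $\rho_\varepsilon\sum_{i=1}^n\log(\varepsilon+|x_i|)$ --- and majorizing each separately, in the spirit of the linear majorization of Example~\ref{Exm:dc}. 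The first piece is left untouched; the other two, being concave (the second as a function of $\bm{x}$, the third as a function of the magnitudes $|x_i|$), will be replaced by their supporting/tangent affine functions at $\bm{y}$, which forces equality precisely at $\bm{x}=\bm{y}$.

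For the quadratic term, write $\bm{M}:=\bm{A}+\tau\bm{I}_n$, which is positive semidefinite by the choice $\tau\ge\max(0,-\lambda_{min}(\bm{A}))$. Then $\bm{x}\mapsto\bm{x}^T\bm{M}\bm{x}$ is convex with gradient $2\bm{M}\bm{y}$ at $\bm{y}$, so the first-order condition for convexity gives $\bm{x}^T\bm{M}\bm{x}\ge\bm{y}^T\bm{M}\bm{y}+2\bm{y}^T\bm{M}(\bm{x}-\bm{y})$, i.e.
\begin{equation}
-\bm{x}^T\bm{M}\bm{x}\;\le\;-2\bm{x}^T\bm{M}\bm{y}+\bm{y}^T\bm{M}\bm{y},
\nonumber
\end{equation}
with equality when $\bm{x}=\bm{y}$; this accounts for the terms $-2\bm{x}^T(\bm{A}+\tau\bm{I}_n)\bm{y}+\bm{y}^T(\bm{A}+\tau\bm{I}_n)\bm{y}$ appearing in $g$.

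For the logarithmic term, I would use the scalar function $\varphi(t):=\log(\varepsilon+t)$, which is concave on $[0,\infty)$ since $\varphi''(t)=-(\varepsilon+t)^{-2}<0$; hence $\varphi(t)\le\varphi(s)+\varphi'(s)(t-s)$ for all $t,s\ge 0$, where $\varphi'(s)=(\varepsilon+s)^{-1}$ is finite even at $s=0$ because $\varepsilon>0$. Applying this with $t=|x_i|$ and $s=|y_i|$, summing over $i$, and multiplying by $\rho_\varepsilon>0$ gives
\begin{equation}
\rho_\varepsilon\sum_{i=1}^n\log(\varepsilon+|x_i|)\;\le\;\rho_\varepsilon\sum_{i=1}^n\log(\varepsilon+|y_i|)+\rho_\varepsilon\sum_{i=1}^n\frac{|x_i|-|y_i|}{|y_i|+\varepsilon},
\nonumber
\end{equation}
which is the remaining part of $g$, with equality whenever $|x_i|=|y_i|$ for all $i$, in particular at $\bm{x}=\bm{y}$.

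Adding $\tau\Vert\bm{x}\Vert_2^2$ to both sides and combining the two displayed inequalities yields $f(\bm{x})\le g(\bm{x},\bm{y})$, and substituting $\bm{x}=\bm{y}$ turns every inequality into an equality, so $f(\bm{y})=g(\bm{y},\bm{y})$; both requirements of \eq{Eq:maj-step} then hold. I do not anticipate a genuine obstacle here; the only point needing care is the non-differentiability of $\log(\varepsilon+|x_i|)$ in $x_i$ at the origin, which is why the logarithmic piece is majorized in the variable $t=|x_i|\ge 0$ via scalar concavity rather than by differentiating in $x_i$, and why the offset $\varepsilon>0$ is essential so that the reweighting coefficient $(|y_i|+\varepsilon)^{-1}$ stays finite for every $\bm{y}\in\Omega$.
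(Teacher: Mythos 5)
Your proposal is correct and follows essentially the same route as the paper: the same decomposition of $f$, the same concavity bound on the quadratic term (the paper invokes Example~\ref{Exm:dc} with $v(\bm{x})=\bm{x}^T(\bm{A}+\tau\bm{I}_n)\bm{x}$, which is exactly your first-order convexity inequality), and the same bound on the log term (the paper derives \eq{Eq:log} from $\log z\le z-1$ with $z=\frac{|x_i|+\varepsilon}{|y_i|+\varepsilon}$, which is identical to your tangent-line inequality for $\varphi(t)=\log(\varepsilon+t)$ at $s=|y_i|$). Your explicit check of the equality $f(\bm{x})=g(\bm{x},\bm{x})$ and the remark on non-differentiability at the origin are slightly more careful than the paper's "it is easy to check," but not a different argument.
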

\begin{proof}
Consider the term $\log(\varepsilon+|x_i|)$ in $f$. Using the inequality $\log(z)\le z-1,\,\forall\,z\in\mathbb{R}_+$ with $z=\frac{|x_i|+\varepsilon}{|y_i|+\varepsilon}$, we have
\begin{equation}\label{Eq:log}
\log(\varepsilon+|x_i|)\le \log(\varepsilon+|y_i|)+\frac{|x_i|-|y_i|}{|y_i|+\varepsilon},\,\forall\,\bm{x},\bm{y}.
\end{equation}
On the other hand, since $\bm{A}+\tau\bm{I}_n\succeq 0$, by Example~\ref{Exm:dc} with $u(\bm{x})=\tau\Vert\bm{x}\Vert^2_2$ and $v(\bm{x})=\bm{x}^T(\bm{A}+\tau\bm{I}_n)\bm{x}$, we have
\begin{equation}\label{Eq:difftau}
\tau\Vert\bm{x}\Vert^2_2-\bm{x}^T(\bm{A}+\tau\bm{I}_n)\bm{x}\le \tau\Vert\bm{x}\Vert^2_2-\bm{y}^T(\bm{A}+\tau\bm{I}_n)\bm{y}-2(\bm{x}-\bm{y})^T(\bm{A}+\tau\bm{I}_n)\bm{y},\,\forall\,\bm{x},\bm{y}.
\end{equation}
From (\ref{Eq:log}) and (\ref{Eq:difftau}), it is easy to check that $g$ in (\ref{Eq:majorize-f}) majorizes $f$ over $\mathbb{R}^n\times\mathbb{R}^n$ and therefore over $\Omega\times\Omega$ where $\Omega=\{\bm{x}:\bm{x}^T\bm{Bx}\le 1\}$.
\end{proof}
\par By following the minimization step in (\ref{Eq:min-step}) with $g$ as in (\ref{Eq:majorize-f}), the \emph{sparse GEV algorithm} is obtained as 
\begin{equation}\tag{ALG}
\bm{x}^{(l+1)}=\arg\min_{\bm{x}}\left\{\tau\Vert\bm{x}\Vert^2_2-2\bm{x}^T(\bm{A}+\tau\bm{I}_n)\bm{x}^{(l)}+\rho_\varepsilon\sum^n_{i=1}\frac{|x_i|}{|x^{(l)}_i|+\varepsilon}\,:\,
\bm{x}^T\bm{Bx}\le 1\right\},
\end{equation}
which is a sequence of quadratically constrained quadratic programs (QCQPs) \citep{Boyd-06}.
It is clear that $\bm{x}^{(l+1)}$ is the unique optimal solution of (ALG) irrespective of whether $\tau$ is zero or not.\footnote{Suppose $\tau\ne 0$. The objective function in (ALG) is strictly convex in $\bm{x}$ and therefore $\bm{x}^{(l+1)}$ is the unique optimal solution. When $\tau=0$, the objective function is linear in $\bm{x}$ and the unique optimum lies on the boundary of the constraint set.} (ALG) can also be obtained by applying linear majorization (see Example~\ref{Exm:dc}) to (\ref{Eq:tau-1}). See Appendix B for details. 
\par Assuming $\tau\ne 0$ 
and defining $w^{(l)}_i:=\frac{1}{|x^{(l)}_i|+\varepsilon}$, $\bm{w}^{(l)}:=(w^{(l)}_1,\ldots,w^{(l)}_n)$ and $\bm{W}^{(l)}:=\bm{D}(\bm{w}^{(l)})$, a diagonal matrix with $\bm{w}^{(l)}$ as its principal diagonal, (ALG) reduces to
\begin{eqnarray}\label{Eq:weighted}
\bm{x}^{(l+1)}=\arg\min_{\bm{x}}&&\left\Vert\bm{x}-(\tau^{-1}\bm{A}+\bm{I}_n)\bm{x}^{(l)}\right\Vert^2_2+\frac{\rho_\varepsilon}{\tau}\left\Vert\bm{W}^{(l)}\bm{x}\right\Vert_1\nonumber\\
\text{s.t.}&&\bm{x}^T\bm{Bx}\le 1.
\end{eqnarray}
(\ref{Eq:weighted}) is very similar to LASSO \citep{Tibshirani-96} except for the \emph{weighted} $\ell_1$-penalty and the quadratic constraint. When $\bm{x}^{(0)}$ is chosen such that $\bm{x}^{(0)}=a\bm{1}$, 
then the first iteration of (\ref{Eq:weighted}) is a LASSO minimization problem except for the quadratic constraint. Let us analyze (\ref{Eq:weighted}) to get an intuitive interpretation.
\begin{itemize}
\item[(a)] $\rho_\varepsilon=\tilde{\rho}=0$: 
(\ref{Eq:weighted}) reduces to $\min\{\Vert\bm{x}-\bm{s}^{(l)}\Vert^2_2\,:\,\bm{x}^T\bm{Bx}\le 1\}$, where $\bm{s}^{(l)}=(\tau^{-1}\bm{A}+\bm{I}_n)\bm{x}^{(l)}$. So, if $\bm{s}^{(l)}\in\{\bm{x}:\bm{x}^T\bm{Bx}\le 1\}$, then $\bm{x}^{(l+1)}=\bm{s}^{(l)}$, else $\bm{x}^{(l+1)}=(\bm{I}_n+\mu^{(l+1)}\bm{B})^{-1}\bm{s}^{(l)}$, where $\mu^{(l+1)}$ satisfies $[\bm{s}^{(l)}]^T(\bm{I}_n+\mu^{(l+1)}\bm{B})^{-1}\bm{B}(\bm{I}_n+\mu^{(l+1)}\bm{B})^{-1}\bm{s}^{(l)}=1$. The first term in the objective of (\ref{Eq:weighted}) computes the best approximation to $\bm{s}^{(l)}$ in the $\ell_2$-norm so that the approximation lies in the ellipsoid $\bm{x}^T\bm{Bx}\le 1$. We show in Corollary~\ref{prop2} that the iterative algorithm in (\ref{Eq:weighted}) with $\rho_\varepsilon=0$
converges to the solution of (GEV-P) and therefore, the solution $\bm{x}$ is non-sparse.
\item[(b)] $\rho_\varepsilon=\tilde{\rho}=\infty$: In this case, (\ref{Eq:weighted}) reduces to $\min\{\Vert\bm{W}^{(l)}\bm{x}\Vert_1\,:\,\bm{x}^T\bm{Bx}\le 1\}$, which is a weighted $\ell_1$-norm minimization problem. Intuitively, it is clear that if $x^{(l)}_i$ is small, its weighting factor, $w^{(l)}_i=(|x^{(l)}_i|+\varepsilon)^{-1}$ in the next minimization step is large, which therefore pushes $x^{(l+1)}_i$ to be small. This way the small entries in $\bm{x}$ are generally pushed toward zero as far as the constraints on $\bm{x}$ allow, therefore yielding a sparse solution.
\end{itemize}
From the above discussion, it is clear that (\ref{Eq:weighted}) is a trade-off between the solution to the GEV problem and the solution to the weighted $\ell_1$-norm problem. 
From now on, we refer to (ALG) as the \emph{Sparse GEV algorithm}, which is detailed in Algorithm~\ref{alg1}.
\begin{algorithm}[t]
\caption{Sparse Generalized Eigenvalue Algorithm} \label{alg1}
\begin{algorithmic}[1]
\REQUIRE $\bm{A}\in\mathbb{S}^n$, $\bm{B}\succ 0$, $\varepsilon>0$ and $\tilde{\rho}>0$
\STATE 
$\text{Choose}\,\,\tau\ge\max(0,-\lambda_{min}(\bm{A}))$
\STATE
$\text{Choose}\,\,\bm{x}^{(0)}\in\{\bm{x}:\bm{x}^T\bm{Bx}\le 1\}$
\STATE
$\text{Set}\,\,\rho_\varepsilon=\frac{\tilde{\rho}}{\log(1+\varepsilon^{-1})}$
\IF{$\tau=0$} 
\REPEAT 
\STATE $w^{(l)}_i=(|x^{(l)}_i|+\varepsilon)^{-1}$
\STATE $\bm{W}^{(l)}=\bm{D}(\bm{w}^{(l)})$
\STATE\begin{eqnarray}\label{Eq:weighted-algo}
\bm{x}^{(l+1)}=\arg\max_{\bm{x}}&&\bm{x}^T\bm{Ax}^{(l)}-\frac{\rho_\varepsilon}{2}\left\Vert\bm{W}^{(l)}\bm{x}\right\Vert_1\nonumber\\
\text{s.t.}&&\bm{x}^T\bm{Bx}\le 1.
\end{eqnarray}
\UNTIL $\text{convergence}$
\ELSE 
\REPEAT 
\STATE $w^{(l)}_i=(|x^{(l)}_i|+\varepsilon)^{-1}$
\STATE $\bm{W}^{(l)}=\bm{D}(\bm{w}^{(l)})$
\STATE\begin{eqnarray}\label{Eq:weighted-nonzero-algo}
\bm{x}^{(l+1)}=\arg\min_{\bm{x}}&&\left\Vert\bm{x}-(\tau^{-1}\bm{A}+\bm{I}_n)\bm{x}^{(l)}\right\Vert^2_2+\frac{\rho_\varepsilon}{\tau}\left\Vert\bm{W}^{(l)}\bm{x}\right\Vert_1\nonumber\\
\text{s.t.}&&\bm{x}^T\bm{Bx}\le 1.
\end{eqnarray}
\UNTIL $\text{convergence}$
\ENDIF
\STATE \textbf{return}
$\bm{x}^{(l)}$
\end{algorithmic}
\end{algorithm}

To run Algorithm~\ref{alg1}, $\tilde{\rho}$, $\tau$ and $\varepsilon$ need to be chosen. In a supervised learning setup like FDA,
$\tilde{\rho}$ can be chosen by cross-validation whereas, in an
unsupervised setup like PCA/CCA, Algorithm~\ref{alg1} has to be
solved for various $\tilde{\rho}$ and the solution with desired
cardinality is selected. 
Since $\tilde{\rho}$ is a free parameter, $\tau$ and $\varepsilon$ can be set to any value (that satisfies the constraints in Algorithm~\ref{alg1}) and $\tilde{\rho}$ can be tuned to obtain the desired sparsity as mentioned above. However, it has to be noted that for a fixed value of $\tilde{\rho}$, increasing $\tau$ or $\varepsilon$ reduces sparsity.\footnote{Increasing $\varepsilon$ increases the approximation error between $\Vert\bm{x}\Vert_0$ and $\sum^n_{i=1}\frac{\log(1+|x_i|\varepsilon^{-1})}{\log(1+\varepsilon^{-1})}$ and therefore reduces sparsity. From (\ref{Eq:weighted}), it is clear that increasing $\tau$ reduces the weight on the term $\Vert\bm{W}^{(l)}\bm{x}\Vert_1$, which means more importance is given to reducing the approximation error, $\Vert\bm{x}-(\tau^{-1}\bm{A}+\bm{I}_n)\bm{x}^{(l)}\Vert^2_2$, leading to a less sparse solution.} So, in practice $\tau$ is chosen to be $\max(0,-\lambda_{min}(\bm{A}))$, $\varepsilon$ to be \emph{close} to zero and $\tilde{\rho}$ is set by searching for a value that provides the desired sparsity.
\par Suppose that Algorithm~\ref{alg1} outputs a solution, $\bm{x}^*$ such that $\Vert\bm{x}^*\Vert_0=k$. 
Can we say that $\bm{x}^*$ is the optimal solution of (SGEV-P) among all $\bm{x}$ 
with cardinality $k$?
The following proposition provides a condition to check for the non-optimality of $\bm{x}^*$. In addition, it also presents a post-processing step (called \emph{variational renormalization}) that \emph{improves} the performance of Algorithm~\ref{alg1}. See \citet[Proposition 2]{Moghaddam-06a} for a similar result in the case of $\bm{A}\succeq 0$ and $\bm{B}=\bm{I}_n$.
\begin{proposition}
Suppose Algorithm~\ref{alg1} converges to a solution $\bm{x}^*$ such that $\Vert\bm{x}^*\Vert_0= k$. Let $\bm{z}$ be the sub-vector of $\bm{x}^*$ (obtained by removing the zero entries of $\bm{x}^*$) and $\bm{u}_k=\arg\max\{\bm{x}^T\bm{A}_k\bm{x}:\bm{x}^T\bm{B}_k\bm{x}=1\}$, where $\bm{A}_k$ and $\bm{B}_k$ are submatrices of $\bm{A}$ and $\bm{B}$ defined by the same non-zero indices of $\bm{x}^*$. 
If $\bm{z}\ne \bm{u}_k$, then $\bm{x}^*$ is not the optimal solution of (SGEV-P) among all $\bm{x}$ with the same sparsity pattern as $\bm{x}^*$ (and therefore, is not the optimal solution of (SGEV-P) among all $\bm{x}$ with $\Vert\bm{x}\Vert_0=k$).
Nevertheless, by replacing the non-zero entries of $\bm{x}^*$ with those of $\bm{u}_k$, 
the value of the objective function in (SGEV-P) increases from $[\bm{x}^*]^T\bm{Ax}^*$ to $\lambda(\bm{A}_k,\bm{B}_k)$, its optimal value among all $\bm{x}$ with the same sparsity pattern as $\bm{x}^*$.
\end{proposition}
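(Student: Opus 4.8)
The plan is to use the elementary fact that, once the support of the solution is fixed, (SGEV-P) collapses to an ordinary (non-sparse) generalized eigenvalue problem on the corresponding principal submatrices, and then to invoke the variational (Rayleigh-quotient) characterization of $\lambda(\bm{A}_k,\bm{B}_k)$.

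First I would set up notation: let $S=\{i:x^*_i\neq 0\}$, so that $|S|=k$, and let $\bm{A}_k,\bm{B}_k$ be the $k\times k$ principal submatrices of $\bm{A},\bm{B}$ indexed by $S$. Since $\bm{B}\succ 0$, every principal submatrix is positive definite, so $\bm{B}_k\in\mathbb{S}^k_{++}$ while $\bm{A}_k\in\mathbb{S}^k$; hence $\max\{\bm{v}^T\bm{A}_k\bm{v}:\bm{v}^T\bm{B}_k\bm{v}=1\}$ is a well-posed instance of (GEV-P) whose optimal value is $\lambda_{max}(\bm{A}_k,\bm{B}_k)=\lambda(\bm{A}_k,\bm{B}_k)$, attained at $\bm{v}=\bm{u}_k$. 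The first claim is then just the observation that restricting (SGEV-P) to $\{\bm{x}:\mathrm{supp}(\bm{x})\subseteq S\}$ (i.e.\ to all $\bm{x}$ with the same sparsity pattern as $\bm{x}^*$, for which the cardinality constraint is automatically satisfied) is exactly this restricted (GEV-P) after deleting the zero coordinates: writing $\bm{v}$ for the subvector of $\bm{x}$ on $S$, one has $\bm{x}^T\bm{A}\bm{x}=\bm{v}^T\bm{A}_k\bm{v}$ and $\bm{x}^T\bm{B}\bm{x}=\bm{v}^T\bm{B}_k\bm{v}$, so $\lambda(\bm{A}_k,\bm{B}_k)$ is indeed the optimal value of (SGEV-P) over this sparsity pattern.

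Second I would substitute $\bm{x}^*$. Its subvector on $S$ is $\bm{z}$, and, using that the $\bm{B}$-constraint is active at the output of Algorithm~\ref{alg1} (i.e.\ $[\bm{x}^*]^T\bm{B}\bm{x}^*=1$), this gives $\bm{z}^T\bm{B}_k\bm{z}=1$ and $[\bm{x}^*]^T\bm{A}\bm{x}^*=\bm{z}^T\bm{A}_k\bm{z}$. The variational form of (GEV-P) yields $\bm{z}^T\bm{A}_k\bm{z}\le\lambda(\bm{A}_k,\bm{B}_k)$, with equality precisely when $\bm{z}$ is a maximizer of the restricted (GEV-P), i.e.\ a generalized eigenvector associated with $\lambda(\bm{A}_k,\bm{B}_k)$ — the case excluded by the hypothesis $\bm{z}\neq\bm{u}_k$. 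Hence $[\bm{x}^*]^T\bm{A}\bm{x}^*<\lambda(\bm{A}_k,\bm{B}_k)$, so $\bm{x}^*$ is not optimal for (SGEV-P) over vectors sharing its sparsity pattern, and therefore not optimal over the larger feasible set $\{\bm{x}:\Vert\bm{x}\Vert_0\le k\}$, in particular not among all $\bm{x}$ with $\Vert\bm{x}\Vert_0=k$. For the variational-renormalization statement, form $\tilde{\bm{x}}$ by placing the entries of $\bm{u}_k$ in the positions indexed by $S$ and zeros elsewhere; then $\tilde{\bm{x}}^T\bm{B}\tilde{\bm{x}}=\bm{u}_k^T\bm{B}_k\bm{u}_k=1$ (feasible, same sparsity pattern) and $\tilde{\bm{x}}^T\bm{A}\tilde{\bm{x}}=\bm{u}_k^T\bm{A}_k\bm{u}_k=\lambda(\bm{A}_k,\bm{B}_k)$, which by the first step is the optimal value over this sparsity pattern and, by the strict inequality just shown, is strictly larger than $[\bm{x}^*]^T\bm{A}\bm{x}^*$.

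I do not expect a substantive obstacle: the content is the ``fix-the-support'' reduction plus the standard Rayleigh-quotient bound. The only points requiring care are bookkeeping ones. (i) One needs $[\bm{x}^*]^T\bm{B}\bm{x}^*=1$ at the converged iterate so that $\bm{z}$ is genuinely feasible for the equality-constrained restricted problem and the raw objective values may be compared as stated; if instead $[\bm{x}^*]^T\bm{B}\bm{x}^*=c<1$ one first rescales $\bm{x}^*$ by $c^{-1/2}$ before applying the argument. (ii) The maximizer $\bm{u}_k$ is determined only up to sign, and not at all uniquely when $\lambda(\bm{A}_k,\bm{B}_k)$ is a repeated generalized eigenvalue, so ``$\bm{z}\neq\bm{u}_k$'' is best read as ``$\bm{z}$ does not attain $\lambda(\bm{A}_k,\bm{B}_k)$''; with that reading the equality case of the Rayleigh bound closes the argument exactly.
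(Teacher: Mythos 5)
Your proof is correct and is essentially the paper's argument: both rest on the observation that fixing the support of $\bm{x}^*$ reduces (SGEV-P) to the ordinary generalized eigenvalue problem on the principal submatrices $(\bm{A}_k,\bm{B}_k)$, combined with the variational characterization of $\lambda(\bm{A}_k,\bm{B}_k)$. The paper phrases this as a contrapositive (optimality of $\bm{x}^*$ forces $\bm{z}=\bm{u}_k$) whereas you argue directly via the strict Rayleigh-quotient inequality; your extra care about the activity of the $\bm{B}$-constraint and the sign/multiplicity ambiguity in $\bm{u}_k$ is sensible bookkeeping but does not change the substance.
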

\begin{proof}
Assume that $\bm{x}^*$, the solution output by Algorithm~\ref{alg1}, is the optimal solution of (SGEV-P).
Define $\bm{v}$ such that $v_i=\mathds{1}_{\{|x^*_i|\ne 0\}}$. Since $\bm{x}^*$ is the optimal solution of (SGEV-P), we have $\bm{x}^*=\arg\max\{\bm{y}^T\bm{D}(\bm{v})\bm{A}\bm{D}(\bm{v})\bm{y}:\bm{y}^T\bm{D}(\bm{v})\bm{B}\bm{D}(\bm{v})\bm{y}=1\}$, which is equivalent to $\bm{z}=\arg\max\{\bm{w}^T\bm{A}_k\bm{w}:\bm{w}^T\bm{B}_k\bm{w}=1\}=\bm{u}_k$ and the result follows. Note that $\lambda(\bm{A}_k,\bm{B}_k)$ is the optimal value of (SGEV-P) among all $\bm{x}$ with the same sparsity pattern as $\bm{x}^*$. Therefore, if $\bm{z}=\bm{u}_k$, then $[\bm{x}^*]^T\bm{Ax}^*=\lambda(\bm{A}_k,\bm{B}_k)$.
\end{proof}
The variational renormalization suggests that given a solution (in our case, $\bm{x}^*$ at the termination of Algorithm~\ref{alg1}), it is almost certainly better to discard the loadings, keep only the sparsity pattern and solve the smaller unconstrained subproblem to obtain the final loadings, given the sparsity pattern. This procedure surely improves any continuous algorithm's performance. 

\par In Algorithm~\ref{alg1}, we mention that the iterative scheme is continued until convergence. 
What does convergence mean here? Does the algorithm really converge? If it converges, what does it converge to? Does it converge to an optimal solution of (SGEV-A)? To address these questions, in the following section, we provide the convergence analysis of Algorithm~\ref{alg1} using tools from global convergence theory~\citep{Zangwill-69}.

\subsection{Convergence analysis}\label{Sec:convergence}
For an iterative procedure like Algorithm~\ref{alg1} to be useful, it must converge to point solutions from all or at least a significant number of initialization states and not exhibit other nonlinear system behaviors, such as divergence or oscillation. \emph{Global convergence theory of iterative algorithms} \citep{Zangwill-69} can be used to investigate this behavior. We mention up front that this \emph{does not} deal with proving convergence to a global optimum. Using this theory, recently, \citet{Sriperumbudur-09a} analyzed the convergence behavior of the iterative algorithm in (\ref{Eq:example}) and showed that under certain conditions on $u$ and $v$, the algorithm in (\ref{Eq:example}) is globally convergent, i.e., for any random initialization, $\bm{x}^{(0)}$, the 
sequence of iterates $\{\bm{x}^{(l)}\}^\infty_{l=0}$ converges to some stationary point\footnote{$\bm{x}_\ast$ is said to be a stationary point of a constrained optimization problem if it satisfies the corresponding Karush-Kuhn-Tucker (KKT) conditions~\citep[Section 13.3]{Bonnans-06}. Assuming constraint qualification, KKT conditions are necessary for the local optimality of $\bm{x}_\ast$.} of the d.c. program, $\min_{\bm{x}\in\Omega}(u(\bm{x})-v(\bm{x}))$. Since (ALG) can be obtained by applying linear majorization to (\ref{Eq:tau-1}), as shown in Appendix B, the convergence analysis of (ALG) can be carried out (see Theorem~\ref{Thm:alg-1}) by invoking the results in \cite{Sriperumbudur-09a}. We show in Theorem~\ref{Thm:alg-1} that Algorithm~\ref{alg1} is globally convergent. 
In Corollary~\ref{prop2}, we then show that Algorithm~\ref{alg1} converges to the solution of the GEV problem in (GEV-P) when $\tilde{\rho}=0$. In the following, we introduce some notation and terminology and then proceed with the derivation of the above mentioned results. 
\par The convergence analysis of an iterative procedure like Algorithm~\ref{alg1} uses the notion of a \emph{set-valued mapping}, or \emph{point-to-set mapping}, which is central to the theory of global convergence. A point-to-set map $\Psi$ from a set $X$ into a set $Y$ is defined as $\Psi:X\rightarrow\mathscr{P}(Y)$, which assigns a subset of $Y$ with each point of $X$, where $\mathscr{P}(Y)$ denotes the power set of $Y$. 
$\Psi$ is said to be \emph{uniformly compact} on $X$ if there exists a compact set $H$ independent of $x$ such that $\Psi(x)\subset H$ for all $x\in X$. Note that if $X$ is compact, then $\Psi$ is uniformly compact on $X$. A \emph{fixed point} of the map $\Psi:X\rightarrow \mathscr{P}(X)$ is a point $x$ for which $\{x\}=\Psi(x)$. 
\par Many iterative algorithms in mathematical programming can be described using the notion of point-to-set maps. Let $X$ be a set and $x_0\in X$ a given point. Then an \emph{algorithm}, $\EuScript{A}$, with initial point $x_0$ is a point-to-set map $\EuScript{A}:X\rightarrow \mathscr{P}(X)$ which generates a sequence $\{x_k\}^\infty_{k=1}$ via the rule $x_{k+1}\in\EuScript{A}(x_k),\,k=0,1,\ldots$. $\EuScript{A}$ is said to be \emph{globally convergent} if \emph{for any chosen initial point} $x_0$, the sequence $\{x_k\}^\infty_{k=0}$ generated by $x_{k+1}\in\EuScript{A}(x_k)$ (or a subsequence) converges to a point for which a necessary condition of optimality holds: the Karush-Kuhn-Tucker (KKT) conditions in the case of constrained optimization and stationarity in the case of unconstrained optimization. The property of global convergence expresses, in a sense, the certainty that the algorithm works. It is very important to stress the fact that it does not imply (contrary to what the term might suggest) convergence to a global optimum for all initial points $x_0$. 
\par We now state the convergence result for (\ref{Eq:example}) by \citet{Sriperumbudur-09a}, using which we provide the convergence result for (ALG).
\begin{theorem}[\cite{Sriperumbudur-09a}]\label{Thm:globalCCCP-II}
Consider the program,
\begin{equation}\tag{DC}
\min\{u(x)-v(x)\,:\,x\in\Omega\},
\end{equation}
where $\Omega=\{x:c_i(x)\le 0,\,i\in[m],\,d_j(x)=0,\,j\in[p]\}$ and $[m]:=\{1,\ldots,m\}$. Let $u$ and $v$ be strictly convex, differentiable functions defined on $\bb{R}^n$. Also assume $\nabla v$ is continuous. Let $\{c_i\}$ be differentiable convex functions and $\{d_j\}$ be affine functions on $\bb{R}^n$. Suppose (DC) is solved iteratively as $x^{(l+1)}\in\EuScript{A}_{dc}(x^{(l)})$, where $\EuScript{A}_{dc}$ is the point-to-set map defined as
\begin{equation}\tag{DC-ALG}
\EuScript{A}_{dc}(y)=\arg\min_{x\in\Omega}\,\,u(x)-x^Tv(y).
\end{equation}
Let $\{x^{(l)}\}^\infty_{l=0}$ be any sequence generated by $\EuScript{A}_{dc}$ defined by (DC-ALG). Suppose $\EuScript{A}_{dc}$ is uniformly compact on $\Omega$ and $\EuScript{A}_{dc}(x)$ is nonempty for any $x\in\Omega$. Then, assuming suitable constraint qualification, all the limit points of $\{x^{(l)}\}^\infty_{l=0}$ are stationary points of the d.c. program in (DC), $u(x^{(l)})-v(x^{(l)})\rightarrow u(x_\ast)-v(x_\ast)=:f^\ast$ as $l\rightarrow\infty$, for some stationary point $x_\ast$, $\Vert x^{(l+1)}-x^{(l)}\Vert\rightarrow 0$, and either $\{x^{(l)}\}^\infty_{l=0}$ converges or the set of limit points of $\{x^{(l)}\}^\infty_{l=0}$ is a connected and compact subset of $\mathscr{S}(f^\ast)$, where $\mathscr{S}(a):=\{x\in\mathscr{S}:u(x)-v(x)=a\}$ and $\mathscr{S}$ is the set of stationary points of (DC). If $\mathscr{S}(f^\ast)$ is finite, then any sequence $\{x^{(l)}\}^\infty_{l=0}$ generated by $\EuScript{A}_{dc}$ converges to some $x_\ast$ in $\mathscr{S}(f^\ast)$.
\end{theorem}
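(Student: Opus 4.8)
The plan is to verify, one by one, the hypotheses of the classical global convergence theorem of Zangwill~\citep{Zangwill-69}, applied to the algorithmic map $\EuScript{A}_{dc}$ with the objective $f = u - v$ in the role of the \emph{descent function} and the set $\mathscr{S}$ of stationary (KKT) points of (DC) in the role of the \emph{solution set}. Concretely, I would establish: (i) $\EuScript{A}_{dc}$ is single-valued with nonempty image and is closed on $\Omega$; (ii) the fixed-point set of $\EuScript{A}_{dc}$ coincides with $\mathscr{S}$; (iii) $f$ is a continuous descent function for $\EuScript{A}_{dc}$ relative to $\mathscr{S}$; and (iv) the iterates stay in a compact set. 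Given (i)--(iv), Zangwill's theorem yields that every limit point of $\{x^{(l)}\}$ lies in $\mathscr{S}$ and that $f(x^{(l)})$ is monotonically nonincreasing and convergent; the remaining claims ($\Vert x^{(l+1)}-x^{(l)}\Vert\to 0$, connectedness/compactness of the limit set, and convergence when $\mathscr{S}(f^\ast)$ is finite) are then obtained by a short supplementary argument.

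For (i), fix $y\in\Omega$: the subproblem objective $x\mapsto u(x)-x^T\nabla v(y)$ is strictly convex because $u$ is and the extra term is linear, so a minimizer over the convex set $\Omega$, which exists by hypothesis, is unique; hence $\EuScript{A}_{dc}$ is single-valued. Since $u$ is continuous and $\nabla v$ is continuous, the parametrized objective $(x,y)\mapsto u(x)-x^T\nabla v(y)$ is jointly continuous, and $\Omega$ is a fixed closed convex set, so by the maximum theorem (or a direct subsequential argument) the map $y\mapsto\EuScript{A}_{dc}(y)$ is continuous, hence closed. For (ii), note that the subproblem $\min_{x\in\Omega}u(x)-x^T\nabla v(x_\ast)$ is a convex program with differentiable convex inequality constraints $c_i$ and affine equality constraints $d_j$; under the assumed constraint qualification, $x_\ast\in\EuScript{A}_{dc}(x_\ast)$ holds iff there exist multipliers with $\nabla u(x_\ast)-\nabla v(x_\ast)+\sum_i\lambda_i\nabla c_i(x_\ast)+\sum_j\mu_j\nabla d_j(x_\ast)=0$ together with the usual sign and complementarity conditions --- which, since $\nabla f=\nabla u-\nabla v$, are precisely the KKT conditions of (DC). Thus the fixed points of $\EuScript{A}_{dc}$ are exactly the stationary points in $\mathscr{S}$.

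For (iii), I would use the linear majorization of Example~\ref{Exm:dc}: with $g(\bm{x},\bm{y})=u(\bm{x})-v(\bm{y})-(\bm{x}-\bm{y})^T\nabla v(\bm{y})$ one has $f(\bm{x})\le g(\bm{x},\bm{y})$ with equality iff $\bm{x}=\bm{y}$, the inequality being \emph{strict} for $\bm{x}\ne\bm{y}$ by strict convexity of $v$, and $\EuScript{A}_{dc}(\bm{y})=\arg\min_{\bm{x}\in\Omega}g(\bm{x},\bm{y})$. Hence $f(x^{(l+1)})\le g(x^{(l+1)},x^{(l)})\le g(x^{(l)},x^{(l)})=f(x^{(l)})$, with the first inequality strict whenever $x^{(l+1)}\ne x^{(l)}$; combined with (ii), $x^{(l)}\notin\mathscr{S}\Rightarrow x^{(l+1)}\ne x^{(l)}\Rightarrow f(x^{(l+1)})<f(x^{(l)})$, while $x^{(l)}\in\mathscr{S}\Rightarrow f(x^{(l+1)})\le f(x^{(l)})$. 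Uniform compactness confines all iterates to a compact set $H$ on which the continuous $f$ is bounded below, so $f(x^{(l)})\downarrow f^\ast$. Zangwill's theorem then gives that all limit points lie in $\mathscr{S}$, and continuity of $f$ together with $f(x^{(l)})\to f^\ast$ places every limit point in $\mathscr{S}(f^\ast)$. To get $\Vert x^{(l+1)}-x^{(l)}\Vert\to 0$, I would argue by contradiction: a subsequence with $\Vert x^{(l_k+1)}-x^{(l_k)}\Vert\ge\delta>0$ admits, by compactness, a further subsequence with $x^{(l_k)}\to x'$ and $x^{(l_k+1)}\to x''\ne x'$; closedness of $\EuScript{A}_{dc}$ gives $x''\in\EuScript{A}_{dc}(x')$, so $x'\notin\mathscr{S}$ and strict descent forces $f(x'')<f(x')$, contradicting $f(x')=f(x'')=f^\ast$. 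Finally, a bounded sequence whose successive differences tend to zero has a connected and compact set of limit points (a classical lemma of Ostrowski/Meyer), which intersected with the previous conclusion shows the limit set is a connected compact subset of $\mathscr{S}(f^\ast)$; if $\mathscr{S}(f^\ast)$ is finite, a connected subset of it is a singleton, whence $\{x^{(l)}\}$ converges.

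The main obstacle I anticipate is the bookkeeping in step (ii): one must be careful that the stationary points appearing in the theorem's conclusion are genuinely the fixed points of $\EuScript{A}_{dc}$, and that the constraint qualification invoked for the subproblem is the same as the one needed for (DC) --- here it is clean because the two problems have identical constraints, but it has to be stated. The other non-routine ingredient is the limit-set connectedness, which is the only piece that sits outside the Zangwill framework proper and must be imported as an auxiliary lemma on sequences with $\Vert x^{(l+1)}-x^{(l)}\Vert\to 0$; everything else reduces to checking continuity, convexity and compactness.
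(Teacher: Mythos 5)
Your proposal is correct, and it follows the route the paper itself points to: the paper states Theorem~\ref{Thm:globalCCCP-II} without proof, importing it from \citet{Sriperumbudur-09a}, whose argument is exactly the one you reconstruct --- Zangwill's global convergence theorem applied to the single-valued, closed map $\EuScript{A}_{dc}$ with $f=u-v$ as the descent function and the KKT points of (DC) as the solution set, the identification of fixed points with stationary points via the shared constraint system, and Ostrowski's lemma on sequences with vanishing successive differences to obtain the connected limit set and the finite-$\mathscr{S}(f^\ast)$ convergence. The only point worth flagging is cosmetic: in establishing closedness you should rely on the direct subsequential argument rather than Berge's maximum theorem, since $\Omega$ need not be compact; your parenthetical already covers this.
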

The following global convergence theorem for (ALG) is obtained by simply invoking Theorem~\ref{Thm:globalCCCP-II} with $\Omega=\{(\bm{x},\bm{y}):\bm{x}^T\bm{Bx}\le 1,\,-\bm{y}\preceq \bm{x}\preceq\bm{y}\}$, $u(\bm{x})=\tau\Vert\bm{x}\Vert^2_2$ and $v(\bm{x},\bm{y})=\bm{x}^T(\bm{A}+\tau\bm{I}_n)\bm{x}-\rho_\varepsilon\sum^n_{i=1}\log(y_i+\varepsilon)$.
\begin{theorem}[Global convergence of sparse GEV algorithm]\label{Thm:alg-1}
Let $\{\bm{x}^{(l)}\}^\infty_{l=0}$ be any sequence generated by the sparse GEV algorithm in Algorithm~\ref{alg1}. Then, all the limit points of $\{\bm{x}^{(l)}\}^\infty_{l=0}$ are stationary points of the program in (SGEV-A),
\begin{equation}
\rho_\varepsilon\sum^n_{i=1}\log(\varepsilon+|x^{(l)}_i|)-[\bm{x}^{(l)}]^T\bm{A}\bm{x}^{(l)}\rightarrow
\rho_\varepsilon\sum^n_{i=1}\log(\varepsilon+|x^\ast_i|)-[\bm{x}^\ast]^T\bm{A}\bm{x}^\ast:=L^\ast,\end{equation}  for some stationary point $\bm{x}^\ast$, $\Vert\bm{x}^{(l+1)}-\bm{x}^{(l)}\Vert\rightarrow 0$, and either $\{\bm{x}^{(l)}\}^\infty_{l=0}$ converges or the set of limit points of $\{\bm{x}^{(l)}\}^\infty_{l=0}$ is a connected and compact subset of $\mathscr{S}(L^\ast)$, where $\mathscr{S}(a):=\{\bm{x}\in\mathscr{S}:\bm{x}^T\bm{Ax}-\rho_\varepsilon\sum^n_{i=1}\log(\varepsilon+|x_i|)=-a\}$ and $\mathscr{S}$ is the set of stationary points of (SGEV-A). If $\mathscr{S}(L^\ast)$ is finite, then any sequence $\{\bm{x}^{(l)}\}^\infty_{l=0}$ generated by Algorithm~\ref{alg1} converges to some $\bm{x}^\ast$ in $\mathscr{S}(L^\ast)$.
\end{theorem}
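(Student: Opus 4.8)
The plan is to obtain Theorem~\ref{Thm:alg-1} as a corollary of Theorem~\ref{Thm:globalCCCP-II} applied to the lifted d.c.\ program~(\ref{Eq:tau-1}), with the identifications $\Omega=\{(\bm{x},\bm{y}):\bm{x}^T\bm{Bx}\le1,\,-\bm{y}\preceq\bm{x}\preceq\bm{y}\}$, $u(\bm{x},\bm{y})=\tau\Vert\bm{x}\Vert_2^2$ and $v(\bm{x},\bm{y})=\bm{x}^T(\bm{A}+\tau\bm{I}_n)\bm{x}-\rho_\varepsilon\sum_{i=1}^n\log(y_i+\varepsilon)$, since Appendix~B shows that (DC-ALG) for this triple coincides with the update (ALG). The first block of work is to verify the hypotheses of Theorem~\ref{Thm:globalCCCP-II}: (i) $u$ and $v$ are convex (by construction, $\tau\ge0$, $\bm{A}+\tau\bm{I}_n\succeq0$, and $-\log(\cdot+\varepsilon)$ is convex on $\{y_i>-\varepsilon\}\supset\{\bm{y}\succeq\bm{0}\}$) and differentiable with $\nabla v$ continuous there; (ii) the constraint $\bm{x}^T\bm{Bx}-1\le0$ is convex and differentiable ($\bm{B}\succ0$) and the box constraints $x_i-y_i\le0$, $-x_i-y_i\le0$ are affine; (iii) $\EuScript{A}_{dc}$ is nonempty-valued and uniformly compact on $\Omega$, which follows because, since $\rho_\varepsilon>0$, the objective in (DC-ALG) is strictly increasing in each $y_i$, so every minimizer has $y_i=|x_i|$, while its $\bm{x}$-block lies in the bounded ellipsoid $\{\bm{x}:\bm{x}^T\bm{Bx}\le1\}$; hence the image of $\EuScript{A}_{dc}$ lies inside the compact set $\{(\bm{x},|\bm{x}|):\bm{x}^T\bm{Bx}\le1\}$; and (iv) a constraint qualification (e.g.\ MFCQ) holds at every feasible point, which one checks directly using $2\bm{Bx}\ne\bm{0}$ on the ellipsoid boundary together with the sign structure of the box constraints.

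With these in place, Theorem~\ref{Thm:globalCCCP-II} applied to the sequence $\{(\bm{x}^{(l)},|\bm{x}^{(l)}|)\}_{l\ge0}$ generated by $\EuScript{A}_{dc}$ from $(\bm{x}^{(0)},|\bm{x}^{(0)}|)$ yields: all limit points are stationary (KKT) points of~(\ref{Eq:tau-1}); $(u-v)(\bm{x}^{(l)},|\bm{x}^{(l)}|)$ converges; $\Vert(\bm{x}^{(l+1)},|\bm{x}^{(l+1)}|)-(\bm{x}^{(l)},|\bm{x}^{(l)}|)\Vert\to0$; and the limit set is either a singleton or connected and compact in the corresponding level set of stationary points. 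The second block of work is to push these statements down to the $\bm{x}$-coordinate and rewrite them in the claimed form. One computes $(u-v)(\bm{x},|\bm{x}|)=\tau\Vert\bm{x}\Vert_2^2-\bm{x}^T(\bm{A}+\tau\bm{I}_n)\bm{x}+\rho_\varepsilon\sum_i\log(|x_i|+\varepsilon)=-\bm{x}^T\bm{Ax}+\rho_\varepsilon\sum_i\log(\varepsilon+|x_i|)$, which identifies $f^\ast$ with $L^\ast$ and $\mathscr{S}(f^\ast)$ with $\mathscr{S}(L^\ast)$ as defined in the statement; the facts $\Vert\bm{x}^{(l+1)}-\bm{x}^{(l)}\Vert\to0$ and the connectedness/compactness of the $\bm{x}$-limit set follow because projection onto the $\bm{x}$-block is continuous and injective on the graph $\{(\bm{x},|\bm{x}|)\}$. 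Finally one checks that KKT points of~(\ref{Eq:tau-1}) project to stationary points of~(SGEV-A): the $\bm{y}$-stationarity together with complementarity for the box constraints forces $y_i=|x_i|$ and pins the box multipliers to $\rho_\varepsilon/(|x_i|+\varepsilon)$; substituting into the $\bm{x}$-stationarity reproduces exactly the (subdifferential) KKT condition of the nonsmooth program~(SGEV-A). This is a routine partial-minimization argument.

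The hard part will be reconciling the strict-convexity hypothesis of Theorem~\ref{Thm:globalCCCP-II} with the chosen $u$: $u(\bm{x},\bm{y})=\tau\Vert\bm{x}\Vert_2^2$ is constant in $\bm{y}$, hence not strictly convex in the joint variable (and, allowing $\tau=0$ when $\bm{A}\succeq0$, not strictly convex at all), while $v$ is strictly convex in $\bm{y}$ but in $\bm{x}$ only when $\bm{A}+\tau\bm{I}_n\succ0$. I would handle this by noting that strict convexity is used in \citet{Sriperumbudur-09a} only to guarantee that $\EuScript{A}_{dc}$ is single-valued and that a strict descent occurs whenever $\bm{x}^{(l+1)}\ne\bm{x}^{(l)}$, and both survive here: since every minimizer satisfies $y_i=|x_i|$, moving to $\bm{x}^{(l+1)}\ne\bm{x}^{(l)}$ changes $\bm{y}$ unless $|\bm{x}^{(l+1)}|=|\bm{x}^{(l)}|$, and strict convexity of $-\log(\cdot+\varepsilon)$ in $\bm{y}$ then forces the strict majorization inequality, while for $\tau>0$ strict convexity of $\tau\Vert\bm{x}\Vert_2^2$ in $\bm{x}$ gives uniqueness and strict descent of the surrogate; the residual sign-flip case $|\bm{x}^{(l+1)}|=|\bm{x}^{(l)}|$, $\bm{x}^{(l+1)}\ne\bm{x}^{(l)}$ is excluded by direct inspection of (ALG). Equivalently, one invokes the version of \citet{Sriperumbudur-09a} in which the majorization/surrogate function — rather than $u$ itself — carries the strict-convexity and strict-descent load, or re-runs that paper's Zangwill-type argument with these observations substituted in.
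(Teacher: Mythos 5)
Your proposal takes essentially the same route as the paper: both obtain the theorem by applying Theorem~\ref{Thm:globalCCCP-II} to the lifted d.c.\ program (\ref{Eq:tau-1}) with exactly the identifications $\Omega=\{(\bm{x},\bm{y}):\bm{x}^T\bm{Bx}\le 1,\,-\bm{y}\preceq\bm{x}\preceq\bm{y}\}$, $u=\tau\Vert\bm{x}\Vert^2_2$ and $v=\bm{x}^T(\bm{A}+\tau\bm{I}_n)\bm{x}-\rho_\varepsilon\sum^n_{i=1}\log(y_i+\varepsilon)$, verifying uniform compactness and nonemptiness of the iteration map and then reading off the conclusions. Where the paper disposes of the remaining hypotheses with ``it is easy to check,'' you correctly observe that the strict-convexity requirement on $u$ and $v$ is not literally met (e.g.\ $u$ is constant in $\bm{y}$, and vanishes identically when $\tau=0$), and your proposed repairs --- tracking where strict convexity is actually used, or shifting the strict-descent burden onto the surrogate --- supply exactly the details the paper leaves implicit.
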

\begin{proof}
As noted before, (ALG) can be obtained by applying linear majorization to (\ref{Eq:tau-1}), which is equivalent to (SGEV-A), with $\Omega=\{(\bm{x},\bm{y}):\bm{x}^T\bm{Bx}\le 1,\,-\bm{y}\preceq \bm{x}\preceq\bm{y}\}$, $u(\bm{x})=\tau\Vert\bm{x}\Vert^2_2$ and $v(\bm{x},\bm{y})=\bm{x}^T(\bm{A}+\tau\bm{I}_n)\bm{x}-\rho_\varepsilon\sum^n_{i=1}\log(y_i+\varepsilon)$. It is easy to check that $u$ and $v$ satisfy the conditions of Theorem~\ref{Thm:globalCCCP-II}. Since Algorithm~\ref{alg1} and (ALG) are equivalent, let $\EuScript{A}$ correspond to the point-to-set map in (ALG). Clearly $\{\bm{x}:\bm{x}^T\bm{Bx}\le 1\}$ is compact and therefore $\EuScript{A}$ is uniformly compact. By the Weierstrass theorem\footnote{The Weierstrass theorem states: If $f$ is a real continuous function on a compact set $K\subset\mathbb{R}^n$, then the problem $\min\{f(x):x\in K\}$ has an optimal solution $x^*\in K$.}~\citep{Minoux-86}, it is clear that $\EuScript{A}(\bm{x})$ is nonempty for any $\bm{x}\in\{\bm{x}:\bm{x}^T\bm{Bx}\le 1\}$. The result therefore follows from Theorem~\ref{Thm:globalCCCP-II}.
\end{proof}
Having considered the convergence of Algorithm~\ref{alg1}, we now consider the convergence of special cases of Algorithm~\ref{alg1}. Note that $\tilde{\rho}=0$ implies $\rho_\varepsilon=0$. Using this in (SGEV-A) yields the GEV problem in (GEV-P). Since Algorithm~\ref{alg1} is derived based on (SGEV-A), it would be of interest to know whether the sequence $\{\bm{x}^{(l)}\}^\infty_{l=0}$ generated by Algorithm~\ref{alg1} for $\tilde{\rho}=0$ converges to the solution of (GEV-P). The following corollary answers this and shows that when $\tilde{\rho}=0$, the solution of the sparse GEV algorithm (Algorithm~\ref{alg1}) matches with that of the GEV problem in (GEV-P).
\begin{corollary}\label{prop2}
Let $\tilde{\rho}=0$ and $\lambda_{max}(\bm{A},\bm{B})>0$.\footnote{If $\lambda_{max}(\bm{A},\bm{B})>0$, then $\max\{\bm{x}^T\bm{Ax}:\bm{x}^T\bm{Bx}\le 1\}=\max\{\bm{x}^T\bm{Ax}:\bm{x}^T\bm{Bx}= 1\}=\lambda_{max}(\bm{A},\bm{B})$.} 
Then, any sequence $\{\bm{x}^{(l)}\}^\infty_{l=0}$ generated by Algorithm~\ref{alg1} converges to some $\bm{x}^\ast$ such that $\lambda_{max}(\bm{A},\bm{B})=[\bm{x}^\ast]^T\bm{Ax}^\ast$ and $[\bm{x}^\ast]^T\bm{Bx}^\ast=1$.
\end{corollary}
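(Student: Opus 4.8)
The plan is to use the fact that $\tilde{\rho}=0$ collapses the whole construction onto the bare generalized eigenvalue problem and then to read off convergence to the dominant eigenvector from the power-iteration structure of (ALG). First I would substitute $\rho_\varepsilon=\tilde{\rho}/\log(1+\varepsilon^{-1})=0$: the auxiliary variable $\bm{y}$ in (\ref{Eq:tau-1}) becomes vacuous, (SGEV-A) reduces verbatim to (GEV-P), and the objective $f$ in (\ref{Eq:f}) becomes $f(\bm{x})=\tau\Vert\bm{x}\Vert^2_2-\bm{x}^T(\bm{A}+\tau\bm{I}_n)\bm{x}=-\bm{x}^T\bm{Ax}$. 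Completing the square in (ALG) then shows that, for $\tau\ne 0$, the update is $\bm{x}^{(l+1)}=\arg\min_{\bm{x}^T\bm{Bx}\le 1}\Vert\bm{x}-(\bm{I}_n+\tau^{-1}\bm{A})\bm{x}^{(l)}\Vert^2_2$, the Euclidean projection of $(\bm{I}_n+\tau^{-1}\bm{A})\bm{x}^{(l)}$ onto the ellipsoid $\Omega=\{\bm{x}:\bm{x}^T\bm{Bx}\le 1\}$ (this is exactly case (a) in the discussion after (\ref{Eq:weighted})), while for $\tau=0$ it is $\bm{x}^{(l+1)}=\bm{B}^{-1}\bm{Ax}^{(l)}\big/\big([\bm{x}^{(l)}]^T\bm{A}\bm{B}^{-1}\bm{Ax}^{(l)}\big)^{1/2}$ whenever $\bm{Ax}^{(l)}\ne\bm{0}$, i.e.\ a generalized power iteration normalized in the $\bm{B}$-norm $\langle\bm{a},\bm{b}\rangle_{\bm{B}}:=\bm{a}^T\bm{Bb}$.

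Next I would invoke Theorem~\ref{Thm:alg-1} (equivalently Theorem~\ref{Thm:globalCCCP-II}) with $\rho_\varepsilon=0$, working in the $\bm{x}$-space alone and taking $\tau>\max(0,-\lambda_{min}(\bm{A}))$ so that $u(\bm{x})=\tau\Vert\bm{x}\Vert^2_2$ and $v(\bm{x})=\bm{x}^T(\bm{A}+\tau\bm{I}_n)\bm{x}$ are both strictly convex; the extremal choices of $\tau$ (in particular $\tau=0$, the case relevant to PCA) are recovered either by a continuity argument or directly from the explicit power-iteration formula above together with the classical convergence theory of the power method for the $\langle\cdot,\cdot\rangle_{\bm{B}}$-self-adjoint operator $\bm{B}^{-1}\bm{A}$. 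Theorem~\ref{Thm:alg-1} then yields: every limit point of $\{\bm{x}^{(l)}\}$ is a stationary point of (GEV-P); $[\bm{x}^{(l)}]^T\bm{Ax}^{(l)}$ is nondecreasing and converges to $\lambda^\ast:=[\bm{x}^\ast]^T\bm{Ax}^\ast$ for some stationary $\bm{x}^\ast$; $\Vert\bm{x}^{(l+1)}-\bm{x}^{(l)}\Vert\to 0$; and the sequence converges (its limit set otherwise being connected and compact). Writing out the KKT conditions of (GEV-P) shows a stationary point satisfies $\bm{Ax}^\ast=\mu\bm{Bx}^\ast$ with $\mu\ge 0$ and $\mu([\bm{x}^\ast]^T\bm{Bx}^\ast-1)=0$, so $\bm{x}^\ast$ is, outside the degenerate case $\bm{Ax}^\ast=\bm{0}$, a generalized eigenvector with eigenvalue $\lambda^\ast=\mu$, and in particular $\lambda^\ast\le\lambda_{max}(\bm{A},\bm{B})$.

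The remaining — and main — step is to upgrade this to $\lambda^\ast=\lambda_{max}(\bm{A},\bm{B})$, i.e.\ to rule out convergence to a sub-dominant eigenvector; Theorem~\ref{Thm:alg-1} on its own only certifies ``a stationary point''. Here I would expand $\bm{x}^{(l)}=\sum_i c^{(l)}_i\bm{v}_i$ in a $\bm{B}$-orthonormal eigenbasis $\{\bm{v}_i\}$ ($\bm{Av}_i=\lambda_i\bm{Bv}_i$, $\bm{v}^T_i\bm{Bv}_j=\delta_{ij}$): for $\tau=0$ the update multiplies $c_i$ by $\lambda_i/\big(\sum_j\lambda^2_j(c^{(l)}_j)^2\big)^{1/2}$, so (using $\lambda_{max}>0$) the weight carried by the $\lambda_{max}$-eigenspace does not decrease along the orbit, and combining this with the monotone Rayleigh quotient, $\Vert\bm{x}^{(l+1)}-\bm{x}^{(l)}\Vert\to 0$, and compactness pins $\bm{x}^\ast$ into that eigenspace, whence $[\bm{x}^\ast]^T\bm{Ax}^\ast=\lambda_{max}(\bm{A},\bm{B})$. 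Once this holds the constraint is active at $\bm{x}^\ast$ — otherwise complementary slackness forces $\mu=0$ and $\bm{Ax}^\ast=\bm{0}$, contradicting $[\bm{x}^\ast]^T\bm{Ax}^\ast>0$ — so $[\bm{x}^\ast]^T\bm{Bx}^\ast=1$, completing the argument. I expect the main obstacle to be making this spectral-dominance step rigorous for $\tau\ne 0$: there the Euclidean projection onto the $\bm{B}$-ellipsoid is neither radial nor diagonal in $\{\bm{v}_i\}$, so the clean power-method picture must be replaced by a more careful estimate (e.g.\ passing to $\bm{B}^{1/2}$-coordinates and tracking how the projection interacts with the induced quadratic, or following $\bm{x}^T\bm{Bx}$ and $\bm{x}^T\bm{A}\bm{B}^{-1}\bm{Ax}$ along the orbit), and one must also handle sign oscillations and multiplicity issues arising from eigenvalues of $(\bm{A},\bm{B})$ that are negative or equal in magnitude to $\lambda_{max}$.
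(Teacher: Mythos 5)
Your reduction of the $\tilde{\rho}=0$ case to a projected/generalized power iteration, and your use of Theorem~\ref{Thm:alg-1} to obtain convergence to a KKT point $\bm{x}^\ast$ satisfying $\bm{Ax}^\ast=\mu^\ast\bm{Bx}^\ast$ with $\mu^\ast\ge 0$ and $\mu^\ast([\bm{x}^\ast]^T\bm{Bx}^\ast-1)=0$, match the first half of the paper's proof. You also correctly isolate the real difficulty: Theorem~\ref{Thm:alg-1} only certifies convergence to \emph{some} stationary point, so one must still rule out $\mu^\ast<\lambda_{max}(\bm{A},\bm{B})$. Where you diverge is in how this last step is closed, and this is where your proposal has a genuine gap: your spectral-dominance argument (expanding $\bm{x}^{(l)}$ in a $\bm{B}$-orthonormal eigenbasis and tracking the weight on the dominant eigenspace) is only carried out for $\tau=0$, and you yourself concede that for $\tau\ne 0$ the update is a Euclidean projection onto the $\bm{B}$-ellipsoid that is neither radial nor diagonal in that basis, so the argument does not go through. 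Since $\tau\ne 0$ is exactly the case needed when $\bm{A}$ is indefinite (e.g.\ sparse CCA), the proposal as written does not prove the corollary.

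The paper closes this step differently and without any spectral expansion: solving the Lagrangian of (ALG) with $\rho_\varepsilon=0$ gives $\bm{x}^{(l+1)}=(\mu^{(l+1)}\bm{B}+\tau\bm{I}_n)^{-1}(\bm{A}+\tau\bm{I}_n)\bm{x}^{(l)}$, so at a fixed point $\bm{Ax}^\ast=\mu^\ast\bm{Bx}^\ast$ and, by complementary slackness, $[\bm{x}^\ast]^T\bm{Ax}^\ast=\mu^\ast$; the optimal value of the surrogate problem at the fixed point is then $\psi^\ast=-2\mu^\ast-\tau\Vert\bm{x}^\ast\Vert^2_2$, and the paper argues that this is minimized by taking $\mu^\ast$ as large as possible among the multipliers satisfying the fixed-point equation, i.e.\ $\mu^\ast=\lambda_{max}(\bm{A},\bm{B})$, which also forces the constraint to be active so that $[\bm{x}^\ast]^T\bm{Bx}^\ast=1$. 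You should be aware that this closing argument is itself terse --- comparing surrogate values across distinct fixed points does not by itself show that the iteration selects the best one, and, as your own power-iteration picture makes explicit, an initialization with no component in the dominant eigenspace can converge to a sub-dominant eigenvector --- so your instinct that this is the delicate step is sound. But a complete proof must supply some argument here that is valid for all $\tau\ge 0$, and yours does not.
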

\begin{proof} 
The stationary points of (SGEV-A) with $\rho_\varepsilon=\tilde{\rho}=0$ are the generalized eigenvectors of $(\bm{A},\bm{B})$. Therefore, the set $\mathscr{S}$ as defined in Theorem~\ref{Thm:alg-1} is finite and any sequence $\{\bm{x}^{(l)}\}^\infty_{l=0}$ generated by Algorithm~\ref{alg1} converges to some $\bm{x}^\ast$ in $\mathscr{S}(L^\ast)$ where $L^\ast=-[\bm{x}^\ast]^T\bm{Ax}^\ast$. We need to show that $L^\ast=-\lambda_{max}(\bm{A},\bm{B})$. Note that $\bm{x}^\ast$ is a fixed point of Algorithm~\ref{alg1}. Consider (ALG) which is equivalent to Algorithm~\ref{alg1}. With $\rho_\varepsilon=0$, solving the Lagrangian yields $\bm{x}^{(l+1)}=(\mu^{(l+1)}\bm{B}+\tau\bm{I}_n)^{-1}(\bm{A}+\tau\bm{I}_n)\bm{x}^{(l)}$, where $\mu^{(l+1)}\ge 0$ is the Lagrangian dual variable for the constraint $[\bm{x}^{(l+1)}]^T\bm{Bx}^{(l+1)}\le 1$. At the fixed point, $\bm{x}^\ast$, we have $(\mu^\ast\bm{B}+\tau\bm{I}_n)\bm{x}^\ast=(\bm{A}+\tau\bm{I}_n)\bm{x}^\ast$ which implies 
\begin{equation}\label{Eq:proof}
\bm{Ax}^\ast=\mu^\ast\bm{Bx}^\ast.
\end{equation} 
Multiplying both sides of (\ref{Eq:proof}) by $[\bm{x}^\ast]^T$, we have \begin{eqnarray}\label{Eq:slackness}
[\bm{x}^\ast]^T\bm{Ax}^\ast&=&\mu^\ast[\bm{x}^\ast]^T\bm{Bx}^\ast=\mu^\ast([\bm{x}^\ast]^T\bm{Bx}^\ast-1)+\mu^\ast\nonumber\\
&=&\mu^\ast,
\end{eqnarray} where we have invoked the complementary slackness condition, $\mu^\ast([\bm{x}^\ast]^T\bm{Bx}^\ast-1)=0$. The optimum value of (ALG) at the fixed point is given by $\psi^\ast:=-2[\bm{x}^\ast]^T\bm{Ax}^\ast-\tau\Vert\bm{x}^\ast\Vert^2_2$, which by (\ref{Eq:slackness}) reduces to $\psi^\ast=-2\mu^\ast-\tau\Vert\bm{x}^\ast\Vert^2_2$. It is easy to see that making $\mu^\ast>0$, and therefore $[\bm{x}^\ast]^T\bm{Bx}^\ast=1$ minimizes $\psi^\ast$ instead of choosing $\mu^\ast=0$ and $[\bm{x}^\ast]^T\bm{Bx}^\ast<1$. Since $\psi^\ast$ is minimized by choosing the maximum $\mu^\ast$ that satisfies (\ref{Eq:proof}), $(\mu^\ast,\bm{x}^\ast)$ is indeed the eigen pair that satisfies the GEV problem in (GEV-P).
\end{proof}
In addition to $\tilde{\rho}=0$, suppose $\bm{A}\succeq 0$ and $\tau=0$. Then, the following result shows that a simple iterative algorithm can be obtained to compute the generalized eigenvector associated with $\lambda_{max}(\bm{A},\bm{B})$.
\begin{corollary}\label{pro:iterative}
Let $\bm{A}\succeq 0$, $\tau=0$ and $\tilde{\rho}=0$. Then, any sequence $\{\bm{x}^{(l)}\}^\infty_{l=0}$ generated by the following algorithm
\begin{equation}\label{Eq:power-generalize}
\bm{x}^{(l+1)}=\frac{\bm{B}^{-1}\bm{A}\bm{x}^{(l)}}{\sqrt{[\bm{x}^{(l)}]^T\bm{AB}^{-1}\bm{Ax}^{(l)}}}
\end{equation}
converges to some $\bm{x}^\ast$ such that $\lambda_{max}(\bm{A},\bm{B})=[\bm{x}^\ast]^T\bm{Ax}^\ast$ and $[\bm{x}^\ast]^T\bm{Bx}^\ast=1$.
\end{corollary}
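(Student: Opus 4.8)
The plan is to recognize the iteration in (\ref{Eq:power-generalize}) as nothing but Algorithm~\ref{alg1} (equivalently (ALG)) specialized to $\tilde{\rho}=0$ and $\tau=0$, and then to invoke Corollary~\ref{prop2}. First I would set $\rho_\varepsilon=\tilde{\rho}=0$ and $\tau=0$ in the update (\ref{Eq:weighted-algo}) of Algorithm~\ref{alg1}; the penalty term disappears and the update collapses to $\bm{x}^{(l+1)}\in\arg\max\{\bm{x}^T\bm{A}\bm{x}^{(l)}\,:\,\bm{x}^T\bm{B}\bm{x}\le 1\}$, the maximization of a linear functional over the ellipsoid $\Omega$. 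Substituting $\bm{y}=\bm{B}^{1/2}\bm{x}$ and writing $\bm{c}:=\bm{A}\bm{x}^{(l)}$, the Cauchy--Schwarz inequality (equivalently, a one-line Lagrangian computation) shows that, when $\bm{c}\ne\bm{0}$, the unique maximizer is $\bm{x}^{(l+1)}=\bm{B}^{-1}\bm{c}/\sqrt{\bm{c}^T\bm{B}^{-1}\bm{c}}$, which is precisely (\ref{Eq:power-generalize}); this point automatically satisfies $[\bm{x}^{(l+1)}]^T\bm{B}\bm{x}^{(l+1)}=1$, consistent with the remark after (ALG) that for $\tau=0$ the optimum lies on the boundary of $\Omega$.

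The one point requiring care is well-definedness: the iteration makes sense only if $\bm{A}\bm{x}^{(l)}\ne\bm{0}$ for every $l$. Since $\bm{A}\succeq0$ and $\bm{B}^{-1}\succ0$, the matrix $\bm{A}\bm{B}^{-1}\bm{A}$ is positive semidefinite, so $\bm{A}\bm{B}^{-1}\bm{A}\bm{x}^{(l)}=\bm{0}$ would force $(\bm{A}\bm{x}^{(l)})^T\bm{B}^{-1}(\bm{A}\bm{x}^{(l)})=0$ and hence $\bm{A}\bm{x}^{(l)}=\bm{0}$. Consequently, if the initialization $\bm{x}^{(0)}$ is chosen (as one may generically do) with $\bm{A}\bm{x}^{(0)}\ne\bm{0}$, a trivial induction shows $\bm{A}\bm{x}^{(l)}\ne\bm{0}$ for all $l$, so every iterate of (\ref{Eq:power-generalize}) is well-defined and coincides with the iterate produced by Algorithm~\ref{alg1} with $\tilde{\rho}=0$, $\tau=0$. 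In particular this forces $\bm{A}\ne\bm{0}$, which is anyway implicit for the statement to be meaningful.

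Finally I would check the hypothesis of Corollary~\ref{prop2}: because $\bm{A}\succeq0$ with $\bm{A}\ne\bm{0}$ and $\bm{B}\succ0$, we have $\lambda_{max}(\bm{A},\bm{B})=\max\{\bm{x}^T\bm{A}\bm{x}:\bm{x}^T\bm{B}\bm{x}=1\}>0$. Thus Corollary~\ref{prop2} applies verbatim to the sequence generated by (\ref{Eq:power-generalize}), yielding convergence to some $\bm{x}^\ast$ with $\lambda_{max}(\bm{A},\bm{B})=[\bm{x}^\ast]^T\bm{A}\bm{x}^\ast$ and $[\bm{x}^\ast]^T\bm{B}\bm{x}^\ast=1$, which is the claim. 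I expect no real obstacle: the substantive convergence content is inherited wholesale from Corollary~\ref{prop2} (hence from Theorem~\ref{Thm:alg-1} and Theorem~\ref{Thm:globalCCCP-II}), and the only genuinely new steps are the elementary closed-form solution of the linear-over-ellipsoid subproblem and the bookkeeping that guarantees $\bm{A}\bm{x}^{(l)}\ne\bm{0}$, i.e.\ that one never divides by zero.
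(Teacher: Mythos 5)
Your proof is correct and follows essentially the same route as the paper: specialize (ALG) to $\tau=0$, $\rho_\varepsilon=\tilde{\rho}=0$, solve the linear-over-ellipsoid subproblem in closed form to recover (\ref{Eq:power-generalize}), and invoke Corollary~\ref{prop2}. You additionally verify two points the paper leaves implicit --- that $\bm{A}\bm{x}^{(l)}\ne\bm{0}$ for all $l$ so the iteration never divides by zero, and that $\bm{A}\succeq 0$ with $\bm{A}\ne\bm{0}$ forces $\lambda_{max}(\bm{A},\bm{B})>0$ so the hypothesis of Corollary~\ref{prop2} is actually satisfied --- which is a welcome tightening.
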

\begin{proof}
Consider (ALG) with $\tau=0$ and $\rho_\varepsilon=\tilde{\rho}=0$. Since the objective is linear in $\bm{x}$, the minimum occurs at the boundary of the constraint set, i.e., $\{\bm{x}:\bm{x}^T\bm{Bx}=1\}$. Solving the Lagrangian, we get (\ref{Eq:power-generalize}). The result therefore follows from Corollary~\ref{prop2} which holds for any $\tau\ge 0$. 
\vspace{-3mm}
\end{proof}
\par So far, we have proposed a sparse GEV algorithm and proved its global convergence behavior. In the following sections (Sections~\ref{sec:pca}-\ref{Sec:sparseFDA}), we consider specific instances of the sparse GEV problem and use the proposed algorithm (Algorithm~\ref{alg1}) to address them.

\section{Sparse Principal Component Analysis}\label{sec:pca}
In this section, we consider sparse PCA as a special case of the sparse GEV algorithm that we presented in Section~\ref{Sec:algo}. Based on the sparse GEV algorithm in Algorithm~\ref{alg1}, we propose a sparse PCA algorithm, called DC-PCA, with $\bm{A}\succeq 0$ being the covariance matrix, $\bm{B}=\bm{I}_n$ and $\tau=0$. 
In this setting, (ALG) reduces to a very simple iterative rule, which is proved in Appendix C:
\begin{equation}\tag{ALG-S}
x^{(l+1)}_i=\frac{\left[\left|(\bm{Ax}^{(l)})_i\right|-\frac{\rho_\varepsilon}{2}w^{(l)}_i\right]_+\text{sign}((\bm{Ax}^{(l)})_i)}{\sqrt{\sum^n_{i=1}\left[\left|(\bm{Ax}^{(l)})_i\right|-\frac{\rho_\varepsilon}{2}w^{(l)}_i\right]^2_+}},\,\forall\,i,
\end{equation}
where $[a]_+:=\max(0,a)$. The corresponding sparse PCA algorithm (DC-PCA) is shown in Algorithm~\ref{alg2}.
\begin{algorithm}[t]
\caption{Sparse PCA algorithm (DC-PCA)} \label{alg2}
\begin{algorithmic}[1]
\REQUIRE $\bm{A}\succeq 0$, $\varepsilon>0$ and $\tilde{\rho}>0$
\STATE
$\text{Choose}\,\,\bm{x}^{(0)}\in\{\bm{x}:\bm{x}^T\bm{x}\le 1\}$
\STATE
$\text{Set}\,\,\rho_\varepsilon=\frac{\tilde{\rho}}{\log(1+\varepsilon^{-1})}$
\REPEAT
\STATE $w^{(l)}_i=(|x^{(l)}_i|+\varepsilon)^{-1}$
\STATE 
\begin{equation}
x^{(l+1)}_i=\frac{\left[\left|(\bm{Ax}^{(l)})_i\right|-\frac{\rho_\varepsilon}{2}w^{(l)}_i\right]_+\text{sign}((\bm{Ax}^{(l)})_i)}{\sqrt{\sum^n_{i=1}\left[\left|(\bm{Ax}^{(l)})_i\right|-\frac{\rho_\varepsilon}{2}w^{(l)}_i\right]^2_+}},\,\forall\,i
\end{equation}
\UNTIL $\text{convergence}$
\STATE \textbf{return}
$\bm{x}^{(l)}$
\end{algorithmic}
\end{algorithm}
Note that the computation of $\bm{x}^{(l+1)}$, from $\bm{x}^{(l)}$, involves computing $\bm{Ax}^{(l)}$, which has a complexity of $O(n^2)$. Therefore, the DC-PCA algorithm has a per iteration complexity of $O(n^2)$. Since Algorithm~\ref{alg1} exhibits the global convergence behavior and DC-PCA is a special case of Algorithm~\ref{alg1}, it follows that DC-PCA exhibits the global convergence property.
\par Suppose $\rho_\varepsilon=\tilde{\rho}=0$. Then, 
with $\bm{A}\succeq 0$ and $\bm{B}=\bm{I}_n$, (SGEV-R) reduces to:
\begin{equation}\tag{EV-P}
\max\{\bm{x}^T\bm{Ax}:\bm{x}^T\bm{x}=1\},
\end{equation}
i.e., a standard eigenvalue problem.
Therefore, it is of interest to know whether Algorithm~\ref{alg2} provides a solution of (EV-P) when $\tilde{\rho}=0$. It follows from Corollary~\ref{pro:iterative} that DC-PCA converges to a solution of (EV-P) when $\tilde{\rho}=0$. In addition, the following result shows that DC-PCA reduces to the power method for computing $\lambda_{max}(\bm{A})$ when $\tilde{\rho}=0$.
\begin{proposition}[Power method]\label{cor:power} Suppose $\tilde{\rho}=0$. Then, Algorithm~\ref{alg2} is the power method for computing $\lambda_{max}(\bm{A})$.
\end{proposition}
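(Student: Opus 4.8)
The plan is to obtain the claim by a direct substitution of $\tilde\rho=0$ into the update rule (ALG-S) that defines Algorithm~\ref{alg2}, followed by an elementary simplification showing that the iteration collapses to the classical normalized power iteration $\bm{x}^{(l+1)}=\bm{Ax}^{(l)}/\Vert\bm{Ax}^{(l)}\Vert_2$.

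First I would note that $\tilde\rho=0$ forces $\rho_\varepsilon=\tilde\rho/\log(1+\varepsilon^{-1})=0$, so the weighted $\ell_1$ contribution $\frac{\rho_\varepsilon}{2}w^{(l)}_i$ in the numerator and denominator of (ALG-S) vanishes identically for every index $i$ and every iteration $l$. With $\rho_\varepsilon=0$ the numerator of (ALG-S) becomes $\left[\left|(\bm{Ax}^{(l)})_i\right|\right]_+\text{sign}((\bm{Ax}^{(l)})_i)$; since $\left|(\bm{Ax}^{(l)})_i\right|\ge 0$ we have $\left[\left|(\bm{Ax}^{(l)})_i\right|\right]_+=\left|(\bm{Ax}^{(l)})_i\right|$, and using the identity $|a|\,\text{sign}(a)=a$ the numerator reduces to $(\bm{Ax}^{(l)})_i$. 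Likewise the denominator reduces to $\sqrt{\sum^n_{i=1}(\bm{Ax}^{(l)})^2_i}=\Vert\bm{Ax}^{(l)}\Vert_2$. Hence $x^{(l+1)}_i=(\bm{Ax}^{(l)})_i/\Vert\bm{Ax}^{(l)}\Vert_2$ for all $i$, i.e., $\bm{x}^{(l+1)}=\bm{Ax}^{(l)}/\Vert\bm{Ax}^{(l)}\Vert_2$, which is exactly the power method iteration for $\bm{A}$. If one wishes to also record convergence to the eigenvector associated with $\lambda_{max}(\bm{A})$, this follows immediately from Corollary~\ref{pro:iterative} specialized to $\bm{B}=\bm{I}_n$.

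There is essentially no obstacle here; the only point that warrants a moment's care is verifying that the soft-thresholding map $[\,\cdot\,]_+$ composed with $|\cdot|$ and $\text{sign}(\cdot)$ reduces to the identity once the threshold is zero, and — if one wants to be scrupulous — dismissing the degenerate case $\bm{Ax}^{(l)}=\bm{0}$, which does not occur once $\bm{x}^{(0)}$ has a nonzero component along the dominant eigenspace of $\bm{A}$.
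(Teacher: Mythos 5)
Your proposal is correct and follows essentially the same route as the paper's proof, which simply sets $\tilde{\rho}=0$ (hence $\rho_\varepsilon=0$) in the update rule of Algorithm~\ref{alg2} and observes that it collapses to $\bm{x}^{(l+1)}=\bm{Ax}^{(l)}/\Vert\bm{Ax}^{(l)}\Vert_2$. The extra care you take with the identity $\bigl[\,|a|\,\bigr]_+\mathrm{sign}(a)=a$ and the degenerate case $\bm{Ax}^{(l)}=\bm{0}$ is a harmless elaboration of the same argument.
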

\begin{proof}
Setting $\tilde{\rho}=0$ in Algorithm~\ref{alg2} yields 
\begin{equation}\label{Eq:power}
\bm{x}^{(l+1)}=\frac{\bm{Ax}^{(l)}}{\Vert\bm{Ax}^{(l)}\Vert_2},
\end{equation}
which is the power iteration for the computation of $\lambda_{max}(\bm{A})$. 
\vspace{-4mm}
\end{proof}
\par Before proceeding further, we briefly discuss the prior work on sparse PCA algorithms. The earliest attempts at ``sparsifying" PCA consisted of simple axis rotations and component thresholding~\citep{Cadima-95} for subset selection, often based on the identification of principal variables \citep{McCabe-84}. The first true computational technique, called SCoTLASS \citep{Jolliffe-03}, provided an optimization framework using LASSO~\citep{Tibshirani-96} by enforcing a sparsity constraint on the PCA solution by bounding its $\ell_1$-norm, leading to a non-convex procedure. \citet{Zou-06} proposed a $\ell_1$-penalized regression algorithm for PCA (called SPCA) using an \emph{elastic net} \citep{Zou-05} and solved it very efficiently using least angle regression \citep{Efron-04}. Subsequently, \citet{Aspremont-07} proposed a convex relaxation to the non-convex cardinality constraint for PCA (called DSPCA) leading to a SDP with a complexity of $O(n^4\sqrt{\log n})$. Although this method shows performance comparable to SPCA on a small-scale benchmark data set, it is not scalable for high-dimensional data sets, even possibly with Nesterov's first-order method \citep{Nesterov-05}. \citet{Moghaddam-06a} proposed a combinatorial optimization algorithm (called GSPCA) using greedy search and branch-and-bound methods to solve the sparse PCA problem, leading to a total complexity of $O(n^4)$ for a full set of solutions (one for each target sparsity between $1$ and $n$). \citet{Aspremont-08} formulated a new SDP relaxation to the sparse PCA problem and derived a more efficient greedy algorithm (compared to GSPCA) for computing a full set of solutions at a total numerical complexity of $O(n^3)$, which is based on the convexity of the largest eigenvalue of a symmetric matrix.
Recently, \citet{Journee-08} proposed a simple, iterative and very efficient sparse PCA algorithm ($\text{GPower}_{\ell_0}$) with a per iteration complexity of $O(n^2)$, which is based on the idea of linear majorization. They showed that it performs similar to many of the above mentioned algorithms. Therefore, to our knowledge, $\text{GPower}_{\ell_0}$ is the state-of-the-art.
\par In the following, we discuss how DC-PCA relates to SCoTLASS~\citep{Jolliffe-03}, SPCA~\citep{Zou-06} and GPower$_{\ell_0}$~\citep{Journee-08}. We then present experiments to empirically compare different approaches.

\subsection{Comparison to SCoTLASS}~\label{subsec:scotlass}
As mentioned before, the SCoTLASS program is obtained by approximating $\Vert\bm{x}\Vert_0$ with $\Vert\bm{x}\Vert_1$ in (SGEV-P) and is given by 
\begin{eqnarray}
\max_{\bm{x}}&&\bm{x}^T\bm{Ax}\nonumber\\
\text{s.t.}&&\Vert\bm{x}\Vert^2_2=1,\,\Vert\bm{x}\Vert_1\le k,
\end{eqnarray}
where $\bm{A}\succeq 0$. Let us consider the regularized version of the above program, given by
\begin{eqnarray}\label{Eq:scotlass-regularized}
\max_{\bm{x}}&& \bm{x}^T\bm{Ax}-\rho\Vert\bm{x}\Vert_1\nonumber\\
\text{s.t.}&&\Vert\bm{x}\Vert^2_2\le 1.
\end{eqnarray}
It is clear that (\ref{Eq:scotlass-regularized}) is not a canonical convex program because of the convex maximization. Applying the MM algorithm to (\ref{Eq:scotlass-regularized}), we obtain the following iterative algorithm:
\begin{eqnarray}\label{Eq:scot}
\bm{x}^{(l+1)}=\arg\max_{\bm{x}}&&
\bm{x}^T\bm{A}\bm{x}^{(l)}-\frac{\rho}{2}\Vert\bm{x}\Vert_1\nonumber\\
\text{s.t.}&&\Vert\bm{x}\Vert^2_2\le 1.
\end{eqnarray}
Using an approach as in Appendix C, (\ref{Eq:scot}) can be solved in closed form as
\begin{equation}\label{Eq:scotl}
x^{(l+1)}_i=\frac{\left[\left|(\bm{Ax}^{(l)})_i\right|-\frac{\rho}{2}\right]_+\text{sign}((\bm{Ax}^{(l)})_i)}{\sqrt{\sum^n_{i=1}\left[\left|(\bm{Ax}^{(l)})_i\right|-\frac{\rho}{2}\right]^2_+}},\,\forall\,i.
\end{equation}
Now, let us compare (\ref{Eq:scotl}) with the DC-PCA iteration in Algorithm~\ref{alg2}. For ScoTLASS, $x^{(l+1)}_i=0$ if $|(\bm{Ax}^{(l)})_i|\le\frac{\rho}{2}$, whereas for DC-PCA, $x^{(l+1)}_i=0$ if $|(\bm{Ax}^{(l)})_i|\le\frac{\rho_\varepsilon}{2}w^{(l)}_i$. This means that if $x^{(l)}_i=0$ for some $l$, 
then DC-PCA ensures that
$x^{(m)}_i=0,\,\forall\, m>l$ which is not guaranteed for SCoTLASS. Therefore, DC-PCA ensures faster convergence of an irrelevant feature to zero than
SCoTLASS, thus providing better sparsity. This is not surprising as a better approximation to the cardinality constraint is used in DC-PCA. It can be shown that when $\rho=0$, like DC-PCA, SCoTLASS also reduces to the \emph{power iteration algorithm} in (\ref{Eq:power}).

\subsection{Comparison to SPCA}
Let $\bm{Q}$ be a $r\times n$ matrix, where $r$ and $n$ are the number of observations and the number of variables respectively, with the column means being zero. Suppose $\bm{Q}$ has an SVD given by $\bm{Q}=\bm{U\Lambda V}^T$, where $\bm{U}$ contains the principal components of unit length and the columns of $\bm{V}$ are the corresponding loadings of the principal components. Let $\bm{y}_i=[\bm{U\Lambda}]_i,\,\forall\,i$. \citet[Theorem 1]{Zou-06} posed PCA as a regression problem and showed that $[\bm{V}]_i=\bm{x}^\star/\Vert\bm{x}^\star\Vert_2$, where
\begin{equation}\label{Eq:ridge}
\bm{x}^\star=\arg\min_{\bm{x}}\,\Vert\bm{y}_i-\bm{Qx}\Vert^2_2+\lambda\Vert\bm{x}\Vert^2_2,
\end{equation}
where $\lambda>0$. This is equivalent to solving for an eigenvector of $\bm{Q}^T\bm{Q}=:\bm{A}$. Therefore, solving for the eigenvectors of a positive semidefinite matrix is posed as a ridge regression problem in (\ref{Eq:ridge}). To solve for sparse eigenvectors, \citet{Zou-06} introduced an $\ell_1$-penalty term in (\ref{Eq:ridge}) resulting in the following \emph{elastic net} called SPCA, 
\begin{equation}\label{Eq:EN-SPCA}
\bm{x}^\prime=\arg\min_{\bm{x}}\,\Vert\bm{y}_i-\bm{Qx}\Vert^2_2+\lambda\Vert\bm{x}\Vert^2_2+\lambda_1\Vert\bm{x}\Vert_1,
\end{equation}
where $\lambda_1>0$. This problem can be interpreted in a Bayesian setting as follows: given the likelihood on $\bm{y}_i$, $\bm{y}_i|\bm{x},\sigma^2\sim\mathcal{G}(\bm{Qx},\sigma^2\bm{I})$, which is a circular normal random variable with mean $\bm{Qx}$ (conditioned on $\bm{x}$), and a prior distribution on $\bm{x}$,
$\bm{x}|\beta^2,\gamma\sim\mathcal{G}(\bm{0},\beta^2\bm{I})\prod_i\exp(-\gamma|x_i|)$, which is the product of a circular Gaussian and a product of Laplacian densities, compute the maximum a posteriori (MAP) estimate of $\bm{x}$. It is easy to see that the penalization parameters $\lambda$ and $\lambda_1$ in (\ref{Eq:EN-SPCA}) are related to $\sigma^2,\,\beta^2$ and $\gamma$. 
As aforementioned, our approach can be interpreted as defining an improper prior over $\bm{x}$, which promotes sparsity~\citep{Tipping-01}. We use
$p(\bm{x})\propto\prod_i\frac{1}{|x_i|+\varepsilon}$ (instead of
$\prod_i\exp(-\gamma|x_i|)$) as the prior such that
$\bm{x}|\beta^2,\varepsilon\sim\mathcal{G}(\bm{0},\beta^2\bm{I})p(\bm{x})$.
Replacing the prior in (\ref{Eq:EN-SPCA}) with our prior, results in
\begin{equation}\label{Eq:spcacompare}
\min_{\bm{x}}||\bm{y}_i-\bm{Qx}||^2_2+\lambda||\bm{x}||^2_2+\lambda_1\sum_i\log(|x_i|+\varepsilon).
\end{equation}
Since the problem in (\ref{Eq:spcacompare}) is equivalent to
(SGEV-A) with $\bm{B}=\bm{I}_n$, it is clear that DC-PCA can be expected to provide sparser solutions than SPCA because of the prior $p(\bm{x})$ that promotes sparsity.
It is to be noted that the SPCA framework is not
extendible to other settings like FDA or CCA unlike our formulation which is generic. 

\subsection{Comparison to GPower$_{\ell_0}$}
Consider the following regularized sparse PCA program where $\bm{A}\succeq 0$:
\begin{equation}\label{Eq:app}
\max\{\bm{x}^T\bm{Ax}-\tilde{\rho}\Vert\bm{x}\Vert_0:\bm{x}^T\bm{x}\le 1\}.
\end{equation}
In our case, we approximated $\Vert\bm{x}\Vert_0$ by $\Vert\bm{x}\Vert_\varepsilon$ and posed the approximate program as a d.c. program, resulting in DC-PCA obtained through majorization-minimization.
On the other hand, without using any approximations to $\Vert\bm{x}\Vert_0$, \citet{Journee-08} showed that the solution to (\ref{Eq:app}) can be obtained as:
\begin{equation}
x_i=\frac{\left[\text{sign}((\bm{c}^T_i\bm{z})^2-\tilde{\rho})\right]_+\bm{c}^T_i\bm{z}}{\sqrt{\sum^n_{i=1}\left[\text{sign}((\bm{c}^T_i\bm{z})^2-\tilde{\rho})\right]_+(\bm{c}^T_i\bm{z})^2}},\,\forall\,i,
\end{equation}
where $\bm{A}=\bm{C}^T\bm{C}$, $\bm{C}$ is a $p\times n$ matrix, $\bm{c}_i$ is the $i^{th}$ column of $\bm{C}$ and $\bm{z}$ is the solution to the following program which is the maximization of a convex function over an ellipsoid:
\begin{equation}\label{Eq:journee}
\max\left\{\sum^n_{i=1}\left[(\bm{c}^T_i\bm{z})^2-\tilde{\rho}\right]_+\,:\,\Vert\bm{z}\Vert^2_2=1,\,\bm{z}\in\bb{R}^p\right\}.
\end{equation}
(\ref{Eq:journee}) is then solved iteratively (called GPower$_{\ell_0}$) using a simple gradient-type scheme resulting in the following update rule:
\begin{eqnarray}
\bm{z}^{(l+1)}&=&\sum^n_{i=1}\left[\text{sign}((\bm{c}^T_i\bm{z}^{(l)})^2-\tilde{\rho})\right]_+\bm{c}^T_i\bm{z}^{(l)}\bm{c}_i,\nonumber\\
\bm{z}^{(l+1)}&=&\frac{\bm{z}^{(l+1)}}{\Vert\bm{z}^{(l+1)}\Vert_2}.\nonumber
\end{eqnarray}
It can be shown that their gradient-type scheme is equivalent to an MM-type algorithm.
Therefore, our method differs from GPower$_{\ell_0}$ only in a small way. This is also confirmed empirically in the following subsection where DC-PCA and GPower$_{\ell_0}$ exhibit similar performance. We would like to mention that unlike our sparse generalized eigenvalue algorithm (Algorithm 1), GPower$_{\ell_0}$ cannot be readily extended to settings like FDA and CCA.
\subsection{Experimental results}\label{Sec:Experiments}
In this section, we illustrate the effectiveness of DC-PCA in terms of sparsity and scalability on various datasets. On small datasets, the performance of DC-PCA is compared against SPCA, 
DSPCA, 
GSPCA and GPower$_{\ell_0}$, while on large datasets, DC-PCA is compared to all these algorithms except DSPCA and GSPCA due to scalability issues. Since GPower$_{\ell_0}$ has been compared to the greedy algorithm of \citet{Aspremont-08} by \citet{Journee-08}, wherein it is shown that these two algorithms perform similarly except for the greedy algorithm being computationally more complex, we do not include the greedy algorithm in our comparison. 
The results show that the performance of DC-PCA is comparable to the performance of many of these algorithms, but with \emph{better scalability}. The experiments in this paper are carried out on a Linux $3$ GHz, $4$ GB RAM workstation. 
On the implementation side, we fix $\varepsilon$ to be the machine precision in all our experiments, which is motivated from the discussion in Section~\ref{Sec:DC}.
\begin{table*}
\caption{Loadings for first three sparse principal components (PCs) of
the pit props data. The SPCA and DSPCA loadings are taken
from~\citet{Zou-06} and \citet{Aspremont-07} respectively.}\vspace{-5mm}
\begin{center}\footnotesize{
\begin{tabular}[t]{ccccccccccccccc}\hline
& PC& $x_1$ & $x_2$ & $x_3$ &$x_4$ & $x_5$ & $x_6$ & $x_7$
& $x_8$ & $x_9$ & $x_{10}$ & $x_{11}$ & $x_{12}$ & $x_{13}$\\
\hline & 1& -.48 & -.48& 0 & 0 & .18 & 0 & -.25 & -.34 &
-.42 &
-.40 & 0 & 0 & 0\\
SPCA & 2 & 0 & 0 & .79 & .62 & 0 & 0 & 0 & -.02 & 0 & 0 & 0 &
.01 & 0\\
& 3 & 0 & 0 & 0 & 0 & .64 & .59 & .49 & 0 & 0 & 0 & 0 & 0 &
-.02\\ \hline & 1 & -.56 & -.58 & 0 & 0 & 0 & 0 & -.26 & -.10
& -.37 & -.36 & 0 & 0 & 0\\
DSPCA & 2 & 0 & 0 & .71 & .71 & 0 & 0 & 0 & 0 & 0 & 0 & 0 & 0 &
0 \\
& 3 & 0 & 0 & 0 & 0 & 0 & -.79 & -.61 & 0 & 0 & 0 & 0 & 0 &.01\\
\hline 
& 1 & .44 &  .45 & 0 & 0 &0 & 0 & .38 & .34 &
.40 &  .42 & 0 & 0 &0\\
GSPCA & 2 &0 & 0& .71 & .71& 0 & 0 & 0 & 0 & 0 & 0 & 0 & 0 &
0 \\
&3 & 0 & 0 & 0 & 0 & 0 & .82 & .58 & 0 & 0 & 0 & 0 & 0 & 0
\\ \hline
& 1 & .44 &  .45 & 0 & 0 &0 & 0 & .38 & .34 &
.40 &  .42 & 0 & 0 &0\\
GPower$_{\ell_0}$ & 2 &0 & 0& .71 & .71& 0 & 0 & 0 & 0 & 0 & 0 & 0 & 0 &
0 \\
&3 & 0 & 0 & 0 & 0 & 0 & .82 & .58 & 0 & 0 & 0 & 0 & 0 & 0
\\ \hline & 1 & .45 &  .46 & 0 & 0 &0 & 0 & .37 & .33 &
.40 &  .42 & 0 & 0 &0\\
DC-PCA & 2 &0 & 0& .71 & .71& 0 & 0 & 0 & 0 & 0 & 0 & 0 & 0 &
0 \\
&3 & 0 & 0 & 0 & 0 & 0 & .82 & .58 & 0 & 0 & 0 & 0 & 0 & 0
\\ \hline
\end{tabular}}
\label{tab:loadings}
\end{center}
\vspace{-4mm}
\end{table*}
\subsubsection{Pit props data}
The pit props dataset \citep{Jeffers-67} has become a standard
benchmark example to test sparse PCA algorithms. The first 6
principal components (PCs) capture $87\%$ of the total variance. Therefore, the explanatory power of sparse PCA methods is often compared on the first 6 sparse PCs.\footnote{The discussion so far dealt with computing the first sparse eigenvector of $\bm{A}$. 
To compute the subsequent sparse eigenvectors, usually the sparse PCA algorithm is applied to a sequence of deflated matrices. See \citet{Mackey-08} for details. In this paper, we used the orthogonalized Hotelling's deflation as mentioned in \citet{Mackey-08}.}
Table~\ref{tab:loadings} shows the first 3 sparse PCs
and their loadings for SPCA, DSPCA, GSPCA, GPower$_{\ell_0}$ and DC-PCA. Using the first 6 sparse PCs, SPCA captures 75.8\% of the variance with a cardinality pattern of
$(7,4,4,1,1,1)$, which indicates the number of non-zero loadings for the first to the sixth sparse PC, respectively. This results in a total of 18 non-zero loadings for SPCA, while DSPCA
captures 75.5\% of the variance with a sparsity pattern of $(6,2,3,1,1,1)$,
totaling 14 non-zero loadings. With a sparsity pattern of
$(6,2,2,1,1,1)$ (total of \emph{only} $13$ non-zero loadings), DC-PCA, GSPCA and GPower$_{\ell_0}$ can capture 77.1\% of the total variance. 
Comparing the cumulative variance and cumulative cardinality, Figures~\ref{fig:pitprop}(a--b) show that DC-PCA explains more variance with fewer non-zero loadings than SPCA and DSPCA. In addition, its performance is similar to that of GSPCA and GPower$_{\ell_0}$. For the first sparse PC,
Figure~\ref{fig:pitprop}(c) shows that DC-PCA consistently
explains more variance with better sparsity than SPCA, while performing similar to other algorithms. Figure~\ref{fig:pitprop}(d) shows the variation of sparsity and explained variance with respect to $\tilde{\rho}$
for the first sparse PC computed with DC-PCA. This plot summarizes the method for setting $\tilde{\rho}$: the algorithm is run for various $\tilde{\rho}$ and the value of $\tilde{\rho}$ that achieves the desired sparsity is selected.
\begin{figure*}[t]
  \centering
  \begin{tabular}{ccc}
    \begin{minipage}{8cm}
      \center{\epsfxsize=7cm
     \epsffile{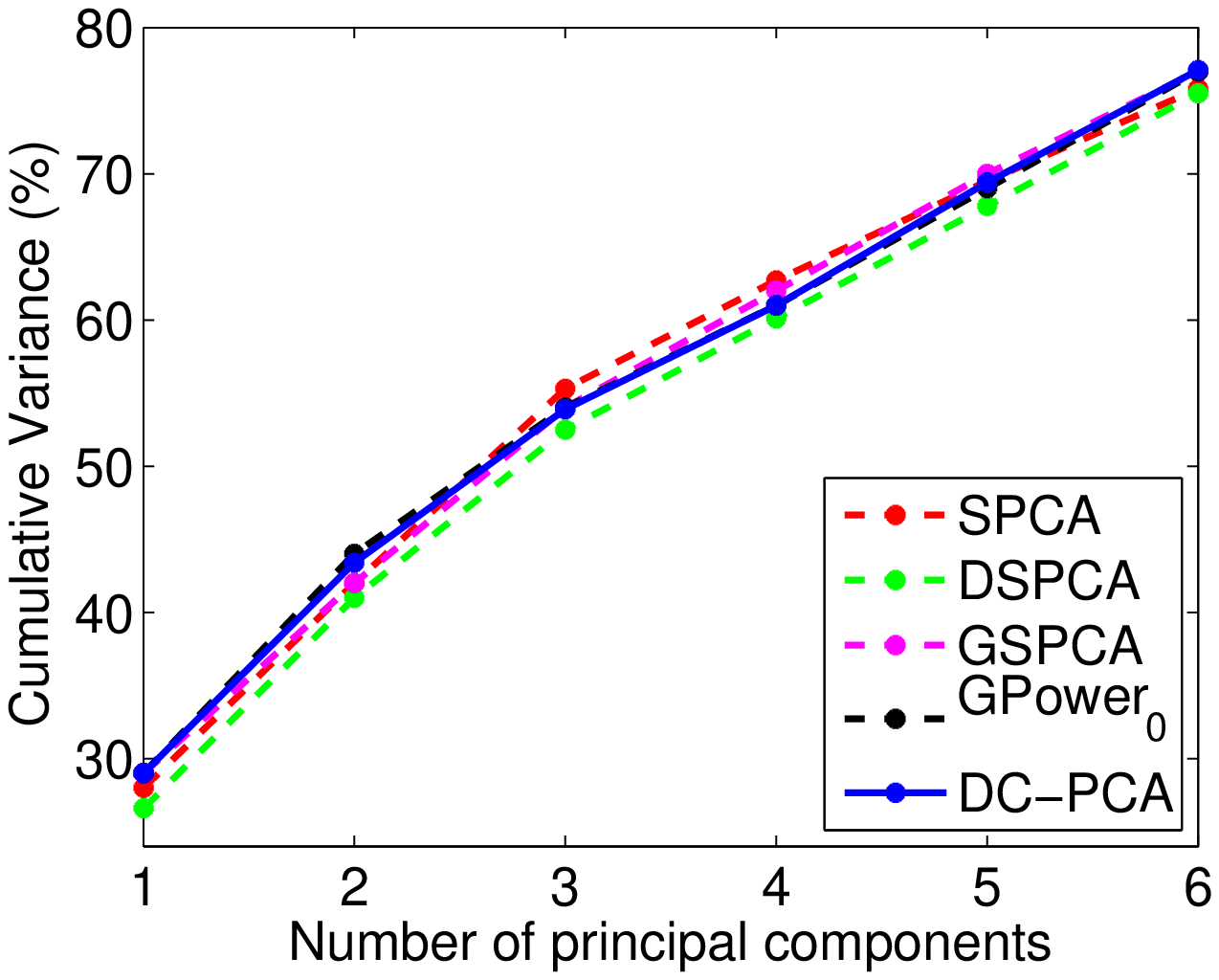}}\vspace{-4mm}
      {\small \center{(a)}}
    \end{minipage}
    \begin{minipage}{8cm}
      \center{\epsfxsize=7cm
	   \epsffile{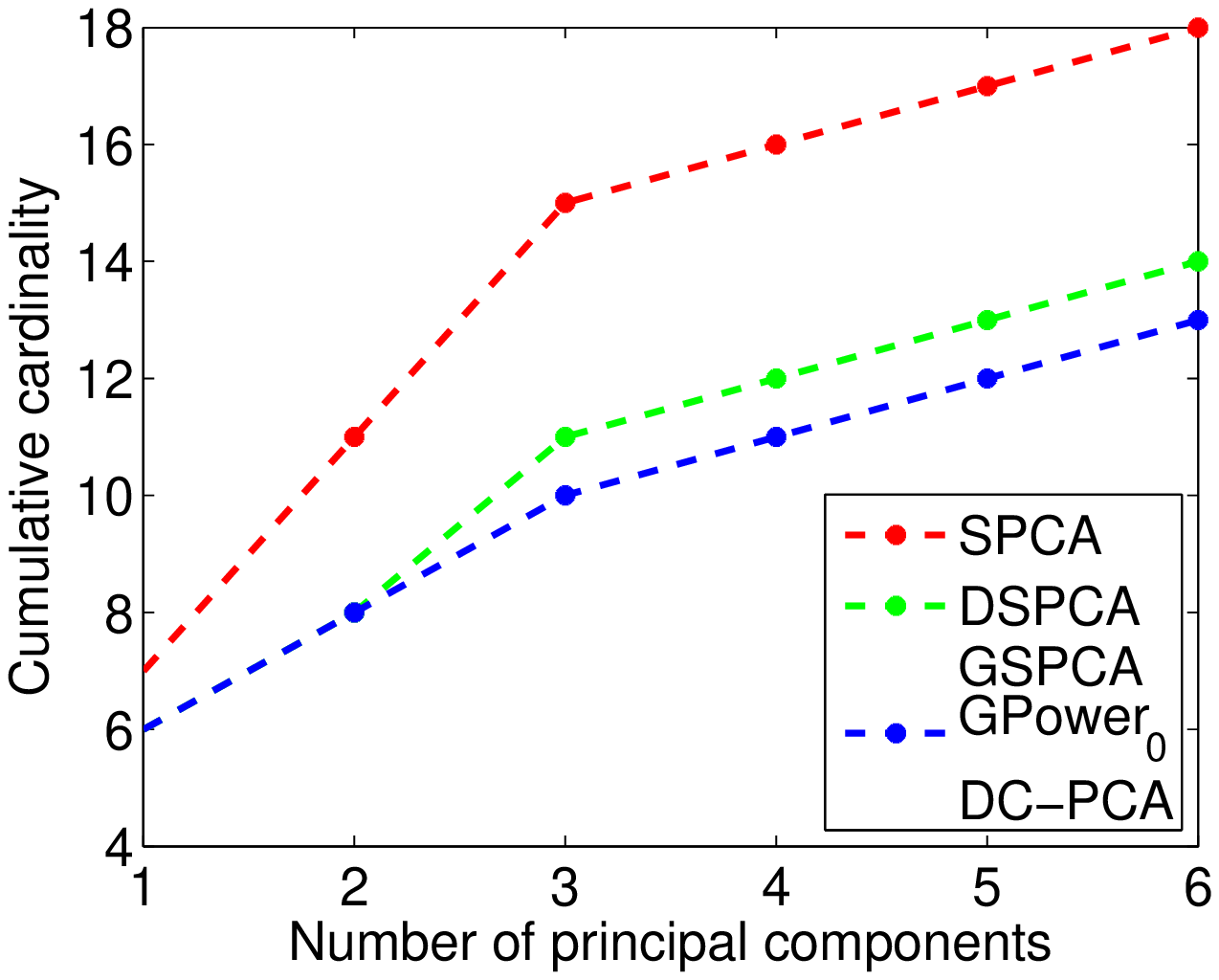}}\vspace{-4mm}
      {\small \center{(b)}}
    \end{minipage}
    \vspace{5mm}
  \end{tabular}
  \begin{tabular}{ccc}
    \begin{minipage}{8cm}
      \center{\epsfxsize=7cm
      \epsffile{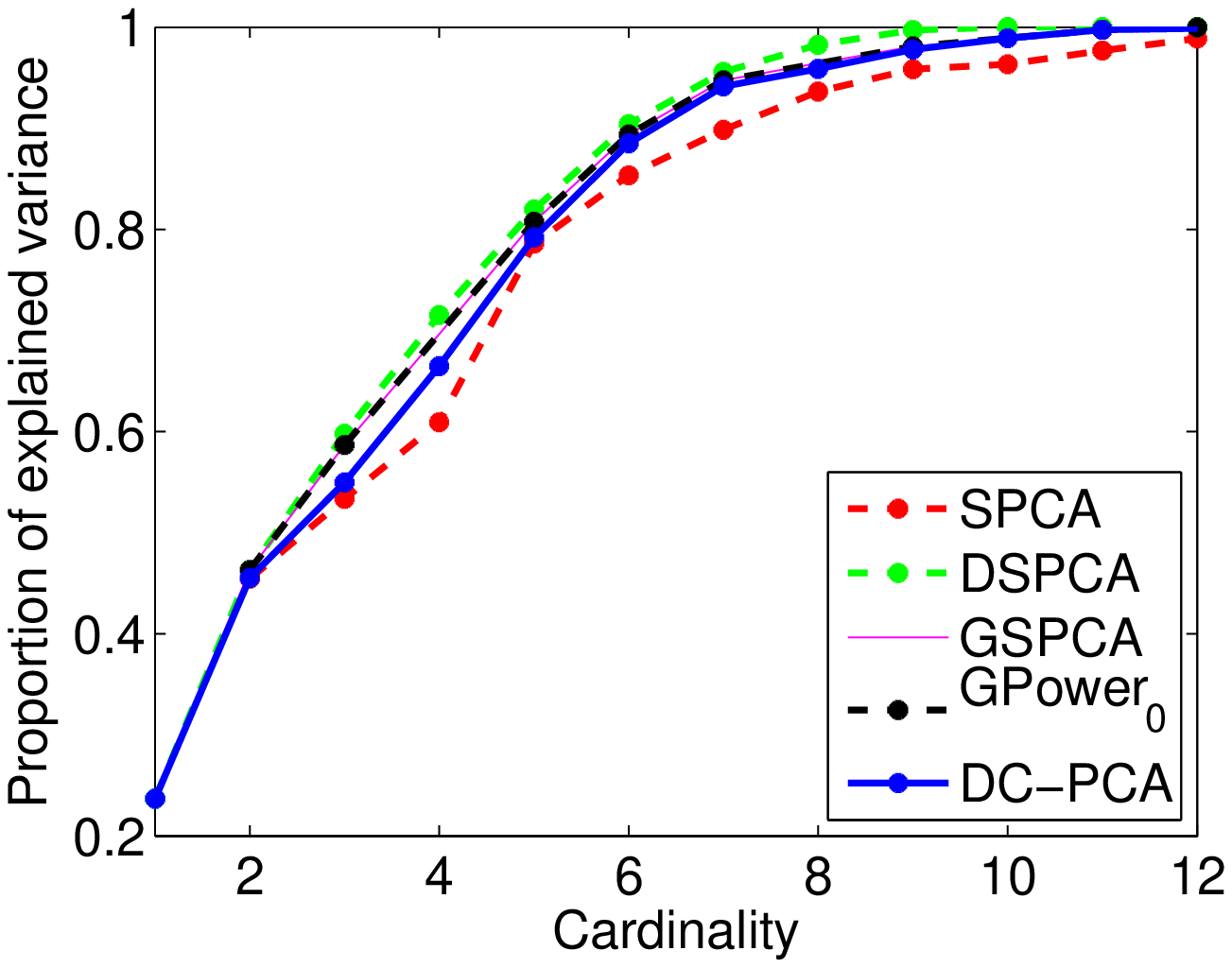}}\vspace{-4mm}
      {\small \center{(c)}}
    \end{minipage}
    \begin{minipage}{8cm}
      \center{\epsfxsize=7cm
\epsffile{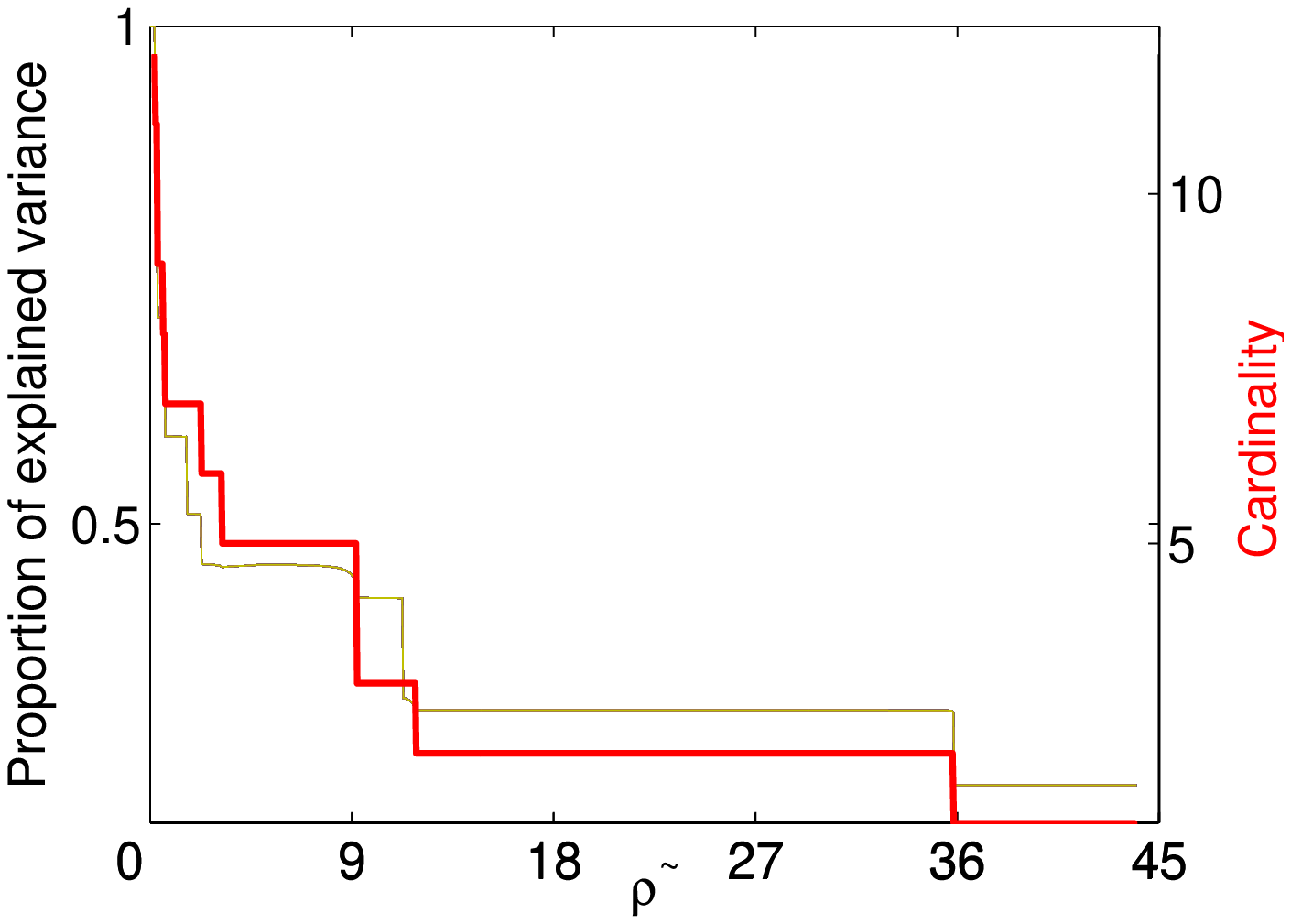}}\vspace{-4mm}
      {\small \center{(d)}}
    \end{minipage}
    \vspace{0mm}
  \end{tabular}
  \caption{Pit props: (a) cumulative variance and (b) cumulative cardinality for the first 6 sparse principal components (PCs); (c) proportion of explained variance (PEV) vs. cardinality for the first sparse PC (obtained by varying the sparsity parameter and computing the cardinality and explained variance for the solution vector); 
(d) dependence of sparsity and PEV on $\tilde{\rho}$
for the first sparse PC computed with DC-PCA. 
\vspace{-2mm}}
  \label{fig:pitprop}
  \vspace{-5mm}
\end{figure*}

\subsubsection{Random test problems}
In this section, we follow the experimental setup that is considered in \citet{Journee-08}. Throughout this section, we assume $\bm{A}=\bm{C}^T\bm{C}$, where $\bm{C}$ is a $p\times n$ random matrix whose entries are generated according to a Gaussian distribution, with zero mean and unit variance.
In the following, we present the trade-off curves (proportion of explained variance vs. cardinality for the first sparse PC associated with $\bm{A}$), computational complexity vs. cardinality and computational complexity vs. problem size for various sparse PCA algorithms.\\\\
\textbf{Trade-off curves.} Figure~\ref{fig:random}(a) shows the trade-off between the proportion of explained variance and the cardinality for the first sparse PC associated with $\bm{A}$ for various sparse PCA algorithms. For each algorithm, the sparsity inducing parameter ($k$ in the case of DSPCA and GSPCA, and the regularization parameter in the case of SPCA, GPower$_{\ell_0}$ and DC-PCA) is incrementally increased to obtain the solution vector with cardinality that decreases from $n$ to 1. The results displayed in Figure~\ref{fig:random}(a) are averages of computations on 100 random matrices with dimensions $p=100$ and $n=300$. It can be seen from Figure~\ref{fig:random}(a) that DC-PCA performs similar to DSPCA, GSPCA and GPower$_{\ell_0}$, while performing better than SPCA.\\\\
\begin{figure*}[t]
  \centering
  \begin{tabular}{ccc}
    \begin{minipage}{8cm}
      \center{\epsfxsize=7cm
     \epsffile{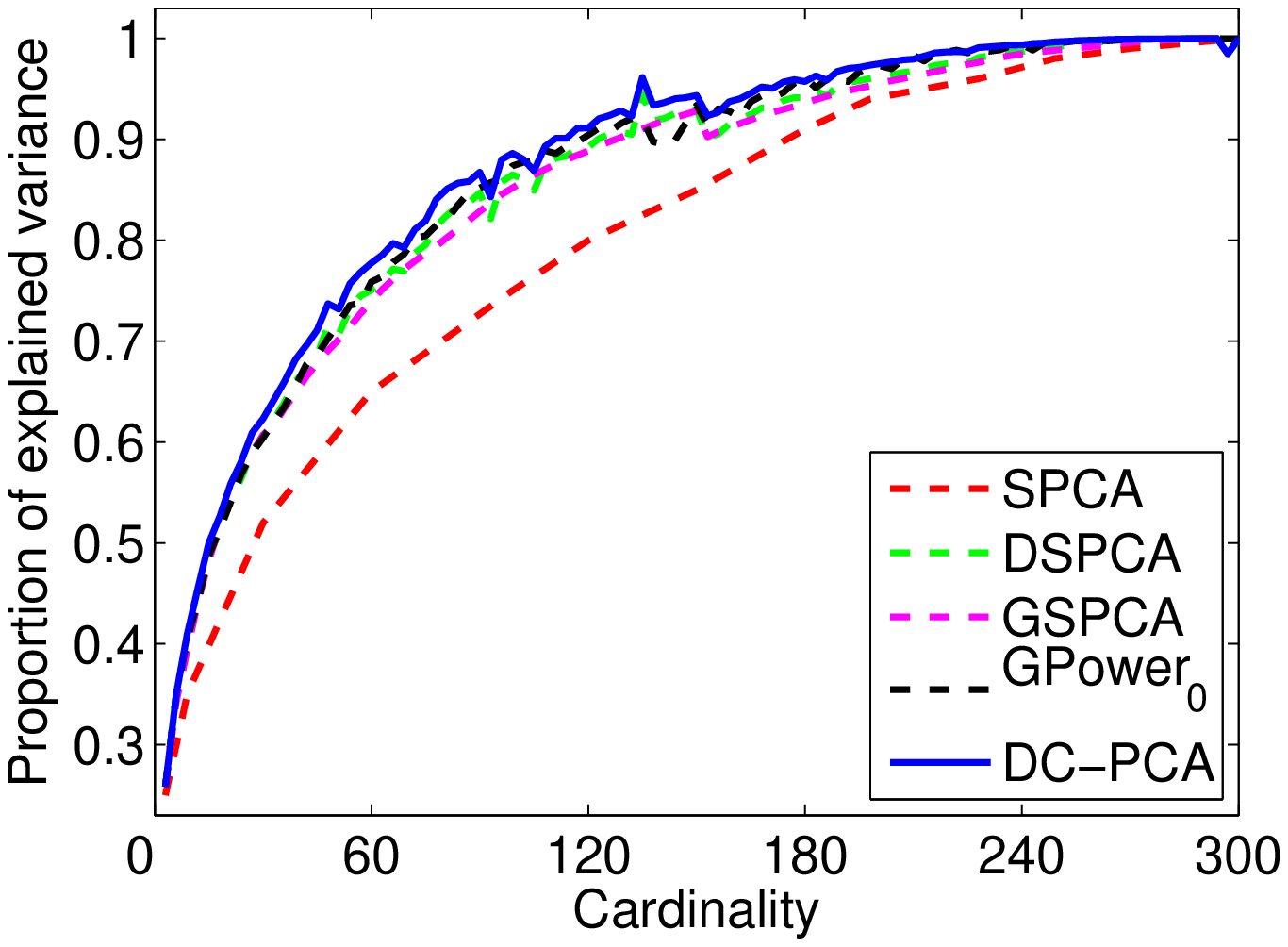}}\vspace{-4mm}
      {\small \center{(a)}}
    \end{minipage}
    \begin{minipage}{8cm}
      \center{\epsfxsize=7cm
	   \epsffile{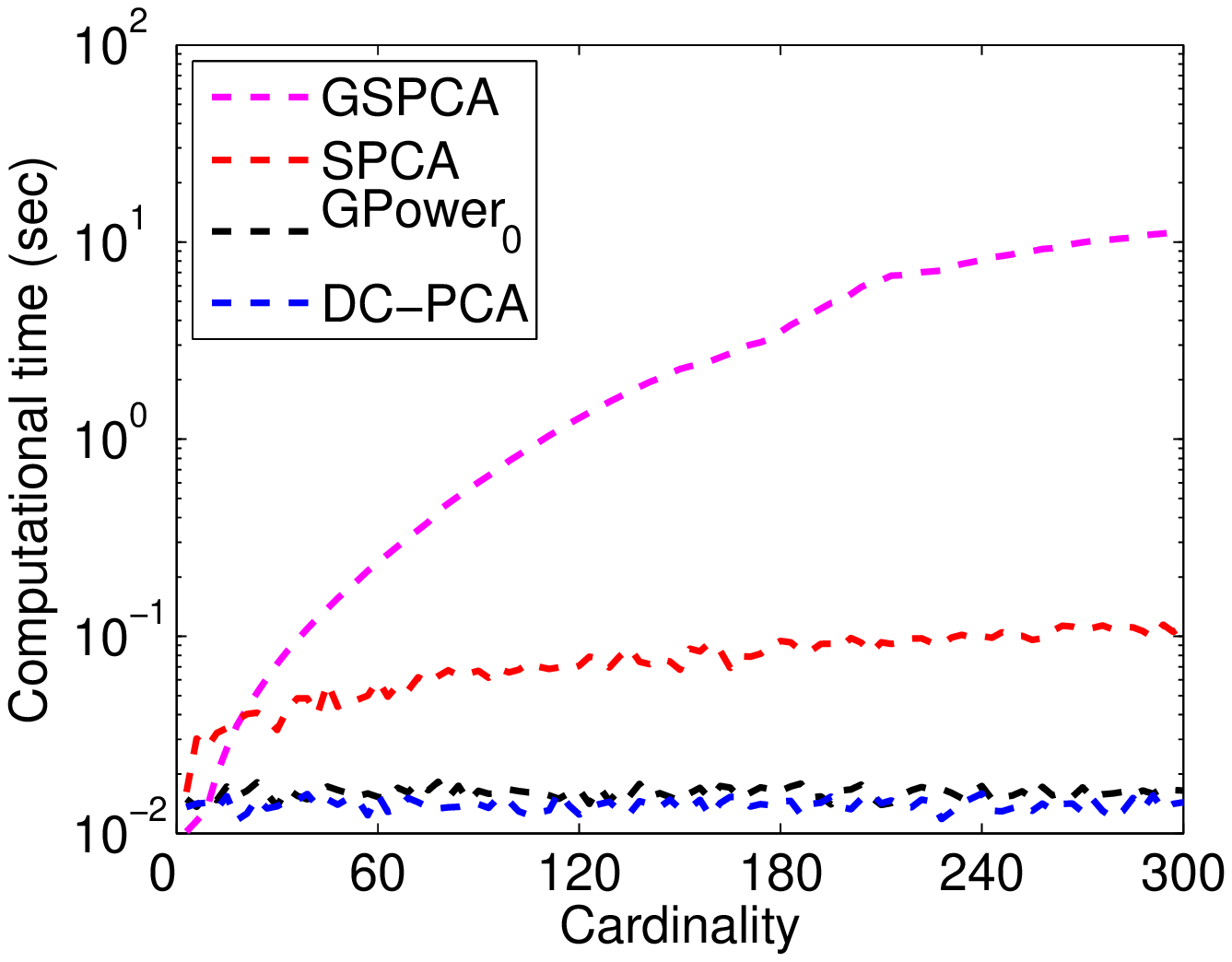}}\vspace{-4mm}
      {\small \center{(b)}}
    \end{minipage}
    \vspace{0mm}
  \end{tabular}
  \caption{Random test data: (a) (average) proportion of explained variance vs. cardinality for the first sparse PC of $\bm{A}$; (b) (average) computation time vs. cardinality. In (a), all the sparse PCA algorithms perform similarly and better than SPCA. In (b), the complexity of GSPCA grows significantly with increasing cardinality of the solution vector, while the speed of the other methods is almost independent of the cardinality.
}  \label{fig:random}
\vspace{-7mm}
\end{figure*}
\textbf{Computational complexity vs. Cardinality.} Figure~\ref{fig:random}(b) shows the average time required by the sparse PCA algorithms to extract the first sparse PC of $\bm{A}$ with $p=100$ and $n=300$, for varying cardinality. It is obvious from Figure~\ref{fig:random}(b) that as the cardinality increases, GSPCA tends to get slower while the speed of SPCA, GPower$_{\ell_0}$ and DC-PCA is not really affected by
the cardinality. We did not show the results of DSPCA in Figure~\ref{fig:random}(b) as its computational complexity is an order of magnitude (around 100 times) more than the scale on the vertical axis of Figure~\ref{fig:random}(b). \citet{Journee-08} have demonstrated that the greedy method proposed by \citet{Aspremont-08} also exhibits the behavior of increasing computational complexity with the increase in cardinality.
\\\\
\textbf{Computational complexity vs. Problem size.} Figure~\ref{fig:timevsproblem} shows the average computation time in seconds, required by various sparse PCA algorithms, to extract the first sparse PC of $\bm{A}$, for various problem sizes, $n$, where $n$ is increased exponentially and $p$ is fixed to 500. The times shown are averages over 100 random instances of $\bm{A}$
for each problem size, where the sparsity inducing parameters are chosen such that the solution vectors of these algorithms exhibit comparable cardinality. It is clear from Figure~\ref{fig:timevsproblem} that DC-PCA and GPower$_{\ell_0}$ scale better to large-dimensional problems than the other algorithms. Since, on average, GSPCA and DSPCA 
are much slower than the other methods, even for low cardinalities (see Figure~\ref{fig:random}(b)), we discard them from all the following numerical experiments that deal with large $n$.
\par For the remaining algorithms, SPCA, GPower$_{\ell_0}$ and DC-PCA, we run another round of experiments, now examining the computational complexity with varying $n$ and $p$ but with a fixed aspect ratio $n/p=10$. The results are depicted in Table~\ref{tab:time-1}.
Again, the corresponding regularization parameters are set in such a way that the solution vectors of these algorithms exhibit comparable cardinality. The values displayed in Table~\ref{tab:time-1} correspond to the average running times of the algorithms on 100 random instances of $\bm{A}$ 
for each problem size. It can be seen that our proposed method, DC-PCA, is comparable to GPower$_{\ell_0}$ and faster than SPCA.
\begin{figure*}[t]
\centering
  \begin{tabular}{ccc}
    \begin{minipage}{8cm}
      \center{\epsfxsize=7cm
      \epsffile{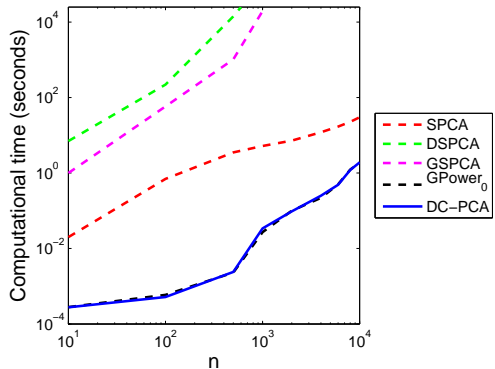}}\vspace{-4mm}
    \end{minipage}
    \vspace{3mm}
  \end{tabular}
  \caption{Average computation time (seconds) for the first sparse PC of $\bm{A}$ vs. problem size, $n$, over 100 randomly generated matrices $\bm{A}$.}\vspace{-4mm}
  \label{fig:timevsproblem}
\end{figure*}
\begin{table*}
\caption{Average computation time (in seconds) for the first sparse PC associated with $\bm{A}$ for a fixed regularization parameter.}
\begin{center}
\begin{tabular}[t]{cccccc}\hline
$p\times n$ & $100\times 1000$ & $250\times 2500$ & $500\times 5000$& $750\times 7500$ & $1000\times 10000$\\ \hline
SPCA & 0.135 & 1.895 & 10.256 & 34.367 & 87.459\\
GPower$_{\ell_0}$ & 0.027 & 0.159 & 0.310 & 1.224 & 1.904\\
DC-PCA & 0.034 & 0.151 & 0.301 & 1.202 & 1.913\\
\hline
\end{tabular}
\label{tab:time-1}
\end{center}
\vspace{-4mm}
\end{table*}

\subsubsection{Gene expression data}
Gene expression data from DNA microarrays provides the expression level of thousands of genes across several hundreds or thousands of experiments. To enhance the interpretation of these large data sets, sparse PCA algorithms can be applied, to extract sparse principal components that involve only a few genes. \\\\
\textbf{Datasets.} Usually, gene expression data is specified by a $p\times n$ matrix (say $\bm{C}$) of $p$ samples and $n$ genes. The covariance matrix, $\bm{A}$ is therefore computed as $\bm{C}^T\bm{C}$. In our experiments, we consider three gene expression datasets which are tabulated in Table~\ref{tab:dataset}.
\begin{table*}
\caption{Gene expression datasets}
\begin{center}
\begin{tabular}[t]{cccc}\hline
Dataset & Samples ($p$) & Genes ($n$) & Reference\\ \hline
Colon cancer & 62 & 2000 & \citet{Alon-99}\\
Leukemia & 38 & 7129 & \citet{Golub-99}\\
Ramaswamy & 127 & 16063 & \citet{Ramaswamy-01}\\
\hline
\end{tabular}
\label{tab:dataset}
\end{center}
\vspace{-4mm}
\end{table*}
The colon cancer dataset~\citep{Alon-99} consists of 62 tissue samples (22 normal and 40 cancerous) with the gene expression profiles of $n=2000$ genes extracted from DNA microarray data. Its first principal component explains 44.96\% of the total variance. The leukemia dataset~\citep{Golub-99} consists of a training set of 38 samples (27 ALL and 11 AML, two variants of leukemia) from bone marrow specimens and a test set of 34 samples (20 ALL and 14 AML). This dataset has been used widely in classification settings where the goal is to distinguish between two variants of leukemia. All samples have 7129 features, each of which corresponds to a normalized gene expression value extracted from the microarray image. The first principal component explains 87.64\% of the total variance. The Ramaswamy dataset has 16063 genes and 127 samples, its first principal component explaining 76.5\% of the total variance. 
\par The high dimensionality of these datasets makes them suitable candidates
for studying the performance of sparse PCA algorithms, by investigating their ability to explain variance in the data based on a small number of genes, to obtain interpretable results.
Since DSPCA and GSPCA are not scalable for these large datasets, in our study, we compare DC-PCA to SPCA and GPower$_{\ell_0}$. \\\\
\textbf{Trade-off curves.} Figures~\ref{fig:cancer-leukemia-ramaswamy}(a-c) show the proportion of explained variance versus the cardinality for the first sparse PC for the datasets shown in Table~\ref{tab:dataset}. It can be seen that DC-PCA performs similar to GPower$_{\ell_0}$ and performs better than SPCA. 
\\\\
\textbf{Computational complexity.} The average computation time required by the sparse PCA algorithms on each dataset is shown in Table~\ref{tab:datasettimes}. The indicated times are averages over $n$ computations, one for each cardinality ranging from $n$ down to $1$.
The results show that DC-PCA and GPower$_{\ell_0}$ are significantly faster than SPCA, which, for a long time, was widely accepted as the algorithm that can handle large datasets.\\\\
Overall, the results in this section demonstrate that DC-PCA performs similar to GPower$_{\ell_0}$,
the state-of-the-art, and better than SPCA, both in terms of scalability and proportion of variance explained vs. cardinality. We would like to mention that our sparse PCA algorithm (DC-PCA) is derived from a more general framework, that can be used to address other generalized eigenvalue problems as well, e.g., sparse CCA, sparse FDA, etc.
\begin{figure*}[t]
  \centering
  \begin{tabular}{ccc}
    \begin{minipage}{8cm}
      \center{\epsfxsize=7cm
      \epsffile{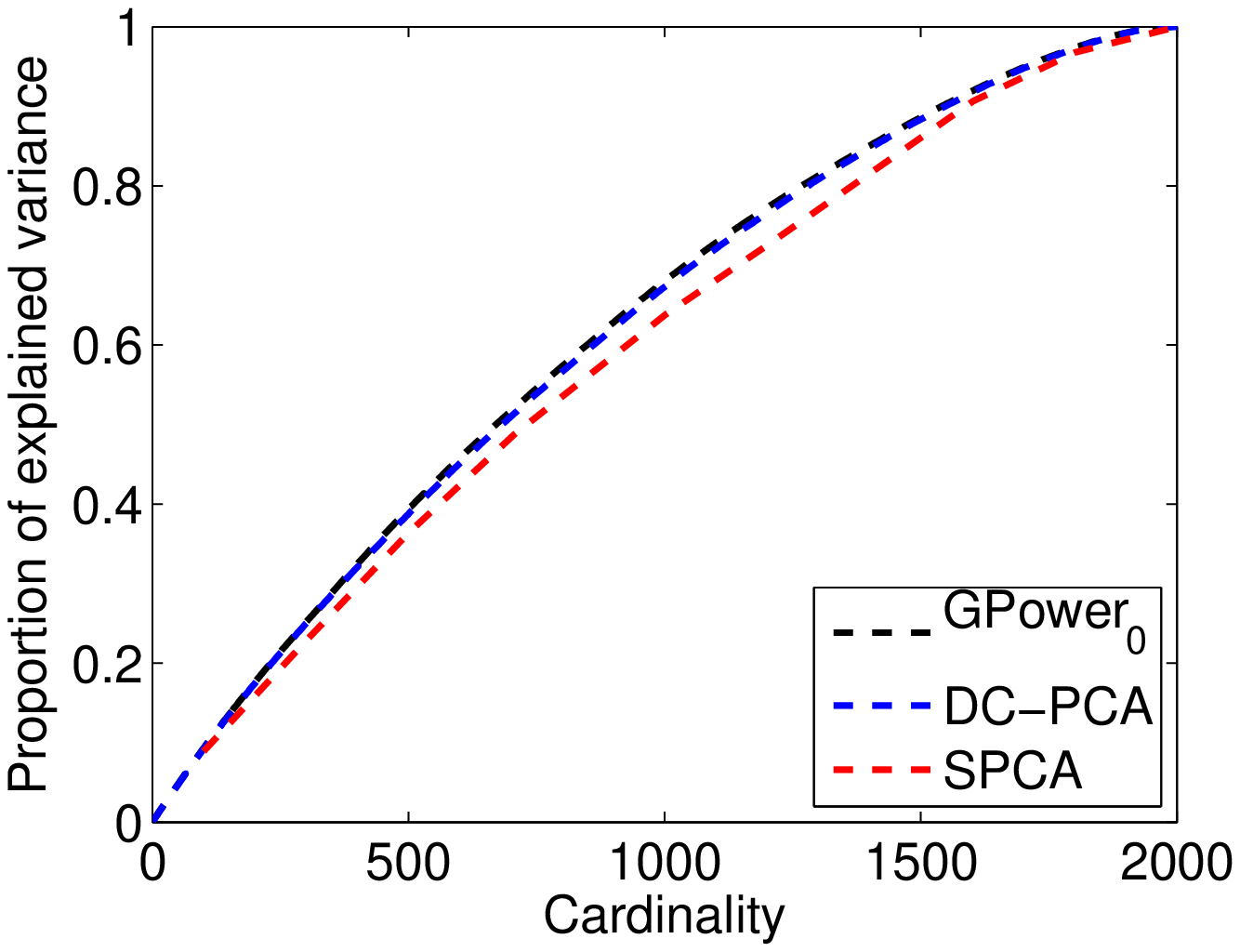}}\vspace{-4mm}
      {\small \center{(a)}}
    \end{minipage}
    \begin{minipage}{8cm}
      \center{\epsfxsize=7cm
      \epsffile{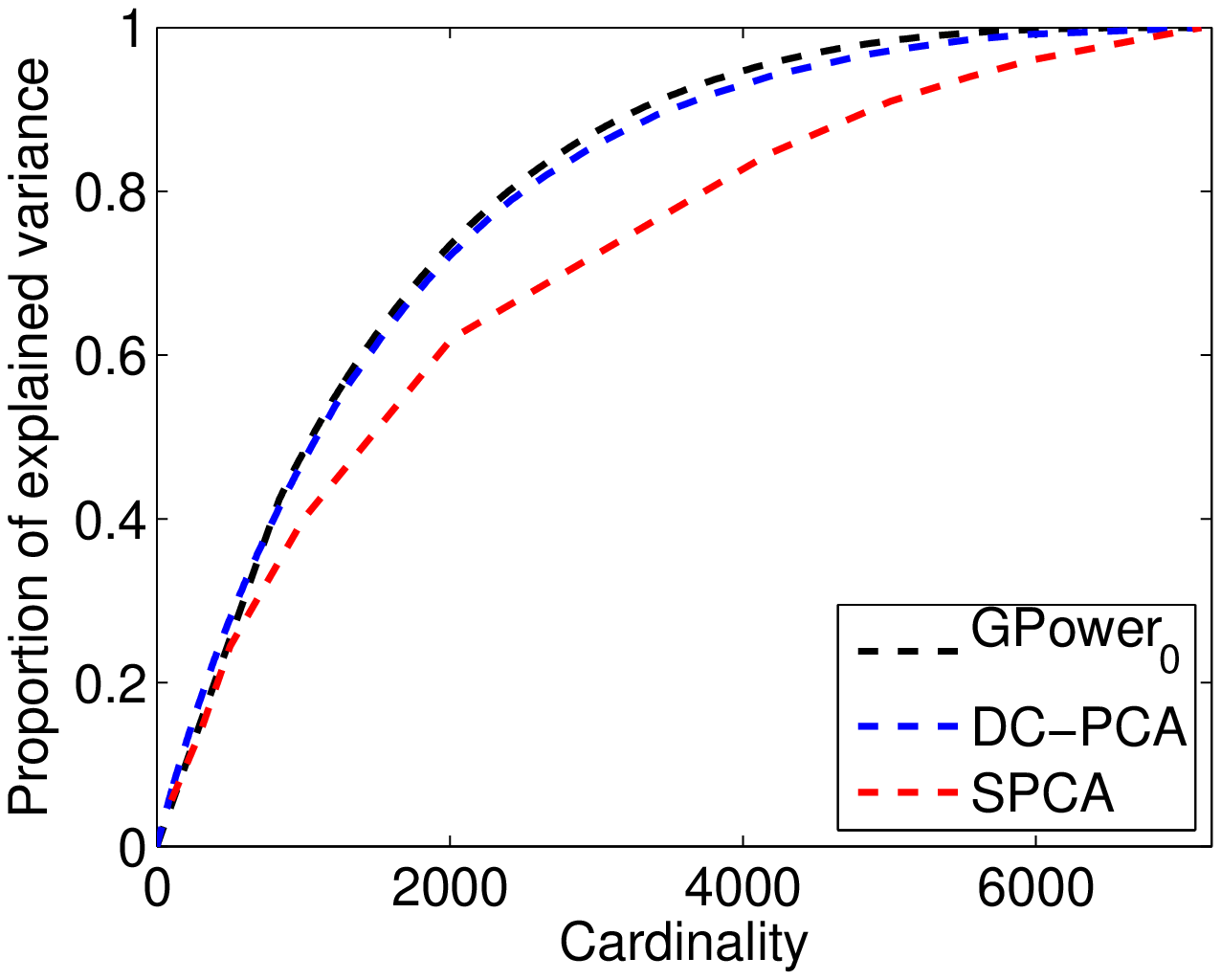}}\vspace{-4mm}
      {\small \center{(b)}}
    \end{minipage}
    \vspace{5mm}
  \end{tabular}
  \begin{tabular}{c}
    \begin{minipage}{8cm}
      \center{\epsfxsize=7cm
       \epsffile{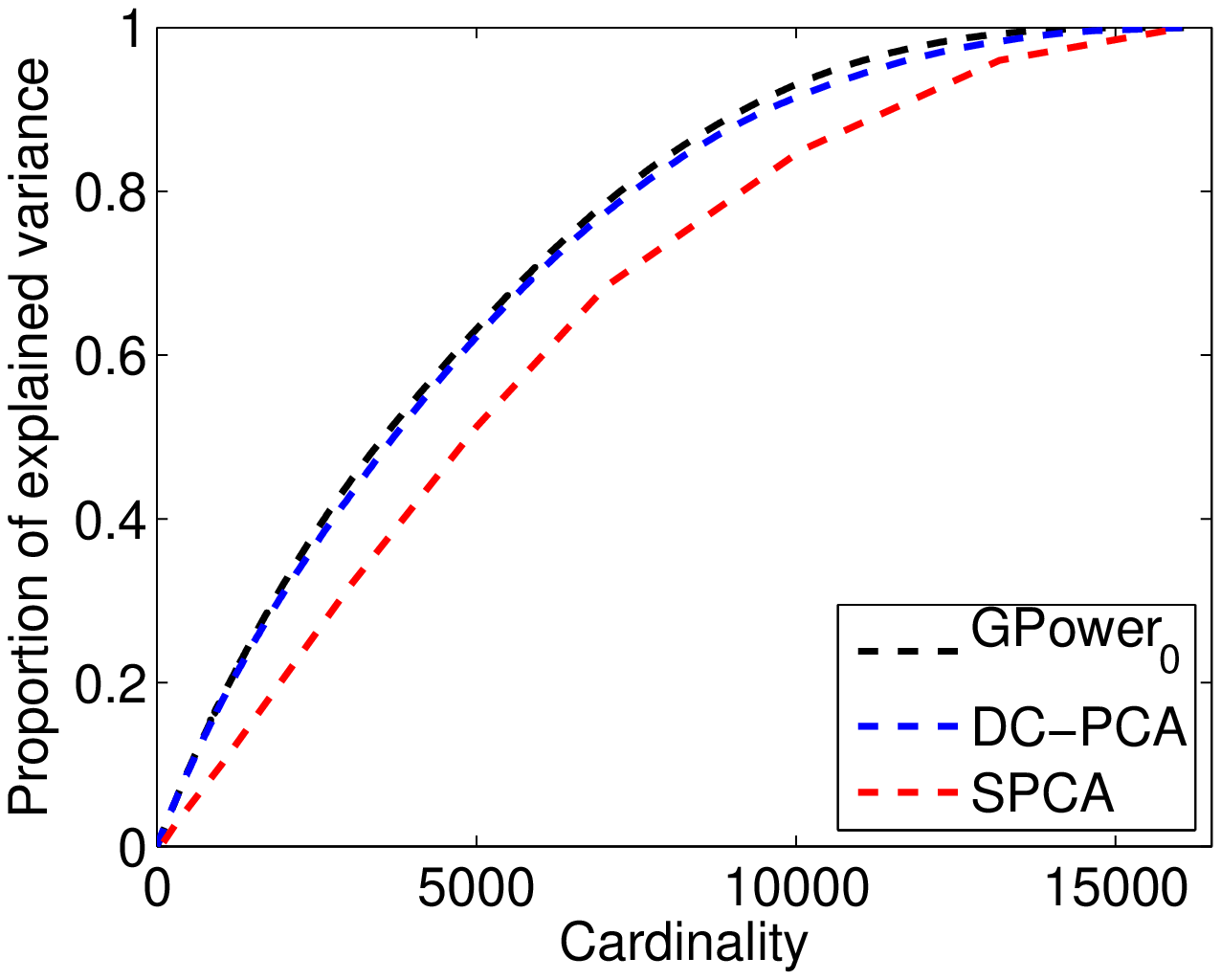}}\vspace{-4mm}
      {\small \center{(c)}}
    \end{minipage}
    \vspace{0mm}
  \end{tabular}
  \caption{Trade-off curves between explained variance and cardinality for (a) Colon cancer, (b) Leukemia and (c) Ramaswamy datasets. The proportion of variance explained is computed on the first sparse principal component. (a--c) show that DC-PCA performs similar to GPower$_{\ell_0}$, while explaining more variance (for a fixed cardinality) than SPCA.}\vspace{-6mm}
  \label{fig:cancer-leukemia-ramaswamy}
\end{figure*}
\begin{table*}
\caption{Computation time (in seconds) to obtain the first sparse PC, averaged over cardinalities ranging from 1 to $n$, for the Colon cancer, Leukemia and Ramaswamy datasets.}
\begin{center}
\begin{tabular}[t]{cccc}\hline
 & Colon cancer & Leukemia & Ramaswamy\\ 
$n$ & 2000& 7129& 16063\\\hline
SPCA & 2.057 & 3.548 & 38.731\\
GPower$_{\ell_0}$ & 0.182 & 0.223 & 2.337\\
DC-PCA & 0.034 & 0.156 & 0.547\\
\hline
\end{tabular}
\label{tab:datasettimes}
\end{center}
\vspace{-3mm}
\end{table*}

\section{Sparse Canonical Correlation Analysis}\label{sec:cca}
In this section, we consider sparse CCA as a special case of the sparse GEV algorithm and present two CCA applications where sparsity is helpful. We call our sparse CCA algorithm DC-CCA, where $\bm{A}$ and $\bm{B}$ are determined from the covariance and cross-covariance matrices as explained right below (\ref{Eq:CCA}). Note that $\bm{A}$ is indefinite and, therefore, in our experiments, we choose $\tau=-\lambda_{min}(\bm{A})$ in Algorithm~\ref{alg1}. In the following, we present two sparse CCA applications, one related to the task of cross-language document retrieval and the other dealing with semantic annotation and retrieval of music~\citep{Torres-07a, Torres-07b}. 
\par The sparse CCA algorithm considered in this section was earlier proposed by us in \citet{Torres-07b}. Related work involves the sparse CCA algorithm due to \citet{Hardoon-08}. 
In Section~\ref{Sec:Sparse-GEV}, we presented a SDP relaxation, which could be applied for  sparse CCA. However, in this section, we use DC-CCA (based on Algorithm~\ref{alg1}) to perform sparse CCA, as it scales better for large problem sizes. We illustrate its performance in the above mentioned applications.
\subsection{Cross-language document retrieval}
The problem of cross-language document retrieval involves a collection of documents, $\{D_i\}^N_{i=1}$ with each document being represented in different languages, say English and French. The goal of the task is, given a query string in one language, retrieve the most relevant document(s) in the target language. The first step is to obtain a semantic representation of the documents in both languages, which models the correlation between translated versions, so we can detect similarities in content between the two document spaces (one for English and the other for French). This is exactly what CCA does by finding a low-dimensional representation in both languages, with maximal correlation between them. \citet{Vinokourov-03} used CCA to address this problem and showed that the CCA approach performs better than the latent semantic indexing approach used by \citet{Littman-98}. CCA provides an \emph{efficient} basis representation (that captures the maximal correlation) for the two document spaces.
\par Using a bag-of-words representation for the documents, sparse CCA would allow to find a low-dimensional model based on a small subset of words in both languages. This would improve the interpretability of the model and could identify small subsets of words that are used in similar contexts in both languages and, possibly, are translations of one another. Representing documents by their similarity to all other documents (e.g., by taking inner products of bag-of-word vectors, as explained below), sparse CCA would create a low-dimensional model that only requires to measure the similarity for a small subset of the training documents. This would immediately improve storage requirements and the efficiency of retrieval computations.
In this study, we follow the second approach, representing documents by their similarity to all other training documents by applying a linear kernel function to a binary bag-of-words representation of the documents, as proposed in~\citet{Vinokourov-03}. This will illustrate how we can achieve significant sparsity without significant loss of retrieval performance. 
\par More specifically, each version of a document (English or French) is modeled using a bag-of-words feature vector. 
Within a feature vector, we associate an element in $\{0,1\}$ with each word $w_i$ in its language vocabulary. A value of 1 indicates that $w_i$ is found in the document. We collect the feature vectors into the $N\times P$ matrix $\bm{E}$, where we collect the English feature vectors, and the $N\times Q$ matrix $\bm{F}$, where we collect the French feature vectors. $N$ is the number of documents and $P$ and $Q$ are the vocabulary sizes of $\bm{E}$ and $\bm{F}$ respectively. Computing the similarity between English documents as the inner product between their binary bag-of-words vectors (i.e., the rows of $\bm{E}$) results in computing an $N\times N$ data matrix $\bm{E}\bm{E}^T$. Similarly, we compute an $N\times N$ data matrix $\bm{F}\bm{F}^T$ and obtain two feature spaces which are both $N$-dimensional. \par By applying sparse CCA, we effectively perform simultaneous feature selection across two vector spaces and characterize the content of and correlation between English and French documents in an efficient manner. We use the DC-CCA algorithm, using the covariance and cross-variance matrices associated with the document matrices $\bm{E}\bm{E}^T$ and $\bm{F}\bm{F}^T$ and obtain successive pairs of sparse canonical components which we stack into the columns of $\bm{V}_E$ and $\bm{V}_F$. (Subsequent pairs of these sparse canonical components are obtained by deflating $\bm{E}\bm{E}^T$ and $\bm{F}\bm{F}^T$ with respect to previous canonical components. For a detailed review on deflation, we refer the reader to \citet{ShaweTaylor-04}.)

Then, given a query document in an input language, say English, we convert the query into the appropriate feature vector, $\bm{q}_E$. We project $\bm{q}_E$ onto the subspace spanned by the sparse canonical components in the English language space by computing $\bm{V}^T_E\bm{q}_E$\footnote{Notice how this projection, onto the sparse canonical components, only requires to compute a few elements of  $\bm{q}_E$, i.e., the ones corresponding to the non-zero loadings of the sparse canonical components; differently said, we only need to compute the similarity of the query document to a small subset of all training documents.}. Similarly, we project all the French 
documents onto the subspace spanned by the sparse canonical components, $\bm{V}_F$ associated with the French language. Finally, we perform document retrieval by selecting those French documents whose projections are closest to the projected query, where we measure distance in a nearest neighbor sense.

\subsubsection{Experimental Details} 
The data set used was the Aligned Hansards of the 36th Parliament of Canada \citep{Germann-01}, which is a collection of 1.3 million pairs of text chunks (sentences or smaller fragments) aligned into English and French translations. The text chunks are split into documents based on $\ast\ast\ast$ delimiters. After removing stop words and rare words (those that occur less than 3 times), we are left with an $1800\times 26328$ English document-by-term matrix and a $1800 \times 30167$ French matrix. Computing $\bm{E}\bm{E}^T$ and $\bm{F}\bm{F}^T$ results in matrices of size $1800\times 1800$.
\par To generate a query, we select English test documents from a test set not used for training. The appropriate retrieval result is the corresponding French language version of the query document. To perform retrieval, the query and the French test documents are projected onto the sparse canonical components and retrieval is performed as described before.
\begin{table*}
\caption{Average area under the ROC curve (in \%) using CCA and sparse CCA (DC-CCA) in a cross-language document retrieval task. $d$ represents the number of canonical components and \emph{sparsity} represents the percentage of 
zero loadings in the canonical components.}\vspace{-5mm}
\begin{center}
\begin{tabular}[t]{cccccc}\hline
$d$ & 100 & 200 & 300 & 400 & 500\\ \hline 
CCA & 99.92 & 99.93 & 99.96 & 99.95 & 99.93\\
DC-CCA & 95.72 & 97.57 & 98.45 & 98.75 & 99.04\\
Sparsity & 87.15 & 87.56 & 87.95 & 88.21 & 88.44\\
\hline
\end{tabular}
\label{tab:language-cca}
\end{center}
\vspace{-6mm}
\end{table*}
Table~\ref{tab:language-cca} shows the performance of DC-CCA (sparse CCA) against CCA. We measure our results using the average area under the ROC curve (average AROC). The results in Table~\ref{tab:language-cca} are shown in percentages. To go into detail, for each test query we generate an ROC curve from the ranked retrieval results. Results are ranked according to their projected feature vector's Euclidean distance from the query. The area under this ROC curve is used to measure performance. For example, if the first returned document was the most relevant (i.e., the corresponding French language version of the query document) this would result in an ROC with area under the curve (AROC) of 1. If the most relevant document came in above the $75^{th}$ percentile of all documents, this would lead to an AROC of 0.75, and so on. So, we're basically measuring how highly the corresponding French language document ranks in the retrieval results. For a collection of queries we take the simple average of each query's AROC to obtain the average AROC. An average AROC of 1 is best, a value of 0.5 is as good as chance.
\par In Table~\ref{tab:language-cca}, we compare retrieval using sparse CCA to regular CCA. For sparse CCA, we use a sparsity parameter that leads to loadings that are approximately 10\% of the full dimensionality, i.e., the canonical components are approximately 90\% sparse. We note that sparse CCA is able to achieve good retrieval rates, only slightly sacrificing performance compared to regular CCA. This is the key result of this section: we can achieve performance close to regular CCA, by using only about 12\% of the number of loadings (i.e., documents) required by regular CCA. This shows that sparse CCA can narrow in on the most informative dimensions exhibited by data and can be used as an effective dimensionality reduction technique.
\subsection{Vocabulary selection for music information retrieval}
In this subsection we provide a short summary of the results in~\citet{Torres-07a}, which nicely illustrate how sparse CCA can be used to improve the performance of a statistical musical query application, by identifying problematic query words and eliminating them from the model. The 
application involves a computer audition system \citep{Turnbull-08} that can annotate songs with semantically meaningful words or \emph{tags} (such as, e.g., {\it rock} or {\it mellow}), or retrieve songs from a database, based on a semantic query. This system is based on a joint probabilistic model between words and acoustic signals, learned from a training data set of songs and song tags. ``Noisy" words, that are not or only weakly related to the musical content, will decrease the system's performance and waste computational resources. Sparse CCA is employed to prune away those noisy words and improve the system's performance.
\par The details of this experiment are beyond the scope of this work and can be found in ~\citet{Torres-07a}. In short, each song from the CAL-500 dataset\footnote{The CAL-500 data set consists of a set of songs, annotated with semantic tags, obtained by conducting human surveys. More details can be found in \citet{Turnbull-08}.} is represented in two different spaces: in a semantic space, based on a bag-of-words representation of a song's semantic tags, and in an audio space, based on Mel-frequency cepstral coefficients~\citep{Mckinney-03} extracted from a song's audio content. This representation allows sparse CCA to identify a small subset of words spanning a semantic subspace that is highly correlated with audio content. In Figure \ref{fig:aroc2}, we use sparse CCA to generate a sequence of vocabularies of progressively smaller size, 
ranging from full size (containing about 180 words) to very sparse (containing about 20 words),  depicted on the horizontal axis. For each vocabulary size, the computer audition system is trained and the average area under the receiver operating characteristic curve (AROC) is shown on the vertical axis, measuring its retrieval performance on an independent test set. The AROC (ranging between 0.5 for randomly ranked retrieval results and 1.0 for a perfect ranking) initially clearly improves, as sparse CCA (DC-CCA) generates vocabularies of smaller size: it is effectively removing noisy words that are detrimental for the system's performance. Also shown in Figure \ref{fig:aroc2} are the results of training the music retrieval system based on two alternative vocabulary selection techniques: random selection (offering no improvement) and a heuristic that eliminates words exhibiting less agreement amongst the human subjects that were surveyed to collect the CAL-500 dataset (only offering a slight improvement, initially).
\begin{figure}[t]
  \centering
      \center{\epsfxsize=8cm
      \epsffile{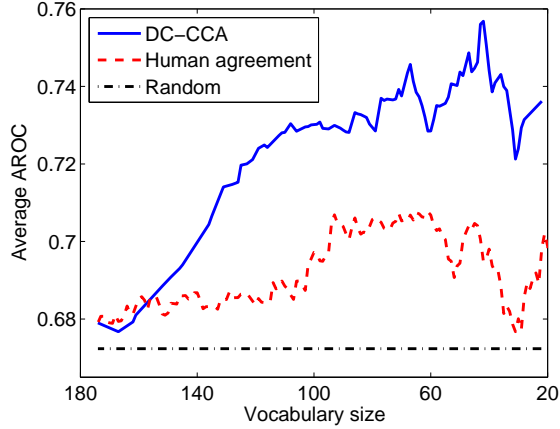}}
  \caption{Comparison of vocabulary selection techniques for music retrieval.}
  \label{fig:aroc2}
\end{figure}
\par In summary, \citet{Torres-07a} illustrates that vocabulary selection using sparse CCA
significantly improves the retrieval performance of a computer audition system (by effectively removing noisy words), outperforming a random baseline and a human agreement heuristic.

\section{Sparse Fisher Discriminant Analysis}\label{Sec:sparseFDA}
In this section, we show that the FDA problem is an \emph{interesting} special case of the GEV problem and that the special structure of $\bm{A}$ allows the sparse FDA problem to be solved more efficiently than the general sparse GEV problem.
\par Let us consider the GEV problem in (GEV-P) with $\bm{A}\in\mathbb{S}^n_+$, $\bm{B}\in\mathbb{S}^n_{++}$ and $\text{rank}(\bm{A})=1$. This is exactly the FDA problem as shown in (\ref{Eq:FDA}) where $\bm{A}$ is of the form $\bm{A}=\bm{a}\bm{a}^T$, with $\bm{a}=(\bm{\mu}_1-\bm{\mu}_2)\in\mathbb{R}^n$. The corresponding GEV problem is written as
\begin{eqnarray}\label{Eq:gevrankone}
\lambda_{max}(\bm{A},\bm{B})=\max_{\bm{x}}&&(\bm{a}^T\bm{x})^2\nonumber\\
\text{s.t.}&&\bm{x}^T\bm{Bx}=1,
\end{eqnarray}
which can also be written as $\lambda_{max}(\bm{A},\bm{B})=\max_{\bm{x}\ne \bm{0}}\frac{(\bm{a}^T\bm{x})^2}{\bm{x}^T\bm{Bx}}$. Since we are primarily interested in the maximizer of (\ref{Eq:gevrankone}), we can rewrite it as 
\begin{equation}\label{Eq:gevmin}
\min_{\bm{x}\ne\bm{0}}\frac{\bm{x}^T\bm{Bx}}{(\bm{a}^T\bm{x})^2}\qquad\equiv\qquad\min\{\bm{x}^T\bm{Bx}\,:\,\bm{a}^T\bm{x}=1\}.
\end{equation}
The advantage of the formulation in (\ref{Eq:gevmin}) will become clear when we consider its sparse version, i.e., after introducing the constraint $\{\bm{x}:\Vert\bm{x}\Vert_0\le k\}$ in (\ref{Eq:gevmin}). Clearly, introducing the sparsity constraint makes the problem intractable. However, introducing an $\ell_1$-norm relaxation in this formulation gives rise to a \emph{convex program}, 
\begin{equation}\label{Eq:sparsefda}
\min\{\bm{x}^T\bm{Bx}\,:\,\bm{a}^T\bm{x}=1,\,\Vert\bm{x}\Vert_1\le k\},
\end{equation}
more specifically a quadratic program (QP), and the corresponding penalized version is given by
\begin{equation}
\min\{\bm{x}^T\bm{Bx}+\nu\Vert\bm{x}\Vert_1\,:\,\bm{a}^T\bm{x}=1\},
\end{equation}
where $\nu>0$ is the regularization parameter.
\par Note that a transformation similar to the one leading to (\ref{Eq:gevmin}) can be performed for the GEV problem with any, general $\bm{A}\in\mathbb{S}^n$, i.e., writing the GEV problem as a minimization problem, 
\begin{eqnarray}\label{Eq:generalgevmin}
\min_{\bm{x}}&&\bm{x}^T\bm{Bx}\nonumber\\
\text{s.t.}&&\bm{x}^T\bm{Ax}=1.
\end{eqnarray}
This formulation, however, is not useful to simplify solving a GEV problem in general. Indeed, consider the sparse version of the problem in (\ref{Eq:generalgevmin}) with the sparsity constraint $\{\bm{x}:\Vert\bm{x}\Vert_0\le k\}$ relaxed to $\{\bm{x}:\Vert\bm{x}\Vert_1\le k\}$. Because of the quadratic equality constraint, the resulting program is non-convex for any $\bm{A}$. Suppose that the constraint set $\{\bm{x}:\bm{x}^T\bm{Ax}=1\}$ is relaxed to $\{\bm{x}:\bm{x}^T\bm{Ax}\le 1\}$. If $\bm{A}\in\mathbb{S}^n\backslash\mathbb{S}^n_+$, the program is still non-convex as the constraint defines a non-convex set. If $\bm{A}\in\mathbb{S}^n_+$, then the optimum occurs at $\bm{x}=\bm{0}$. Therefore, the minimization formulation of the GEV problem in (\ref{Eq:generalgevmin}) is not useful, unlike the case where $\bm{A}\in\mathbb{S}^n_+$ and $\text{rank}(\bm{A})=1$.
\par Based on the discussion so far, it is clear that the sparse FDA problem can be solved as a convex QP, 
which is significantly more efficient than, e.g., an SDP relaxation as in (\ref{Eq:Sparse-sdp-GEV}) for sparse PCA or sparse CCA.
Suppose that one would like to use a better approximation to $\Vert\bm{x}\Vert_0$ than $\Vert\bm{x}\Vert_1$, for sparse FDA. Using the approximation we considered in this work, (\ref{Eq:sparsefda}) reduces to
\begin{eqnarray}\label{Eq:fdastudent}
\min_{\bm{x}}&&\bm{x}^T\bm{Bx}+\nu_\varepsilon\sum^n_{i=1}\log(\varepsilon+|x_i|)\nonumber\\
\text{s.t.}&&\bm{a}^T\bm{x}=1,
\end{eqnarray}
where $\nu_\varepsilon:=\nu/\log(1+\varepsilon^{-1})$. Applying the MM method to the above program results in the following iterative scheme,
\begin{eqnarray}\label{Eq:fdaiterative}
\bm{x}^{(l+1)}=\arg\min_{\bm{x}}&&\bm{x}^T\bm{Bx}+\nu_\varepsilon\sum^n_{i=1}\frac{|x_i|}{|x^{(l)}_i|+\varepsilon}\nonumber\\
\text{s.t.}&&\bm{a}^T\bm{x}= 1,
\end{eqnarray}
which is a sequence of QPs unlike Algorithm~\ref{alg1}, which is a sequence of QCQPs. The nice structure of $\bm{A}$ makes the corresponding sparse GEV problem computationally efficient. Therefore, one should solve the sparse FDA problem by using (\ref{Eq:sparsefda}) or (\ref{Eq:fdaiterative}) instead of using the convex SDP in (\ref{Eq:Sparse-sdp-GEV}) or Algorithm~\ref{alg1}.
\par \citet[Section 3.3]{Suykens-02} and \citet[Proposition 1]{Mika-01} have shown connections between the FDA formulation in (\ref{Eq:gevmin}) with $\bm{a}=\bm{\mu}_1-\bm{\mu}_2$ and $\bm{B}=\bm{\Sigma}_1+\bm{\Sigma}_2$ (see paragraph below (\ref{Eq:FDA}) for details) and least-squares support vector machines (classifiers that minimize the squared loss, see \citet[Chapter 3]{Suykens-02}).
Therefore, sparse FDA is equivalent to feature selection with least-squares support vector machines. In other words, (\ref{Eq:sparsefda}) is equivalent to LASSO, while the formulation in (\ref{Eq:fdastudent}) is similar to the one considered in \cite{Weston-02}. Since these are well studied problems, we do not pursue further showing the numerical performance of sparse FDA.

\section{Conclusion and Discussion}
We study the problem of finding sparse eigenvectors for generalized eigenvalue problems. After proposing a non-convex but tight approximation to the cardinality constraint, we formulate the resulting optimization problem as a d.c. program and derive an iterative algorithm, based on the majorization-minimization method. This results in solving a sequence of quadratically constrained quadratic programs, an algorithm which exhibits global convergence behavior, as we show. We also derive sparse PCA (DC-PCA) and sparse CCA (DC-CCA) algorithms as special cases of our proposed algorithm. Empirical results demonstrate the performance of the proposed algorithm for sparse PCA and sparse CCA applications. In the case of sparse PCA, we experimentally demonstrate on both benchmark and real-life datasets of varying dimensionality that the proposed algorithm (DC-PCA) explains more variance with sparser features than SPCA~\citep{Zou-06} while performing similarly to DSPCA~\citep{Aspremont-07} and GSPCA~\citep{Moghaddam-06a} at better computational speed (lower CPU time). On the other hand, DC-PCA has performance and scalability similar to that of the state-of-the-art GPower$_{\ell_0}$ algorithm.
We also illustrate the practical relevance of the sparse CCA algorithm in two applications: cross-language document retrieval and vocabulary selection for music information retrieval.
\par The proposed algorithm does not allow to set the regularization parameter a priori, to guarantee a given sparsity level. This is similar for SPCA and GPower$_{\ell_0}$. SDP-based relaxation methods, on the other hand (e.g., DSPCA in the context of sparse PCA) are better suited to achieve a given sparsity level in one shot, by incorporating an explicit constraint on the sparsity of the solution (although, eventually, through relaxation, an approximation of the original problem is solved). Since the algorithm we propose solves a LASSO problem in each step but with a quadratic constraint, one could explore using a modified version of path following techniques like least angle regression~\citep{Efron-04} to learn the entire regularization path.
%
\acks{Bharath Sriperumbudur thanks Suvrit Sra for constructive discussions while the former was an intern at the Max Planck Institute for Biological Cybernetics, T\"{u}bingen. The authors wish to acknowledge support from the National Science Foundation (grant DMS-MSPA 0625409), the Fair Isaac Corporation and the University of California MICRO program. 
}

\appendix
\section*{Appendix A. Derivation of the SDP relaxation in (\ref{Eq:Sparse-sdp-GEV})}\label{appendix-a}
The idea in deriving the SDP relaxation in (\ref{Eq:Sparse-sdp-GEV}) is to start with the approximate program in (\ref{Eq:approx}) and then derive its bi-dual (dual of the dual). Though (\ref{Eq:Sparse-sdp-GEV}) is not a canonical convex program, its Lagrangian dual is always convex. Therefore, obtaining the dual program of this dual provides a convex approximation to (\ref{Eq:Sparse-sdp-GEV}), which is what we derive below.
\par Consider the $\ell_1$-norm relaxed sparse GEV problem in (\ref{Eq:approx}), which we reproduce here for convenience.
\begin{eqnarray}\label{Eq:appendix-L1}
\max_{\bm{x}}&&\bm{x}^T\bm{A}\bm{x}\nonumber\\
\text{s.t.}&&\bm{x}^T\bm{B}\bm{x}\le 1,\, \Vert\bm{x}\Vert_1\le k.
\end{eqnarray}
The above problem can be re-written as
\begin{eqnarray}
\max_{\bm{x},\,\bm{y}}&&\bm{x}^T\bm{A}\bm{x}\nonumber\\
\text{s.t.}&&\bm{x}^T\bm{B}\bm{x}\le 1,\,-\bm{y}\preceq \bm{x}\preceq\bm{y}\nonumber\\
&& \bm{y}^T\bm{1}\le k.
\end{eqnarray}
The corresponding Lagrangian dual problem is given by \begin{equation}
\min_{\substack{\beta\ge 0,\mu\ge 0\\ \bm{u}\succeq \bm{0}, \bm{s}\succeq \bm{0}}}\,\,\max_{\substack{\bm{x},\,\bm{y}}}\,\,L(\bm{x},\bm{y},\beta,\mu,\bm{u},\bm{s}),\nonumber\end{equation} where
\begin{equation}\label{Eq:Lagrangian-dual}
L(\bm{x},\bm{y},\beta,\mu,\bm{u},\bm{s})=\bm{x}^T\bm{Ax}-\mu(\bm{x}^T\bm{Bx}-1)-\beta(\bm{y}^T\bm{1}-k)-\bm{u}^T(\bm{x}-\bm{y})+\bm{s}^T(\bm{x}+\bm{y}).
\end{equation}
Let us first maximize $L$ over $\bm{x}$. By Lemma 3.6 of \cite{Lemarechal-99}, the necessary and sufficient condition for $Q(\bm{x})=\bm{x}^T(\bm{A}-\mu\bm{B})\bm{x}+\bm{x}^T(\bm{s}-\bm{u})$ to have a finite upper bound over $\mathbb{R}^n$ is $\mu\bm{B}-\bm{A}\succeq 0$ and $\bm{s}-\bm{u}\in\mathcal{R}(\mu\bm{B}-\bm{A})$. Differentiating $L$ w.r.t. $\bm{x}$ yields $\bm{x}=\frac{1}{2}(\mu\bm{B}-\bm{A})^\dagger(\bm{s}-\bm{u})$. Similarly, while maximizing $L$ w.r.t. $\bm{y}$, the necessary and sufficient condition for $R(\bm{y})=\bm{y}^T(\bm{s}+\bm{u}-\beta\bm{1})$ to have a finite upper bound over $\mathbb{R}^n$ is $\bm{s}+\bm{u}=\beta\bm{1}$. Therefore, the dual program can be written as
\begin{eqnarray}
\min_{\bm{u},\bm{s},\beta,\mu}&&\frac{1}{4}(\bm{u}-\bm{s})^T(\mu\bm{B}-\bm{A})^\dagger(\bm{u}-\bm{s})+\beta k+\mu\nonumber\\
\text{s.t.}&& \mu\bm{B}-\bm{A}\succeq 0,\,\bm{u}-\bm{s}\in\mathcal{R}(\mu\bm{B}-\bm{A})\nonumber\\
&& \bm{s}+\bm{u}=\beta\bm{1},\,\beta\ge 0,\,\mu\ge 0,\,\bm{u}\succeq\bm{0},\,\bm{s}\succeq\bm{0},
\end{eqnarray}
which is equivalent to
\begin{eqnarray}
\min_{\bm{r},\beta,\mu}&&\frac{1}{4}\bm{r}^T(\mu\bm{B}-\bm{A})^\dagger\bm{r}+\beta k+\mu\nonumber\\
\text{s.t.}&& \mu\bm{B}-\bm{A}\succeq 0,\,\bm{r}\in\mathcal{R}(\mu\bm{B}-\bm{A})\nonumber\\
&& -\beta\bm{1}\preceq\bm{r}\preceq\beta\bm{1},\,\beta\ge 0,\,\mu\ge 0.
\end{eqnarray}
By invoking the Schur's complement lemma, the dual can be written as
\begin{eqnarray}\label{Eq:appendix-dual}
\min_{\bm{r},t,\beta,\mu}&&t+\beta k+\mu\nonumber\\
\text{s.t.}&& -\beta\bm{1}\preceq\bm{r}\preceq\beta\bm{1},\,\beta\ge 0,\,\mu\ge 0\nonumber\\
&&\left(\begin{array}{cc}
\mu\bm{B}-\bm{A} & -\frac{1}{2}\bm{r} \\
-\frac{1}{2}\bm{r}^T & t \end{array}\right)\succeq 0.
\end{eqnarray}
The bi-dual associated with (\ref{Eq:appendix-L1}) is obtained by computing the dual of (\ref{Eq:appendix-dual}) given by
\begin{equation}
\max_{\substack{\phi\in\mathbb{R},\alpha\ge 0,\theta\ge 0\\ \bm{\tau}\succeq\bm{0},\bm{\eta}\succeq\bm{0},\bm{x}\succeq\bm{0}\\ \bm{X}\succeq 0}}\,\,\min_{\substack{\bm{r}\succeq \bm{0},t\in\mathbb{R}\\\beta\ge 0,\mu\ge 0}}\,\,\tilde{L}(\bm{r},t,\beta,\mu,\phi,\alpha,\theta,\bm{\tau},\bm{X},\bm{x},\bm{\eta}).
\end{equation}
Here $\tilde{L}$ is the Lagrangian associated with (\ref{Eq:appendix-dual}), given by
\begin{eqnarray}
\tilde{L}(\bm{r},t,\beta,\mu,\phi,\alpha,\theta,\bm{\tau},\bm{X},\bm{x},\bm{\eta})&=&t+\beta k+\mu+\bm{\eta}^T(\bm{r}-\beta\bm{1})-\bm{\tau}^T(\bm{r}+\beta\bm{1})-\alpha\mu-\theta\beta\nonumber\\
&&-\text{tr}\left[\left(\begin{array}{cc}
\bm{X} & \bm{x} \\
\bm{x}^T & \phi \end{array}\right)\left(\begin{array}{cc}
\mu\bm{B}-\bm{A} & -\frac{1}{2}\bm{r} \\
-\frac{1}{2}\bm{r}^T & t \end{array}\right)\right]\nonumber\\
&=& \text{tr}(\bm{XA})+\mu(1-\alpha-\text{tr}(\bm{XB}))+t(1-\phi)\nonumber\\
&&+\beta(k-\bm{\eta}^T\bm{1}-\bm{\tau}^T\bm{1}-\theta)+\bm{r}^T(\bm{\eta}-\bm{\tau}+\bm{x}).
\end{eqnarray}
Minimizing the above Lagrangian results in
\begin{eqnarray}
\max_{\alpha,\theta,\bm{\tau},\bm{\eta},\bm{x},\bm{X}}&&\text{tr}(\bm{XA})\nonumber\\
\text{s.t.}&& \alpha+\text{tr}(\bm{XB})=1,\,\,\bm{x}+\bm{\eta}=\bm{\tau},\,(\bm{\eta}+\bm{\tau})^T\bm{1}+\theta=k\nonumber\\
&&\left(\begin{array}{cc}
\bm{X} & \bm{x} \\
\bm{x}^T & 1 \end{array}\right)\succeq 0,
\end{eqnarray}
which is equivalent to
\begin{eqnarray}
\max_{\bm{x},\bm{X}}&&\text{tr}(\bm{XA})\nonumber\\
\text{s.t.}&& \text{tr}(\bm{XB})\le 1,\Vert \bm{x}\Vert_1\le k\nonumber\\
&&\left(\begin{array}{cc}
\bm{X} & \bm{x} \\
\bm{x}^T & 1 \end{array}\right)\succeq 0,
\end{eqnarray}
as shown in (\ref{Eq:Sparse-sdp-GEV}).

\section*{Appendix B. Alternative derivation of (ALG)}\label{appendix-b}
(ALG) can be derived differently by starting with (\ref{Eq:tau-1}) and applying the linear majorization idea (see Example~\ref{Exm:dc}).
\par Consider the d.c. program in (\ref{Eq:tau-1}), which is of the form $\min_{\bm{x},\bm{y}}(u(\bm{x},\bm{y})-v(\bm{x},\bm{y}))$ where $u(\bm{x},\bm{y})=I_{\Omega}(\bm{x},\bm{y})+\tau\Vert\bm{x}\Vert^2_2$ and $v(\bm{x},\bm{y})=\bm{x}^T(\bm{A}+\tau\bm{I}_n)\bm{x}-\rho_\varepsilon\sum^n_{i=1}\log(y_i+\varepsilon)$ with $\Omega=\{(\bm{x},\bm{y}):\bm{x}^T\bm{Bx}\le 1,\,-\bm{y}\preceq\bm{x}\preceq\bm{y}\}$. Here $I_\Omega$ represents the indicator function of the convex set $\Omega$ given by
\begin{equation}
I_{\Omega}(\bm{x},\bm{y})=\left\{\begin{array}{c@{\quad\quad}l}
0,& (\bm{x},\bm{y})\in\Omega\\
\infty,& \text{otherwise}
\end{array}\right..\nonumber
\label{Eq:indicator}
\end{equation} 
It is easy to check that $u$ and $v$ are convex. Therefore, by (\ref{Eq:example}) in Example~\ref{Exm:dc}, the MM algorithm gives
\begin{eqnarray}\label{Eq:qcqp-0}
(\bm{x}^{(l+1)},\bm{y}^{(l+1)})=\arg\min_{\bm{x},\bm{y}}&&\tau\Vert\bm{x}\Vert^2_2-2\bm{x}^T(\bm{A}+\tau\bm{I}_n)\bm{x}^{(l)}+\rho_\varepsilon\sum^n_{i=1}\frac{y_i}{y^{(l)}_i+\varepsilon}\nonumber\\
\text{s.t.}&&\bm{x}^T\bm{Bx}\le 1,\,-\bm{y}\preceq\bm{x}\preceq\bm{y},
\end{eqnarray}
which is equivalent to (ALG).
\section*{Appendix C. Derivation of (ALG-S)}\label{appendix-c}
Suppose $\bm{A}\succeq 0$ and $\bm{B}=\bm{I}_n$. Since $\bm{A}\succeq 0$, $\tau$ can be chosen as zero. Using $\tau=0$ in (ALG), we have that $\bm{x}^{(l+1)}$ is the maximizer of the following program:
\begin{equation}\label{Eq:simpleiterative}
\max_{\bm{x}^T\bm{x}\le 1}\,\,\bm{x}^T\bm{Ax}^{(l)}-\frac{\rho_\varepsilon}{2}\left\Vert\bm{W}^{(l)}\bm{x}\right\Vert_1=
\max_{\bm{x}^T\bm{x}\le 1}\,\,\sum^n_{i=1}x_i(\bm{Ax}^{(l)})_i-\frac{\rho_\varepsilon}{2}w^{(l)}_i|x_i|.
\end{equation}
Consider the r.h.s. of (\ref{Eq:simpleiterative}). Since it is the maximization of a linear objective over a convex set, the unique optimum lies on the boundary of the convex set \cite[Theorem 32.1]{Rockafeller-70}. The Lagrangian associated with the program in the r.h.s. of (\ref{Eq:simpleiterative}) is given by
\begin{equation}\label{Eq:lagrange-c}
L(\bm{x},\lambda)=\sum^n_{i=1}x_i(\bm{Ax}^{(l)})_i-\frac{\rho_\varepsilon}{2}w^{(l)}_i|x_i|-\lambda\sum^n_{i=1}x^2_i,
\end{equation}
where $\lambda>0$. Differentiating $L$ w.r.t. $x_i$ and setting it to zero yields
\begin{equation}
x_i=\frac{(\bm{Ax}^{(l)})_i-\frac{\rho_\varepsilon}{2}w^{(l)}_i\text{sign}(x_i)}{2\lambda}.
\end{equation}
Therefore, we have
\begin{equation}
x_i=\left\{\begin{array}{cc}
\frac{(\bm{Ax}^{(l)})_i-\frac{\rho_\varepsilon}{2}w^{(l)}_i}{2\lambda}, & (\bm{Ax}^{(l)})_i\ge\frac{\rho_\varepsilon}{2}w^{(l)}_i\\
\frac{(\bm{Ax}^{(l)})_i+\frac{\rho_\varepsilon}{2}w^{(l)}_i}{2\lambda}, & (\bm{Ax}^{(l)})_i\le-\frac{\rho_\varepsilon}{2}w^{(l)}_i\\
0, & \text{otherwise}\end{array}\right.,
\end{equation}
which is equivalently written as 
\begin{equation}\label{Eq:x_i}
x_i=\frac{\left[\left|(\bm{Ax}^{(l)})_i\right|-\frac{\rho_\varepsilon}{2}w^{(l)}_i\right]_+\text{sign}((\bm{Ax}^{(l)})_i)}{2\lambda}.
\end{equation}
Since $\sum^n_{i=1}x^2_i=1$, substituting for $x_i$ as given in (\ref{Eq:x_i}) yields
\begin{equation}\label{Eq:x_i-final}
x_i=\frac{\left[\left|(\bm{Ax}^{(l)})_i\right|-\frac{\rho_\varepsilon}{2}w^{(l)}_i\right]_+\text{sign}((\bm{Ax}^{(l)})_i)}{\sqrt{\sum^n_{i=1}\left[\left|(\bm{Ax}^{(l)})_i\right|-\frac{\rho_\varepsilon}{2}w^{(l)}_i\right]^2_+}},\nonumber
\end{equation}
and therefore $\bm{x}^{(l+1)}$ in (ALG-S) follows.

\end{document}